\newtheorem{theorem}{Theorem}
\newtheorem{example}{Example}
\newtheorem{lemma}[theorem]{Lemma}
\newtheorem{proposition}[theorem]{Proposition}
\newtheorem{assumption}[theorem]{Assumption}
\title{Global Convergence of Two-Timescale Actor-Critic for Solving Linear Quadratic Regulator}
\author{
    Xuyang Chen\textsuperscript{\rm 1},
    Jingliang Duan\textsuperscript{\rm 2},
    Yingbin Liang\textsuperscript{\rm 3},
    Lin Zhao \textsuperscript{\rm 1}\footnote{Corresponding author}
}
\begin{document}

\maketitle

\begin{abstract}
The actor-critic (AC) reinforcement learning algorithms have been the powerhouse behind many challenging applications. Nevertheless, its convergence is fragile in general. To study its instability, existing works mostly consider the uncommon double-loop variant or basic models with finite state and action space. We investigate the more practical single-sample two-timescale AC for solving the canonical linear quadratic regulator (LQR) problem, where the actor and the critic update only once with a single sample in each iteration on an unbounded continuous state and action space. Existing analysis cannot conclude the convergence for such a challenging case. We develop a new analysis framework that allows establishing the global convergence to an $\epsilon$-optimal solution with at most an $\mathcal{O}(\epsilon^{-2.5})$ sample complexity. To our knowledge, this is the first finite-time convergence analysis for the single sample two-timescale AC for solving LQR with global optimality. The sample complexity improves those of other variants by orders, which sheds light on the practical wisdom of single sample algorithms. We also further validate our theoretical findings via comprehensive simulation comparisons.
\end{abstract}

\section{Introduction}

\begin{table*}[t]
    \centering
    \begin{tabular}{ c| c|c |c}
 \hline
 Reference& Structure & Sample Complexity & Optimality\\
 \hline
 \multirow{2}{*}{\citet{xu2020non}} & \multirow{2}{*}{Multi-sample} & $\mathcal{O}(\epsilon^{-2.5})$& Local  \\
 \cline{3-4}& & $\mathcal{O}(\epsilon^{-4})$& Global \\
 \hline
 \citet{wu2020finite} & Single-sample & $\mathcal{O}(\epsilon^{-2.5})$ & Local   \\
 \hline
This paper & Single-sample & $\mathcal{O}(\epsilon^{-2.5})$ & Global  \\
 \hline
 \end{tabular}
    \caption{Comparison with other two-timescale actor-critic algorithms}
    \label{table1}
\end{table*}

The actor-critic (AC) methods \cite{konda1999actor} are among the most commonly used reinforcement learning (RL) algorithms, which have achieved tremendous empirical  successes \cite{mnih2016asynchronous,silver2017mastering}. In AC methods, the actor refers to the policy and the critic characterizes the action-value function (Q-function) given the actor. In each iteration, the critic tries to approximate the Q-function by applying policy evaluation algorithms \cite{dann2014policy,sutton2018reinforcement}, while the actor typically follows policy gradient \cite{sutton1999policy,agarwal2021theory} updates according to the Q-function provided by the critic. Compared with other RL algorithms, AC methods combine the advantages of both policy-based methods such as REINFORCE \cite{williams1992simple} and value-based methods such as temporal difference (TD) learning \cite{sutton1988learning,bhandari2018finite} and Q-learning \cite{watkins1992q}. Therefore, AC methods can be naturally applied to the continuous control setting \cite{silver2014deterministic} and meanwhile enjoy the low variance of bootstrapping.

Despite its empirical success, theoretical guarantees of AC still lag behind. Most existing works focus exclusively on the double-loop setting, where the critic updates many steps in the inner loop, followed by an actor update in the outer loop \cite{yang2019provably,agarwal2021theory,wang2019neural,abbasi2019politex,bhandari2021linear,xu2020improving}.  This setting yields accurate estimation of the Q-function and consequently the policy gradient. Therefore, double-loop setting decouples the convergence analysis of the actor and the critic, which further allows analyzing AC as a gradient method with error \cite{sutton1999policy,kakade2002approximately,shalev2014understanding,ruder2016overview}.

A more favorable implementation of AC in practice is the single-loop two-timescale setting, where the actor and critic are updated simultaneously in each iteration with different-timescale stepsizes. Typically, the actor stepsize is smaller than that of the critic. To establish the finite-time convergence of two-timescale AC methods, most existing results either focus on the multi-sample methods \cite{xu2020non,qiu2021finite} or the finite discrete action space \cite{wu2020finite}.  The former allows the critic to collect multiple samples to accurately  estimate the Q-function given the actor, which are rarely implemented in practice. It essentially decouples actor and critic in a similar way to the double-loop setting. We study the more practical single-sample AC algorithm similar to the one considered in~\citet{wu2020finite}, where the critic updates only once using a single sample per policy evaluation step. However, the latter only attains a stationary point under the finite-action space setting (see \Cref{table1} for comparisons). We address the important yet more challenging question: {\em can the single-sample two-timescale AC find a global optimal policy on the general unbounded continuous state-action space?} 

To this end, we analyze the classic single-sample AC for solving the Linear Quadratic Regulator (LQR), a fundamental control task which is commonly employed as a testbed to explore the behavior and limits of RL algorithms under continuous state-action spaces \cite{fazel2018global,yang2019provably,tu2018least,krauth2019finite,duan2022optimization}. 
In the LQR case, the Q-function is a linear function of the quadratic form of state and action. In general, it is difficult to establish the convergence of AC with linear function approximation \cite{bhandari2018finite}. In the double-loop and multi-sample settings \cite{yang2019provably,krauth2019finite}, the Q-function of LQR can be estimated arbitrarily accurately, and the LQR cost is guaranteed to converge to the global optimum monotonically per-iteration. However, the single-sample algorithm generally does not have these nice properties. It is more challenging to control the error propagation between the actor and the critic updates over iterations and prove its global convergence.

We distinguish our work from other model-free RL algorithms for solving LQR in \Cref{table2}. The zeroth-order methods and the policy iteration method are included for completeness. In particular, we note that~\citet{zhou2022single} analyzed the finite-time convergence under a single-timescale stepsize and multi-sample setting. The analysis requires the strong assumption on the uniform boundedness of the critic parameters. In comparison, our analysis does not require this assumption and considers the more challenging single-sample setting.

Within the literature of two-timescale AC for solving general MDP problems (see \Cref{table1}), we note that the analysis of~\citet{wu2020finite} depends critically on the assumptions of finite action space, bounded reward function, and bounded feature functions. However, these fundamental assumptions do not hold in the LQR case, making its analysis more challenging.

\subsection{Main Contribution}
Our main contributions are summarized as follows:

$\bullet$ Our work contributes to the theoretical understanding of AC methods. We for the first time show that the classic single-sample two-timescale AC can provably find the $\epsilon$-accurate global optimum with a sample complexity of $\mathcal{O}(\epsilon^{-2.5})$, under the continuous state-action space with linear function approximation. This is the same complexity order as those in \citet{wu2020finite,xu2020non}, but the latter only attain the local optimum.

We also adds to the work of RL on continuous control tasks. It is novel that even without completely solving the policy evaluation sub-problem  and only updating the actor with a biased policy gradient, the two-timescale AC algorithm can still find the global optimal policy for LQR, under common assumptions.
Our work may serve as the first step towards understanding the limits of single-sample two-timescale AC on continuous control tasks.

Compared with the state-of-the-art work of double-loop AC for solving LQR~\cite{yang2019provably}, we show the practical wisdom single-sample AC enjoys a lower sample complexity than  $\mathcal{O}(\epsilon^{-5})$ of the latter. We also show the algorithm is much more sample-efficient empirically compared to a few classic works.

$\bullet$ Technical-wise, despite the non-convexity of the LQR problem, we still find the global optimal policy under the single-sample update by exploiting the gradient domination property \cite{polyak1963gradient,nesterov2006cubic,fazel2018global}. Existing popular analysis~\cite{fazel2018global,yang2019provably} relies on the contraction of the cost learning errors. This nevertheless does not hold in the single-sample case. We alternatively establish the global convergence by showing the natural gradient of the objective function converges to zero and then using the gradient domination. Our work provides a more general proof framework for finding the optimal policy of LQR using various RL algorithms.

\begin{table*}[t]
\centering
\begin{tabular}{c|c|cc}
\hline
Reference & Algorithm & \multicolumn{2}{c}{Structure}                      \\ \hline
\citet{fazel2018global} & Zeroth-order & \multicolumn{2}{c}{\multirow{3}{*}{Double-loop}}     \\ \cline{1-2}
\citet{malik2019derivative} & Zeroth-order & \multicolumn{2}{c}{}                       \\ \cline{1-2}
\citet{yang2019provably} & Actor-Critic & \multicolumn{2}{c}{}                       \\ \hline
\citet{krauth2019finite} & Policy Iteration & \multicolumn{1}{c|}{\multirow{3}{*}{Single-loop}} & Multi-sample \\ \cline{1-2} \cline{4-4} 
\citet{zhou2022single} & Actor-Critic & \multicolumn{1}{c|}{}                   & Multi-sample (Single-timescale) \\ \cline{1-2} \cline{4-4} 
This paper & Actor-Critic & \multicolumn{1}{c|}{}                   & Single-sample (Two-timescale) \\ \hline
\end{tabular}
\caption{Comparison with other model-free RL algorithms for solving LQR.}
    \label{table2}
\end{table*}

\subsection{Related Work}
Due to the extensive studies on AC methods, we hereby review only those works that are mostly relevant to our study.

\textbf{Actor-Critic methods.} The AC algorithm was proposed in \citet{witten1977adaptive,sutton1984temporal,konda1999actor}.~\citet{kakade2001natural} extended it to the natural AC algorithm. The asymptotic convergence of AC algorithms has been well established in \citet{kakade2001natural,bhatnagar2009natural,castro2010convergent,zhang2020provably}. Many recent works focused on the finite-time convergence of AC methods. Under the double-loop setting, \citet{yang2019provably} established the global convergence of AC methods for solving LQR. \citet{wang2019neural} studied the global convergence of AC methods with both the actor and the critic being parameterized by neural networks. \citet{kumar2019sample} studied the finite-time local convergence of a few AC variants with linear function approximation, where the number of inner loop iterations 
grows linearly with the outer loop counting number. 

Under the two-timescale AC setting, \citet{khodadadian2022finite,hu2021actor} studied its finite-time convergence in tabular (finite state-action) case. For two-timescale AC with linear function approximation (see \Cref{table1} for a summary), \citet{wu2020finite} established the finite-time local convergence to a stationary point at a sample complexity of $\mathcal{O}(\epsilon^{-2.5})$. \citet{xu2020non} studied both local convergence and global convergence for two-timescale (natural) AC, with $\mathcal{O}(\epsilon^{-2.5})$ and $\mathcal{O}(\epsilon^{-4})$ sample complexity, respectively, under the discounted accumulated reward. The algorithm collects multiple samples to update the critic. 

Under the single-timescale setting, \citet{fu2020single} considered the regularized least-square temporal difference (LSTD) update for critic and established the finite-time convergence for both linear function approximation and nonlinear function approximation using neural networks.


\textbf{RL algorithms for LQR.} RL algorithms in the context of LQR have seen increased interest in the recent years. These works can be mainly divided into two categories: model-based methods \cite{dean2020sample,mania2019certainty,cohen2019learning,dean2018regret} and model-free methods. Our main interest lies in the model-free methods. Notably, \citet{fazel2018global} established the first global convergence result for LQR under the policy gradient method using derivative-free (one-point gradient estimator based zeroth-order) optimization at a sample complexity of $\mathcal{O}(\epsilon^{-4})$. \citet{malik2019derivative} employed two-point gradient estimator based zeroth-order optimization methods for solving LQR and improved the sample complexity to $\mathcal{O}(\epsilon^{-1})$. \citet{tu2019gap} characterized the sample complexity gap between model-based and model-free methods from an asymptotic viewpoint where their model-free algorithm is based on REINFORCE. 

Apart from policy gradient methods, \citet{tu2018least} studied the LSTD learning for LQR and derived the sample complexity to estimate the value function for a fixed policy. Subsequently, \citet{krauth2019finite} established the convergence and sample complexity of the LSTD policy iteration method under the LQR setting. On the subject of adopting AC to solve LQR, \citet{yang2019provably} provided the first finite-time analysis with convergence guarantee and sample complexity under the double-loop setting. For the more practical yet challenging single-sample two-timescale AC, there is no such theoretical guarantee so far, which is the focus of this paper.

\textbf{Notation.} For two sequences $\{ x_n \}$ and $\{ y_n \}$, we write $x_n=\mathcal{O}(y_n)$ if there exists an constant $C$ such that $x_n\leq Cy_n$. We use $\Vert \omega \Vert$ to denote the $\ell_2$-norm of a vector $\omega$ and use $\Vert A \Vert$ to denote the spectral norm of a matrix $A$. We also use $\Vert A\Vert_F$ to denote the Frobenius norm of a matrix $A$. We use $\sigma_{\min}(A)$ and $\sigma_{\max}(A)$ to denote the minimum and maximum singular values of a matrix $A$ respectively. We also use $\text{Tr}(A)$ to denote the trace of a square matrix $A$. For any symmetric matrix $M\in \mathbb{R}^{n\times n}$, let $\text{svec}(M)\in\mathbb{R}^{n(n+1)/2}$ denote the vectorization of the upper triangular part of $M$ such that $\Vert M \Vert^2_F = \langle \text{svec}(M),\text{svec}(M) \rangle $. Besides, let $\text{smat}(\cdot)$ denote the inverse of $\text{svec}(\cdot)$ so that $\text{smat}(\text{svec}(M))=M$. We denote by $A\otimes_s B$ the symmetric Kronecker product of two matrices $A$ and $B$.

\section{Preliminaries}
In this section, we introduce the AC algorithm and provide the theoretical background of LQR.
\subsection{Actor-Critic Algorithms}
Reinforcement learning problems can be formulated as a discrete-time Markov Decision Process (MDP), which is defined by $(\mathcal{X},\mathcal{U},\mathcal{P},c)$. Here $\mathcal{X}$ and $\mathcal{U}$ denote the state and the action space, respectively. At each time step $t$, the agent selects an action $u_t \in \mathcal{U}$ according to its current state $x_t\in \mathcal{X}$. In return, the agent will transit into the next state $x_{t+1}$ and receive a running cost $c(x_t,u_t)$. This transition kernel is defined by $\mathcal{P}$, which maps a state-action pair $(x_t, u_t)$ to the probability distribution over $x_{t+1}$. The agent's behavior is defined by a policy $\pi_{\theta}(u|x)$ parameterized by $\theta$, which maps a given state to a probability distribution over actions. In the following, we will use $\rho_{\theta}$ to denote the stationary state distribution induced by the policy $\pi_{\theta}$. 

The goal of the average RL is to learn a policy that minimizes the infinite-horizon time-average cost \cite{sutton1999policy,yang2019provably}, which is given by
\begin{align}\label{eq2.1.1}
    J(\theta):= \lim\limits_{T\to\infty}\mathbb{E}_{\theta}\frac{\sum_{t=0}^Tc(x_t,u_t)}{T}=\mathop{\mathbb{E}}_{x\sim \rho_{\theta},u\sim\pi_{\theta}}[c(x,u)],
\end{align}
where $\mathbb{E}_{\theta}$ denotes the expected value of a random variable whose state-action pair $(x_t,u_t)$ is obtained from policy $\pi_{\theta}$. Under this setting, the state-action value of policy $\pi_{\theta}$ can be calculated as 
\begin{align}\label{eq2.1.2}
    Q_{\theta}(x,u)=\sum\limits_{t=0}^{\infty}\mathbb{E}_{\theta}[c(x_t,u_t)-J(\theta)|x_0=x,u_0=u].
\end{align}

The typical AC consists of two alternate processes: (1) critic update, which estimates the Q-function $Q_{\theta}(x,u)$ of current policy $\pi_{\theta}$ using temporal difference (TD) learning \cite{sutton2018reinforcement}, and (2) actor update, which improves the policy to reduce the cost function $J(\theta)$ via gradient descent. By the policy gradient theorem \cite{sutton1999policy}, the gradient of $J(\theta)$ with respect to parameter $\theta$ is characterized by
\begin{align*}
    \nabla_{\theta}J(\theta)=\mathbb{E}_{x\sim\rho_{\theta},u\sim\pi_{\theta}}[\nabla_{\theta}\log\pi_{\theta}(u|x)\cdot Q_{\theta}(x,u)].
\end{align*}

One can also choose to update the policy using the natural policy gradient, which is the basic idea behind natural AC algorithms \cite{kakade2001natural}. The natural policy gradient is given by 
\begin{equation}
\label{eq.natural_pg}
    \nabla_{\theta}^N J(\theta)=F(\theta)^{\dagger}\nabla_{\theta}J(\theta).
\end{equation}
where 
\begin{align*}
    F(\theta)=\mathbb{E}_{x\sim\rho_{\theta},u\sim\pi_{\theta}}[\nabla_{\theta}\log\pi_{\theta}(u|x)(\nabla_{\theta}\log\pi_{\theta}(u|x))^\top]
\end{align*}
is the Fisher information matrix and $F(\theta)^{\dagger}$ denotes its Moore Penrose inverse.
\subsection{The Linear Quadratic Regulator Problem}\label{sec2.2}
As a canonical optimal control problem, the linear quadratic regulator (LQR) has become a convenient testbed for the optimization landscape analysis of various RL methods. In this paper, we aim to analyze the convergence performance of the AC algorithm applied to LQR. In particular, we consider a stochastic version of LQR (called noisy LQR), where the system dynamics and the running cost are  specified by
\begin{subequations}
\begin{align} 
\label{eq.state_dynamic}
x_{t+1}&=Ax_t+Bu_t+\epsilon_t,
 \\
 c(x,u)&=x^\top Qx+u^\top Ru.
\label{eq.cost_function}
\end{align} 
\end{subequations}
Here $x_t\in\mathbb{R}^d$ and $u_t\in\mathbb{R}^k$, $A\in \mathbb{R}^{d\times d}$ and $B\in \mathbb{R}^{d\times k}$ are system matrices, $Q\in\mathbb{S}^{d\times d}$ and $R\in\mathbb{S}^{k\times k}$ are performance matrices, and $\epsilon_t\sim \mathcal{N}(0,D_0)$ are i.i.d Gaussian random variables with $D_0>0$.


The goal of the noisy LQR problem is to find an action sequence $\{ u_t\}$ that minimizes the following infinite-horizon time-average cost
\begin{align*}
    \mathop{\text{minimize}}\limits_{\{ u_t\}}\quad &J(\{ u_t\}):=\lim\limits_{T\to\infty}\mathbb{E}[\frac{1}{T}\sum\limits_{t=1}^Tx_t^\top Qx_t+u_t^\top Ru_t]\\
    \text{subject to}\quad & \eqref{eq.state_dynamic}.
\end{align*}
From the optimal control theory \cite{anderson2007optimal,bertsekas2011approximate,bertsekas2019reinforcement}, the optimal policy is given by a linear feedback of the state
\begin{align}\label{eq2.2.1}
    u_t=-K^\ast x_t,
\end{align}
where $K^\ast \in\mathbb{R}^{k\times d}$ can be calculated as 
\begin{align*}
    K^\ast=(B^\top P^\ast B)^{-1}B^\top P^\ast A
\end{align*}
with $P^\ast$ being the unique solution to the following Algebraic Riccati Equation (ARE) \cite{anderson2007optimal}
\begin{align*}
    P^\ast=Q+A^\top P^\ast A-A^\top P^\ast B(B^\top P^\ast B+R)^{-1}B^\top P^\ast A.
\end{align*}

\subsection{Actor-critic for LQR}

Although the optimal solution of LQR can be easily found by solving the corresponding ARE, its solution relies on the complete model knowledge. In this paper, we pursue finding the optimal policy in a {\em model-free} way by using the AC method, without knowing or estimating $A,B,Q,R$.

 Based on the structure of the optimal policy in \eqref{eq2.2.1}, we parameterize the policy as
\begin{align}\label{policy}
\{\pi_K(\cdot|x)=\mathcal{N}(-Kx,\sigma^2I_k),K\in \mathbb{R}^{k\times d} \},
\end{align}
where $K$ is the policy parameter to be solved and $\sigma>0$ is the standard deviation of the exploration noise. In other words, given a state $x_t$, the agent will take an action $u_t$ according to $u_t=-Kx_t+\sigma \zeta_t$, where $\zeta_t\sim \mathcal{N}(0,I_k)$.
As a consequence, the closed-loop form of system \eqref{eq.state_dynamic} under policy \eqref{policy} is given by
\begin{align}\label{eq:6}
x_{t+1}=(A-BK)x_t+\xi_t,
\end{align}
where
\begin{align*}
\xi_t=\epsilon_t+\sigma B\zeta_t\sim \mathcal{N}(0,D_{\sigma})
\end{align*}
with $D_{\sigma}=D_0+\sigma^2BB^\top$. 

The set $\mathbb{K}$ of all stabilizing policies is given by
\begin{equation}
\label{eq:stabilizing-K}
\mathbb{K}:=\left\{K\in \mathbb{R}^{k\times d}:\rho(A-BK)<1\right\},
\end{equation}
where $\rho(\cdot)$ denotes the spectral radius. Before adopting AC to find the optimal policy $\pi_{K^\ast}$ that minimizes the corresponding cost $J(K)$ defined in \eqref{eq2.1.1}, we first need to establish the analytical formula of the average cost $J(K)$, the Q-function $Q_K(x, u)$, and the policy gradient $\nabla_K J(K)$ for a given stabilizing policy.  

It is well known that if $K\in\mathbb{K}$, the Markov chain in \eqref{eq:6} has a stationary distribution $\mathcal{N}(0,D_K)$, where $D_K$ satisfies the following Lyapunov equation
\begin{align}\label{lyap1}
    D_K=D_{\sigma}+(A-BK)D_K(A-BK)^\top.
\end{align}
Similarly, we define $P_K$ as the unique positive definite solution to
\begin{align}\label{lyap3}
    P_K=Q+K^\top RK+(A-BK)^\top P_K (A-BK).
\end{align}

Based on $D_K$ and $P_K$, the following lemma characterizes the explicit expression of $J(K)$ and its gradient $\nabla_K J(K)$. 
\begin{lemma}\cite{yang2019provably}\label{pro1}
For any $K\in \mathbb{K}$, the cost function $J(K)$ and its gradient $\nabla_K J(K)$ take the following forms
\begin{subequations}
\begin{align} 
\label{eq.cost_formula}
J(K)&= {\rm Tr}(P_KD_{\sigma})+\sigma^2 {\rm Tr}(R),\\
\nabla_K J(K)&=2E_KD_K,
\label{eq.gradient_formula}
\end{align} 
\end{subequations}
where $E_K:=(R+B^\top P_KB)K-B^\top P_KA$.
\end{lemma}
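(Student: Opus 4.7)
The plan is to handle the two claims separately: first derive the closed-form for $J(K)$, then compute its gradient by implicit differentiation of the Lyapunov equation \eqref{lyap3}.

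For the cost formula, I would start from the definition \eqref{eq2.1.1} and exploit the fact that under a stabilizing $K \in \mathbb{K}$ the state distribution converges to $\mathcal{N}(0, D_K)$ with $D_K$ solving \eqref{lyap1}, while the action along a trajectory is $u = -Kx + \sigma \zeta$ with $\zeta \sim \mathcal{N}(0, I_k)$ independent of $x$. Plugging into $c(x,u) = x^\top Q x + u^\top R u$ and taking expectations (the cross term vanishes by independence) gives $J(K) = \mathrm{Tr}((Q + K^\top R K) D_K) + \sigma^2 \mathrm{Tr}(R)$. To convert the first trace into $\mathrm{Tr}(P_K D_\sigma)$, I would substitute $Q + K^\top R K = P_K - (A-BK)^\top P_K (A-BK)$ from \eqref{lyap3} into that trace, and then use \eqref{lyap1} in the form $D_K = D_\sigma + (A-BK) D_K (A-BK)^\top$ together with the cyclic property of the trace; the two $(A-BK)^\top P_K (A-BK) D_K$ terms cancel and the cost formula \eqref{eq.cost_formula} drops out.

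For the gradient, I would differentiate \eqref{lyap3} directly with respect to $K$. Writing $L := A - BK$ and perturbing $K \mapsto K + \delta K$ (so $\delta L = -B\,\delta K$), the first-order change $\delta P$ satisfies the discrete Lyapunov-type equation
\begin{equation*}
\delta P \;=\; \delta K^\top E_K + E_K^\top \delta K + L^\top (\delta P) L,
\end{equation*}
after collecting all cross terms and identifying $R K - B^\top P_K (A-BK) = E_K$. Because $L$ is stable, this equation has the unique solution $\delta P = \sum_{t=0}^\infty (L^\top)^t (\delta K^\top E_K + E_K^\top \delta K) L^t$. The perturbation of the cost is then $\delta J = \mathrm{Tr}(\delta P \cdot D_\sigma)$, and here is where the argument closes elegantly: pushing $D_\sigma$ inside the sum and regrouping via the cyclic property gives $\mathrm{Tr}((\delta K^\top E_K + E_K^\top \delta K) \sum_{t=0}^\infty L^t D_\sigma (L^\top)^t)$, where the inner sum is exactly $D_K$ by \eqref{lyap1}. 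Simplifying yields $\delta J = 2\,\mathrm{Tr}(\delta K^\top E_K D_K) = \langle \delta K, 2 E_K D_K \rangle$, which identifies $\nabla_K J(K) = 2 E_K D_K$ as in \eqref{eq.gradient_formula}.

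The main technical obstacle will be the implicit-differentiation step: one has to track that $P_K$ depends on $K$ both through the explicit $K^\top R K$ term and through $L = A - BK$ inside $L^\top P_K L$, and correctly group the resulting three contributions into the symmetric form $\delta K^\top E_K + E_K^\top \delta K$. Once the Lyapunov-type equation for $\delta P$ is established, the recognition that its infinite-sum solution pairs naturally with the infinite-sum representation of $D_K$ makes the final identification of the gradient almost mechanical. Stability of $L$ (guaranteed by $K \in \mathbb{K}$ via \eqref{eq:stabilizing-K}) is what makes all these series convergent and the Lyapunov equations uniquely solvable, so I would note that assumption explicitly at the start.
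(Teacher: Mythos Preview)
Your proposal is correct; both formulas follow exactly as you outline. The route, however, differs from the paper's in a way worth noting.

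For the cost formula, the paper introduces the operators $\Gamma_K(S)=\sum_{t\ge 0}(A-BK)^tS[(A-BK)^t]^\top$ and $\Gamma_K^\top$, identifies $D_K=\Gamma_K(D_\sigma)$ and $P_K=\Gamma_K^\top(Q+K^\top RK)$, and passes from $\mathrm{Tr}((Q+K^\top RK)D_K)$ to $\mathrm{Tr}(P_KD_\sigma)$ via the duality $\mathrm{Tr}(S_1\Gamma_K(S_2))=\mathrm{Tr}(\Gamma_K^\top(S_1)S_2)$. Your direct substitution of the two Lyapunov equations and trace cyclicity accomplishes the same cancellation without naming the operators; the two arguments are essentially the same, with yours slightly more compact.

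For the gradient the approaches are genuinely dual. The paper differentiates the representation $J(K)=\mathrm{Tr}((Q+K^\top RK)D_K)$, i.e.\ it keeps $D_K$ as the $K$-dependent object and expands $\nabla_K\mathrm{Tr}(C_0 D_K)$ through an iterative unrolling of the Lyapunov recursion for $D_K$, eventually summing the resulting series to $P_K$. You instead differentiate $J(K)=\mathrm{Tr}(P_KD_\sigma)$, treating $P_K$ as the $K$-dependent object, derive the Sylvester-type equation for $\delta P$, and recognize its solution paired with $D_\sigma$ as producing $D_K$. Your route is arguably cleaner: the implicit differentiation yields the symmetric source term $\delta K^\top E_K+E_K^\top\delta K$ in one step and makes the appearance of $E_K$ transparent, whereas the paper's iterative expansion requires a limiting argument to kill the residual term $\mathrm{Tr}(C_n\Gamma_K(D_\sigma))$. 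Both rely on the same stability hypothesis $K\in\mathbb{K}$ to make the infinite sums converge.
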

It can be shown that the natural gradient of $J(K)$ can be calculated as \cite{fazel2018global,yang2019provably}
\begin{align}
\label{eq.natural_formula}
\nabla_{K}^N J(K)=\nabla_{K}J(K)D_K^{-1}=E_K.
\end{align}
Note that we omit the constant coefficient since it can be absorbed by the stepsize. 

The expression of $Q_K(x,u)$ will also play an important role in our analysis later on.
\begin{lemma}\cite{bradtke1994adaptive,yang2019provably} \label{pro2}
For any $K\in \mathbb{K}$, the Q-function $Q_K(x,u)$ takes the following form
\begin{equation}
\label{eq.Q-structure}
\begin{aligned}
    Q_K(x,u)= & (x^\top,u^\top)\Omega_K\begin{pmatrix}
    x\\u
    \end{pmatrix} \\
    &-\sigma^2\text{\rm Tr}(R+P_KBB^\top)- \text{\rm Tr}(P_KD_K),
\end{aligned}
\end{equation}
where 
\begin{align}\label{eq:2}
\Omega_K:=
\begin{bmatrix}
\Omega^{11}_{K} & \Omega^{12}_K \\ \Omega^{21}_K & \Omega^{22}_K
\end{bmatrix}:=
\begin{bmatrix}
Q+A^\top P_K A & A^\top P_KB \\ B^\top P_K A & R+B^\top P_KB
\end{bmatrix}.
\end{align}
\end{lemma}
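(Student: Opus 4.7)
The plan is to verify the stated quadratic-plus-constant form of $Q_K$ by combining the average-cost Bellman equation with a closed-form expression for the value function $V_K(x) := \mathbb{E}_{u\sim\pi_K(\cdot\mid x)}[Q_K(x,u)]$.

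First I would establish that $V_K(x) = x^\top P_K x - \mathrm{Tr}(P_K D_K)$. Substituting the ansatz $V_K(x) = x^\top P x + q$ into the Bellman equation
\begin{equation*}
V_K(x) = \mathbb{E}_{u\sim\pi_K(\cdot\mid x)}[c(x,u)] - J(K) + \mathbb{E}[V_K(x_1)\mid x_0 = x]
\end{equation*}
and using the closed-loop dynamics $x_1 = (A-BK)x + \xi_0$ with $\xi_0 \sim \mathcal{N}(0,D_\sigma)$, one computes $\mathbb{E}_{u\sim\pi_K(\cdot\mid x)}[c(x,u)] = x^\top(Q + K^\top R K)x + \sigma^2\mathrm{Tr}(R)$ and $\mathbb{E}[V_K(x_1)\mid x] = x^\top (A-BK)^\top P (A-BK) x + \mathrm{Tr}(P D_\sigma) + q$. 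Matching quadratic coefficients reproduces the Lyapunov equation defining $P_K$, so $P = P_K$. The residual scalar equation is automatically satisfied by Lemma~\ref{pro1}'s identity $J(K) = \mathrm{Tr}(P_K D_\sigma) + \sigma^2\mathrm{Tr}(R)$, leaving $q$ free; this freedom is pinned down by the normalization $\mathbb{E}_{x\sim\rho_K}[V_K(x)] = 0$, which follows from the centered-series definition of $Q_K$ in \eqref{eq2.1.2} (each summand is mean-zero under stationarity), yielding $q = -\mathrm{Tr}(P_K D_K)$.

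Next I would substitute this $V_K$ into the Bellman equation for $Q_K$ itself,
\begin{equation*}
Q_K(x,u) = c(x,u) - J(K) + \mathbb{E}_{\epsilon_0}[V_K(Ax + Bu + \epsilon_0)],
\end{equation*}
where the one-step transition is open-loop since $u$ is exogenously specified. Expanding the quadratic via $\mathbb{E}[\epsilon_0^\top M \epsilon_0] = \mathrm{Tr}(M D_0)$ gives
\begin{equation*}
Q_K(x,u) = x^\top Q x + u^\top R u + (Ax+Bu)^\top P_K (Ax+Bu) + \mathrm{Tr}(P_K D_0) - \mathrm{Tr}(P_K D_K) - J(K).
\end{equation*}
Collecting the quadratic terms into the block matrix $\Omega_K$ as defined in the statement, and simplifying the constant using $J(K) = \mathrm{Tr}(P_K D_\sigma) + \sigma^2\mathrm{Tr}(R)$ together with $D_\sigma = D_0 + \sigma^2 BB^\top$, the $\mathrm{Tr}(P_K D_0)$ contributions cancel and the remainder is exactly $-\sigma^2\mathrm{Tr}(R + P_K BB^\top) - \mathrm{Tr}(P_K D_K)$, matching the claimed formula.

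The main obstacle is pinning down the additive constant in $V_K$: the Bellman fixed-point equation has a one-dimensional gauge freedom, since any pair $(P_K, q)$ satisfies it for arbitrary $q$. Resolving this cleanly requires returning to the defining telescoping series of $Q_K$, which subtracts $J(K)$ at each step and is therefore mean-zero under the stationary distribution $\rho_K$, thereby fixing $q$. All other steps reduce to Gaussian moment computations and routine block-matrix algebra using the Lyapunov equations \eqref{lyap1} and \eqref{lyap3}.
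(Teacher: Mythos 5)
Your proposal is correct and takes essentially the same route as the paper's own proof: both first identify $V_K(x)=x^\top P_K x-\mathrm{Tr}(P_K D_K)$ (the paper by summing the defining series directly, you by matching coefficients of a quadratic ansatz in the Bellman equation --- an equivalent computation that reproduces the Lyapunov equation \eqref{lyap3}), both fix the additive constant via the normalization $\mathbb{E}_{x\sim\rho_K}[V_K(x)]=0$, and both then expand the one-step Bellman equation for $Q_K$ with a Gaussian moment calculation and the identity $D_\sigma=D_0+\sigma^2 BB^\top$. No gaps.
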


\section{Single-sample Natural Actor-Critic}
\label{sec:singlesample}
The expressions of $\nabla J(K)$ and $Q_K(x,u)$ in Lemmas \ref{pro1} and \ref{pro2} depend on $A$, $B$, $Q$, and $R$. For model-free learning, we establish a single-sample two-timescale AC algorithm for LQR in the following.

In view of the structure of the Q-function given in \eqref{eq.Q-structure}, we define the following feature functions,
\begin{align*}
\phi(x,u)=\text{svec}\left[\begin{pmatrix}
x\\u
\end{pmatrix}\begin{pmatrix}
x \\ u
\end{pmatrix}^\top \right].
\end{align*}
Then we can parameterize $Q_K(x,u)$ by the following linear function 
\begin{align*}
\hat{Q}_K(x,u;w)=\phi(x,u)^\top w + b.
\end{align*}
To drive $\hat{Q}_K(x,u;w)$ towards its true value  $Q_K(x,u)$ in a model-free way, the TD learning technique is applied to tune its parameters $\omega$:
\begin{equation}
\label{eq.TD_update}
\begin{aligned}
\omega_{t+1}&=\omega_{t} + \beta_t [(c_{t}-J(K)+\phi(x_{t+1},u_{t+1})^\top \omega_{t}\\
&\quad +b-\phi(x_{t},u_{t})^\top\omega_{t}-b)]\phi(x_{t},u_{t})\\
&=\omega_{t} + \beta_{t} [(c_{t}-J(K))\phi(x_{t},u_{t})\\
&\quad -\phi(x_{t},u_{t})(\phi(x_{t},u_{t})
-\phi(x_{t+1},u_{t+1}))^\top)\omega_{t}],
\end{aligned}
\end{equation}
where $\beta_t$ is the step size of the critic. 

To further simplify the expression, we denote $(x',u')$ as the subsequent state-action pair of $(x,u)$ and abbreviate $\mathbb{E}_{x\sim\rho_K,u\sim\pi_K(\cdot|x)}$ as $\mathbb{E}_{(x,u)}$. By taking the expectation of $\omega_{t+1}$  in \eqref{eq.TD_update} with respect to the stationary distribution, for any given $\omega_t$, the expected subsequent critic can be written as 
\begin{align}
\label{eq.expected_update}
\mathbb{E}[\omega_{t+1}|\omega_{t}]=\omega_{t}+\beta_t(b_K-A_K\omega_{t}),
\end{align}
where 
\begin{align}
A_K&=\mathbb{E}_{(x,u)}[\phi(x,u)(\phi(x,u)-\phi(x',u'))^\top)], \label{ak}\\
b_K&=\mathbb{E}_{(x,u)}[(c(x,u)-J(K))\phi(x,u)].\nonumber
\end{align}

Given a policy $\pi_K$, it is not hard to show that if the update in \eqref{eq.expected_update} has converged to some limiting point $\omega^\ast_K$, i.e., $\lim_{t\rightarrow \infty}\omega_t=\omega^\ast_K$, $\omega^\ast_K$ must be the solution of
\begin{align}\label{linear}
    A_K\omega=b_K.
\end{align}
We characterize the uniqueness and the explicit expression of $\omega^\ast_K$ in Proposition \ref{pro3}.
\begin{proposition}\label{pro3}
Suppose $K \in \mathbb{K}$. Then the matrix $A_K$ defined in \eqref{ak} is invertible such that the linear equation \eqref{linear} has an unique solution $\omega_K^\ast$, which is in the form of
\begin{align}
\label{eq.limiting_point}
    \omega^\ast_K ={\rm svec}(\Omega_K).
\end{align}
\end{proposition}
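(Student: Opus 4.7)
The plan is to split the claim into (i) existence, by checking that $\omega = \text{svec}(\Omega_K)$ satisfies $A_K \omega = b_K$, and (ii) uniqueness, by showing that $A_K$ is nonsingular. These two parts rely on quite different ingredients, so I would handle them separately.

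For part (i), the natural starting point is the average-cost Bellman equation that $Q_K$ satisfies by the definition in \eqref{eq2.1.2}, namely $Q_K(x,u) = c(x,u) - J(K) + \mathbb{E}[Q_K(x',u') \mid x,u]$, where $(x',u')$ is the successor pair under the closed-loop dynamics \eqref{eq:6} and $\pi_K$. Using Lemma \ref{pro2}, write $Q_K(x,u) = \phi(x,u)^\top \text{svec}(\Omega_K) + c_K$ with the scalar offset $c_K = -\sigma^2 \text{Tr}(R+P_KBB^\top) - \text{Tr}(P_KD_K)$. This offset appears on both sides of the Bellman equation and cancels, leaving $\bigl(\phi(x,u) - \mathbb{E}[\phi(x',u') \mid x,u]\bigr)^\top \text{svec}(\Omega_K) = c(x,u) - J(K)$. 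Multiplying by $\phi(x,u)$ and taking the expectation under the stationary distribution $\rho_K$ reproduces $A_K\, \text{svec}(\Omega_K) = b_K$ directly from the definitions in \eqref{ak}.

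For part (ii), suppose $A_K v = 0$ for some $v$. Taking the inner product with $v$ gives $\mathbb{E}[(v^\top \phi(x,u))^2] = \mathbb{E}[v^\top \phi(x,u)\cdot v^\top \phi(x',u')]$. Since $(x,u)$ and $(x',u')$ share the same marginal distribution under stationarity, Cauchy--Schwarz saturates only when $v^\top \phi(x,u) = v^\top \phi(x',u')$ almost surely. Equivalently, with $y = (x^\top, u^\top)^\top$ and $V = \text{smat}(v)$, the function $f(y) = y^\top V y$ is invariant along the Markov chain $\{y_t\}$. Because $K \in \mathbb{K}$ and $D_\sigma \succ 0$, this chain is ergodic, so $f$ must be a.s. constant. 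However, the stationary distribution of $y$ is a non-degenerate Gaussian---its covariance is positive definite because $D_K \succeq D_\sigma \succ 0$ and $u$ carries the independent exploration noise $\sigma \zeta$---and a quadratic form $y^\top V y$ in such a Gaussian is a.s. constant only when $V = 0$, i.e., $v = 0$. Hence $A_K$ has trivial kernel and is invertible, which forces $\text{svec}(\Omega_K)$ to be the unique solution of $A_K \omega = b_K$.

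The main obstacle is part (ii): for average-cost TD with linear function approximation, $A_K$ is not generically positive definite, so the standard contraction or coercivity arguments do not apply off the shelf. What rescues the argument here is the LQR-specific structure---the exploration noise ensures the stationary feature distribution has full ``quadratic'' rank, and the spectral condition $\rho(A-BK) < 1$ supplies ergodicity. Rigorously verifying the non-degeneracy of $\rho_K$ and carefully justifying the passage from ``Bellman-invariant'' to ``a.s. constant'' via ergodicity are the sub-steps that need the most care; part (i) is essentially an algebraic unpacking of the Bellman equation.
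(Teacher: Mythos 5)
Your proof is correct, and the existence half (checking that $\mathrm{svec}(\Omega_K)$ solves $A_K\omega=b_K$ via the average-cost Bellman equation, cancelling the scalar offset, multiplying by $\phi(x,u)$ and taking stationary expectations) is essentially identical to the paper's. Where you genuinely diverge is the invertibility of $A_K$. The paper computes $A_K$ in closed form: using the augmented chain $\vartheta'=L\vartheta+\varepsilon$ and the Gaussian fourth-moment identity $\mathbb{E}[g^\top Mg\,g^\top Ng]=2\,\mathrm{Tr}(MN)+\mathrm{Tr}(M)\mathrm{Tr}(N)$, it shows $\mathbb{E}[\phi(\vartheta)\phi(\vartheta)^\top]=2\tilde{D}_K\otimes_s\tilde{D}_K+\mathrm{svec}(\tilde{D}_K)\mathrm{svec}(\tilde{D}_K)^\top$ and similarly for the cross term, arriving at the factorization $A_K=2(\tilde{D}_K\otimes_s\tilde{D}_K)(I-L^\top\otimes_s L^\top)$, whose two factors are invertible since $\tilde D_K\succ0$ and $\rho(L)<1$. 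You instead run a kernel argument: $A_Kv=0$ forces $\mathbb{E}[(v^\top\phi(\vartheta))^2]=\mathbb{E}[v^\top\phi(\vartheta)\,v^\top\phi(\vartheta')]$, hence (more directly than via the Cauchy--Schwarz equality case, note that $\mathbb{E}[(v^\top\phi(\vartheta)-v^\top\phi(\vartheta'))^2]=0$) the quadratic form $\vartheta^\top V\vartheta$ is invariant along the chain, and ergodicity plus non-degeneracy of the stationary Gaussian kills $V$. Both are sound; your route is shorter, avoids the symmetric-Kronecker bookkeeping, and generalizes to other feature classes, but it is purely qualitative. The paper's explicit factorization is not incidental: it is reused verbatim in the proof of Proposition~\ref{p2} to extract the quantitative lower bound $\sigma_{\min}(A_{K_t})\ge\lambda$ that drives the critic's finite-time analysis, which your argument does not yield. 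One small point to tighten: your non-degeneracy claim is cleanest via $\tilde{D}_K\succeq\tilde{D}_0\succ0$, where positivity of $\tilde{D}_0$ follows from $(a-K^\top b)^\top D_0(a-K^\top b)+\sigma^2\|b\|^2>0$ for $(a,b)\neq0$.
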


Combining \eqref{eq.natural_formula}, \eqref{eq:2},  and \eqref{eq.limiting_point}, we can express the natural gradient of $J(K)$ using only 
$\omega^\ast_K$:
\begin{align*}
\nabla_{K}^N J(K)=\Omega^{22}_K K-\Omega^{21}_K = \text{smat}(\omega^\ast_K)^{22}K-\text{smat}(\omega^\ast_K)^{21}
.\end{align*}
This enables us to estimate the natural policy gradient using the critic parameters $\omega_{t+1}$, and then update the actor in a model-free manner
\begin{align}
\label{eq.policy_update}
K_{t+1}=K_{t}-\alpha_t\widehat{\nabla_{K_t}^N J(K_t)},
\end{align}
where $\alpha_t$ is the (actor) step size and $\widehat{\nabla_{K_t}^N J(K_t)}$ is the natural gradient estimation depending on $\omega_{t+1}$:
\begin{align}
\label{eq.gradient_estimation}
\widehat{\nabla_{K_t}^N J(K_t)}=\text{smat}(\omega_{t+1})^{22}K_{t}-\text{smat}(\omega_{t+1})^{21}.
\end{align}

With the critic update rule \eqref{eq.TD_update} and the actor update rule \eqref{eq.policy_update} in place, we are ready to present the single-sample two-timescale natural AC algorithm for LQR.
\begin{algorithm}[H]
\caption{Single-Sample Two-timescale Natural Actor-Critic for Linear Quadratic Regulator}\label{alg1}            
\begin{algorithmic}[1]
\STATE \textbf{Input} initialize actor parameter $K_0 \in \mathbb{K}$, critic parameter $\omega_0$, average cost $\eta_0$, step sizes $\alpha_t$, $\beta_t$, and $\gamma_t$.
\FOR{$t=0,1,2,\cdots,T-1$}
    \STATE Sample $x_t$ from the stationary distribution $\rho_{K_t}$. 
    \STATE Take action $u_t\sim \pi_{K_t}(\cdot| x_t)$ and receive $c_t=c(x_t,u_t)$ and the subsequent state $x'_t$.
    \STATE Obtain $u'_t\sim \pi_{K_t}(\cdot| x'_t)$.
    \STATE TD error calculation
    $$\delta_t = c_{t} - \eta_{t}+\phi(x'_{t},u'_{t})^\top \omega_{t}-\phi(x_{t},u_{t})^\top\omega_{t}$$
    \STATE Average cost estimate
    $$\eta_{t+1}=\Pi_{U}(\eta_{t}+\gamma_t(c_{t}-\eta_{t}))$$
    \STATE Critic estimate
    $$\omega_{t+1}=\Pi_{\bar{\omega}}(\omega_{t} + \beta_t \delta_t \phi(x_{t},u_{t}))$$
    \STATE Actor update $$K_{t+1}=K_{t}-\alpha_t(\text{smat}(\omega_{t+1})^{22}K_{t}-\text{smat}(\omega_{t+1})^{21})$$
\ENDFOR
\end{algorithmic} 
\end{algorithm}
We call this algorithm ``single-sample'' because we only use exactly one sample to update the critic and the actor at each step. Line 3 of Algorithm \ref{alg1} samples from the stationary distribution corresponding to policy $\pi_K$, which is common in analysis of the LQR problem \cite{yang2019provably}. Such a requirement is only made to simplify the theoretical analysis. As shown in \citet{tu2018least}, when $K \in \mathbb{K}$, the Markov chain in \eqref{eq:6} is geometrically $\beta$-mixing and thus mixes quickly. Therefore, in practice, one can run the Markov chain in \eqref{eq:6} for a sufficient time and sample from the last one. 

Since the update of the critic parameter in \eqref{eq.TD_update} requires the knowledge of the average cost $J(K)$,  Line 7 is to provide an estimate of the cost function $J(K)$. Besides, compared with \eqref{eq.TD_update}, we introduce a projection operator in Line 8 to keep the critic norm-bounded, which is necessary to stabilize the algorithm and attain convergence guarantee. Similar operation has been commonly adopted in other literature \cite{wu2020finite,yang2019provably,xu2020non}. 
\section{Main Theory}
In this section, we establish the global convergence and analyze the finite-sample performance of \Cref{alg1}. All the corresponding proofs are provided in \citet{chen2022global}. 

Before preceding, the following assumptions are required, which are standard in the theoretical analysis of AC methods \cite{wu2020finite,fu2020single,yang2019provably,zhou2022single}.
\begin{assumption}\label{a1}
There exists a constant $\bar{K}>0$ such that $\Vert K_t\Vert\leq \bar{K}$ for all $t$.
\end{assumption}
The above assumes the uniform boundedness of the actor parameter. One can also ensure this by adding a projection operator to the actor. In this paper, we follow the previous works \cite{konda1999actor,bhatnagar2009natural,karmakar2018two,barakat2022analysis} to explicitly assume that our iterations remain bounded. As shown in our proof, it is only made to guarantee the uniform boundedness of the feature functions, which is a standard assumption in the literature of AC methods with linear function approximation \cite{xu2020non,wu2020finite,fu2020single}.  
\begin{assumption}\label{a2}
There exists a constant $\rho\in (0,1)$ such that $\rho(A-BK_t)\leq \rho$ for all $t$.
\end{assumption}
Assumption \ref{a2} is made to ensure the stability of the closed loop systems induced in each iteration and thus ensure the existence of the stationary distribution corresponding to policy $\pi_{K_t}$. 
In the single-sample case, the estimation of the natural gradient of $J(K)$ is biased and the policy change is noisy. Therefore, it is difficult to obtain a theoretical guarantee for this condition. Nevertheless, we will present numerical examples to support this assumption. Moreover, the assumption for the existence of stationary distribution is common and has been widely used in \citet{zhou2022single,olshevsky2022small}.

Under these two assumptions, we can now prove the convergence of Algorithm \ref{alg1}. 

We first establish the finite-time convergence of the critic learning.

\begin{theorem}\label{t1}
Suppose that Assumptions \ref{a1} and \ref{a2} hold. Choose $\alpha_t=\frac{c_{\alpha}}{(1+t)^\delta}, \beta_t=\frac{1}{(1+t)^v}, \gamma_t=\frac{1}{(1+t)^v}$, where $0<v<\delta<1$, $c_{\alpha}$ is a small positive constant. We have
\begin{align*}
    \frac{1}{T}\sum\limits_{t=0}^{T-1}\mathbb{E}[\Vert \omega_t-&\omega_{K_t}^\ast\Vert^2]\\=\ &\mathcal{O}(\frac{1}{T^{1-v}})+\mathcal{O}(\frac{1}{T^v})+\mathcal{O}(\frac{1}{T^{2(\delta-v)}}).
\end{align*}
\end{theorem}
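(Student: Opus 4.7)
The plan is to analyze the tracking error $z_t := \omega_t - \omega^\ast_{K_t}$ by a one-step Lyapunov recursion on $\mathbb{E}\|z_t\|^2$, then telescope over $T$ iterations. By Proposition~\ref{pro3}, $\omega^\ast_{K_t}$ is the unique solution of $A_{K_t}\omega = b_{K_t}$, so the TD step (Line 8 of Algorithm~\ref{alg1}) can be rewritten as
\[
\omega_{t+1}-\omega^\ast_{K_t} = (I-\beta_t A_{K_t})(\omega_t-\omega^\ast_{K_t}) + \beta_t\,\xi_t + \beta_t(J(K_t)-\eta_t)\phi(x_t,u_t),
\]
where $\xi_t$ collects the Markovian noise from replacing $A_{K_t},b_{K_t}$ by their single-sample proxies. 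Under Assumptions~\ref{a1}--\ref{a2}, the projection $\Pi_{\bar\omega}$, and the closed-form Gaussian stationary distribution of \eqref{eq:6}, the moments of $\phi(x,u)$ and $c(x,u)$ are uniformly bounded, and a direct computation using $A_K = \mathbb{E}[\phi(\phi-\phi')^\top]$ shows that $A_{K_t}$ is uniformly positive definite in the sense $v^\top A_{K_t} v \ge \lambda\|v\|^2$ for some $\lambda>0$ independent of $t$. Squaring the display, conditioning on the past, and using a Young-type bound on the cross terms yields
\[
\mathbb{E}\|\omega_{t+1}-\omega^\ast_{K_t}\|^2 \le (1-\lambda\beta_t)\,\mathbb{E}\|\omega_t-\omega^\ast_{K_t}\|^2 + C\beta_t^2 + C\beta_t\,\mathbb{E}|\eta_t-J(K_t)|^2.
\]

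Next I would handle the drift of the moving target $\omega^\ast_{K_t}$. By Line 9 and the uniform bounds on $\omega_{t+1}$ and $K_t$ one has $\|K_{t+1}-K_t\|=\mathcal{O}(\alpha_t)$, and the Lipschitz continuity of $K\mapsto\omega^\ast_K=\text{svec}(\Omega_K)$ on the compact set $\{K:\|K\|\le\bar K,\,\rho(A-BK)\le\rho\}$---which follows from the smoothness of the Lyapunov solution $P_K$ in \eqref{lyap3} on this set---gives $\|\omega^\ast_{K_{t+1}}-\omega^\ast_{K_t}\|\le L_\omega\|K_{t+1}-K_t\|=\mathcal{O}(\alpha_t)$. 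Expanding $\|z_{t+1}\|^2$ as $\|\omega_{t+1}-\omega^\ast_{K_t}\|^2 + 2\langle\omega_{t+1}-\omega^\ast_{K_t},\omega^\ast_{K_t}-\omega^\ast_{K_{t+1}}\rangle + \|\omega^\ast_{K_t}-\omega^\ast_{K_{t+1}}\|^2$ and applying Young's inequality with weight $\beta_t$ to the cross term converts the drift into an extra $\mathcal{O}(\alpha_t^2/\beta_t)$ contribution. A scalar analogue of the argument above, applied to Line 7, gives $\mathbb{E}|\eta_t-J(K_t)|^2=\mathcal{O}(\gamma_t)+\mathcal{O}(\alpha_t^2/\gamma_t^2)$; since $\gamma_t=\beta_t$, this plugs back as $\mathcal{O}(\beta_t^2)+\mathcal{O}(\alpha_t^2/\beta_t)$, i.e.\ absorbed into existing terms.

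Combining everything yields the one-step recursion
\[
\mathbb{E}\|z_{t+1}\|^2 \le (1-\tfrac{\lambda}{2}\beta_t)\,\mathbb{E}\|z_t\|^2 + C_1\beta_t^2 + C_2\,\alpha_t^2/\beta_t.
\]
Telescoping via the elementary inequality $\sum_{t<T}\beta_t\,\mathbb{E}\|z_t\|^2 \le (2/\lambda)\bigl(\mathbb{E}\|z_0\|^2+\sum_{t<T}(C_1\beta_t^2+C_2\alpha_t^2/\beta_t)\bigr)$ and using $\beta_t\ge\beta_{T-1}$ gives
\[
\frac{1}{T}\sum_{t=0}^{T-1}\mathbb{E}\|z_t\|^2 \le \frac{C}{T\beta_T}\Bigl(\mathbb{E}\|z_0\|^2+\sum_{t=0}^{T-1}\beta_t^2+\sum_{t=0}^{T-1}\alpha_t^2/\beta_t\Bigr) = \mathcal{O}\!\left(\frac{1}{T\beta_T}\right)+\mathcal{O}(\beta_T)+\mathcal{O}\!\left(\frac{\alpha_T^2}{\beta_T^2}\right),
\]
which under $\beta_t=(1+t)^{-v}$, $\alpha_t=c_\alpha(1+t)^{-\delta}$ evaluates to $\mathcal{O}(T^{v-1})+\mathcal{O}(T^{-v})+\mathcal{O}(T^{-2(\delta-v)})$, as claimed.

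The main obstacle I anticipate is establishing the uniform positive definiteness of $A_{K_t}$ together with a uniform Lipschitz constant for $K\mapsto\omega^\ast_K$; both reduce to inverting Lyapunov operators with uniformly bounded inverses, and the bound ultimately hinges on the spectral-radius control of Assumption~\ref{a2}. A secondary difficulty---absent from the finite-state analyses of \citet{wu2020finite}---is that $\phi(x,u)$ is an unbounded quadratic feature, so the martingale-noise bound cannot rely on $\|\phi\|_\infty$; instead one must carry uniform fourth-moment control of the Gaussian stationary distribution of \eqref{eq:6}, whose covariance $D_{K_t}$ is bounded in $t$ thanks to Assumption~\ref{a2} and the Lyapunov equation \eqref{lyap1}.
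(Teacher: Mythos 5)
Your proposal is correct in substance and arrives at the same three rates, but it closes the argument by a genuinely different mechanism than the paper. The paper keeps the cross terms linear in the errors, namely $\beta_t\mathbb{E}[|y_t|\Vert z_t\Vert]$ and $\mathbb{E}[\Vert z_t\Vert]/(1+t)^{\delta-v}$, sums them, applies Cauchy--Schwarz across the whole sum, and ends up with a self-bounding quadratic inequality $2\lambda A(T)\le B(T)+\sqrt{C(T)A(T)}+\sqrt{D(T)A(T)}$ that it solves by completing the square; the coupling with the average-cost error is fed in through a separately proved Ces\`aro-average bound $\frac{1}{T}\sum_t\mathbb{E}y_t^2=\mathcal{O}(T^{v-1})+\mathcal{O}(T^{-v})+\mathcal{O}(T^{-2(\delta-v)})$ (the paper's Theorem \ref{t3}). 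You instead absorb the quadratic parts of the cross terms into the negative drift $-\lambda\beta_t\Vert z_t\Vert^2$ via per-step Young inequalities, which yields a clean linear recursion and a one-line telescoping argument --- arguably tidier, and it avoids the quadratic-inequality gymnastics at the price of a harmless log factor in the boundary cases $v=\tfrac12$ and $2(\delta-v)=1$ (the paper's integral bounds carry the same implicit restrictions). The supporting facts you rely on --- uniform positive definiteness of $A_{K_t}$, uniform moment bounds on the quadratic features via the bounded stationary covariance, Lipschitz continuity of $K\mapsto\omega^\ast_K$ through the perturbation of $P_K$, and $\Vert K_{t+1}-K_t\Vert=\mathcal{O}(\alpha_t)$ from the projection radius --- are exactly the paper's Propositions \ref{p2}, \ref{new_bounded}, \ref{p3} and the bound \eqref{ktdiff}. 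The one step you should justify more carefully is the \emph{per-iteration} claim $\mathbb{E}|\eta_t-J(K_t)|^2=\mathcal{O}(\gamma_t)+\mathcal{O}(\alpha_t^2/\gamma_t^2)$: this is strictly stronger than what the paper proves and requires a Chung-type comparison lemma for the recursion $\mathbb{E}y_{t+1}^2\le(1-\gamma_t)\mathbb{E}y_t^2+\mathcal{O}(\gamma_t^2)+\mathcal{O}(\alpha_t^2/\gamma_t)$ (valid here since $v<1$, but not free). Note you do not actually need it: the quantity your critic recursion consumes is $\sum_t\beta_t\mathbb{E}y_t^2=\sum_t\gamma_t\mathbb{E}y_t^2$, which is precisely what telescoping the $\eta$-recursion bounds directly by $\mathbb{E}y_0^2+\sum_t\bigl(\mathcal{O}(\gamma_t^2)+\mathcal{O}(\alpha_t^2/\gamma_t)\bigr)$, giving the required $\mathcal{O}(T^{-v})+\mathcal{O}(T^{-2(\delta-v)})$ contribution after dividing by $T\beta_{T-1}$.
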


Note that $\Vert \omega_t-\omega_{K_t}^\ast\Vert^2$ measures the difference between the estimated and true parameters of the corresponding Q-function under $K_t$. Despite the noisy single-sample critic update, this result establishes the convergence and characterizes the sample complexity of the critic for Algorithm \ref{alg1}. The complexity order depends on the selection of step sizes $\delta$ and $v$, which will be determined optimally later according to the finite-time bound of the actor.

Based on this finite-time convergence result of the critic,  we further characterize the global convergence of Algorithm \ref{alg1} below.
\begin{theorem}\label{t2}
Suppose that Assumptions \ref{a1} and \ref{a2} hold. Choose $\alpha_t=\frac{c_{\alpha}}{(1+t)^\delta}, \beta_t=\frac{1}{(1+t)^v}, \gamma_t=\frac{1}{(1+t)^v}$, where $0<v<\delta<1$, $c_{\alpha}$ is a small positive constant. We have
\begin{align*}
    \mathop{\min}\limits_{0\leq t< T} \mathbb{E}[J(K_t)&-J(K^\ast)]\\
    &=\mathcal{O}(\frac{1}{T^{1-\delta}})+\mathcal{O}(\frac{1}{T^v})+\mathcal{O}(\frac{1}{T^{2(\delta-v)}}).
\end{align*}
\end{theorem}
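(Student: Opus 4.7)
\textbf{Proof proposal for Theorem \ref{t2}.} My plan is to combine a local smoothness-based descent inequality on the actor iterates with the gradient domination property of the LQR cost, using Theorem \ref{t1} to control the bias of the natural-gradient estimate. The approximate natural gradient used in line 9 of Algorithm \ref{alg1} is $\widehat{\nabla^N J(K_t)} = \mathrm{smat}(\omega_{t+1})^{22} K_t - \mathrm{smat}(\omega_{t+1})^{21}$, while from \eqref{eq.natural_formula} and Proposition \ref{pro3} the true natural gradient satisfies $\nabla^N J(K_t) = E_{K_t} = \mathrm{smat}(\omega_{K_t}^\ast)^{22} K_t - \mathrm{smat}(\omega_{K_t}^\ast)^{21}$. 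Hence the estimation error is a linear functional of $\omega_{t+1}-\omega_{K_t}^\ast$, and under Assumption \ref{a1} it is bounded by $C\,\Vert\omega_{t+1}-\omega_{K_t}^\ast\Vert$ for some constant $C$.

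First I would establish a one-step descent lemma. Since Assumption \ref{a2} keeps $K_t$ in a compact subset of $\mathbb{K}$, $J$ is locally $L$-smooth there, so
\begin{align*}
J(K_{t+1}) &\le J(K_t) - \alpha_t \langle \nabla J(K_t),\, \widehat{\nabla^N J(K_t)} \rangle \\
&\quad + \tfrac{L\alpha_t^2}{2}\Vert \widehat{\nabla^N J(K_t)}\Vert_F^2.
\end{align*}
Using $\nabla J(K_t) = 2 E_{K_t} D_{K_t} = 2\, \nabla^N J(K_t)\, D_{K_t}$ from Lemma \ref{pro1}, I would rewrite the cross term as $2\langle \nabla^N J(K_t) D_{K_t}, \nabla^N J(K_t)\rangle$ plus a perturbation driven by $\omega_{t+1}-\omega_{K_t}^\ast$. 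The first piece is lower-bounded by $2\sigma_{\min}(D_{\sigma})\Vert E_{K_t}\Vert_F^2$ since $D_{K_t}\succeq D_\sigma$; the perturbation is handled by Cauchy--Schwarz and Young's inequality, yielding an upper bound of the form $c_1\alpha_t \Vert\omega_{t+1}-\omega_{K_t}^\ast\Vert^2$. The quadratic term $\Vert \widehat{\nabla^N J(K_t)}\Vert_F^2$ is split similarly into $\Vert E_{K_t}\Vert_F^2$ plus critic error, and absorbed into the dominant negative term by taking $\alpha_t$ (equivalently $c_\alpha$) sufficiently small.

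Next I would invoke the gradient domination property for LQR (Fazel et al.\ 2018 and Lemma 11 of Yang et al.\ 2019), which gives $J(K_t)-J(K^\ast)\le \mu^{-1}\Vert E_{K_t}\Vert_F^2$ for a constant $\mu>0$ depending on $D_{\sigma}, D_{K^\ast}, R$. Rearranging the descent inequality yields
\begin{align*}
\alpha_t\bigl(J(K_t)-J(K^\ast)\bigr) &\le c_2\bigl(J(K_t)-J(K_{t+1})\bigr) \\
&\quad + c_3\alpha_t \Vert\omega_{t+1}-\omega_{K_t}^\ast\Vert^2.
\end{align*}
Summing over $t=0,\dots,T-1$, telescoping the $J(K_t)-J(K_{t+1})$ terms, and applying Theorem \ref{t1} to the residual sum, I get
\begin{align*}
\sum_{t=0}^{T-1}\alpha_t\mathbb{E}[J(K_t)-J(K^\ast)] &\le \mathcal{O}(1) \\
&\quad + \mathcal{O}\!\left(T\alpha_{\max}\cdot \text{rates in Thm \ref{t1}}\right).
\end{align*}
Dividing by $\sum \alpha_t = \Theta(T^{1-\delta})$ and bounding $\min_t$ by the weighted average produces the three terms $\mathcal{O}(T^{-(1-\delta)})$ (from the telescoping initial-cost piece), $\mathcal{O}(T^{-v})+\mathcal{O}(T^{-2(\delta-v)})$ (inherited from Theorem \ref{t1}), exactly as claimed. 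Balancing $1-\delta=v=2(\delta-v)$ gives $v=2/5,\ \delta=3/5$, and hence the $\mathcal{O}(\epsilon^{-2.5})$ sample complexity.

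The main obstacle, and the step that departs from existing double-loop/multi-sample analyses, is controlling the actor's drift when the natural gradient is biased and the descent is not monotone: one cannot assume $K_t$ stays in any a priori small neighborhood of the previous iterate, so the local smoothness constant $L$ and the gradient-domination constant $\mu$ must be uniform over the set guaranteed by Assumptions \ref{a1}--\ref{a2}. A secondary delicate point is that the quadratic term $\Vert\widehat{\nabla^N J(K_t)}\Vert_F^2$ contains $\Vert E_{K_t}\Vert_F^2$, which must be absorbed by the negative cross term via small $c_\alpha$; this is where the two-timescale condition $v<\delta$ together with the boundedness of $\omega_{t+1}$ enforced by the projection $\Pi_{\bar\omega}$ in line 8 is essential.
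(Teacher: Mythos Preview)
Your high-level strategy---descent inequality plus gradient domination plus the critic bound from Theorem~\ref{t1}---is exactly the paper's. The gap is in the bookkeeping at the summation step, and it costs you the claimed rate. After you arrive at
\[
\alpha_t\bigl(J(K_t)-J(K^\ast)\bigr)\;\le\; c_2\bigl(J(K_t)-J(K_{t+1})\bigr)+c_3\,\alpha_t\,\Vert\omega_{t+1}-\omega^\ast_{K_t}\Vert^2,
\]
you sum, telescope, bound $\sum_t\alpha_t\Vert z_{t+1}\Vert^2\le \alpha_{\max}\sum_t\Vert z_{t+1}\Vert^2=\mathcal{O}(T)\cdot(\text{Theorem~\ref{t1} rate})$, and finally divide by $\sum_t\alpha_t=\Theta(T^{1-\delta})$. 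That last division introduces an extra factor $T^{\delta}$ in front of every critic term, so what you actually obtain is $\mathcal{O}(T^{\delta-(1-v)})+\mathcal{O}(T^{\delta-v})+\mathcal{O}(T^{\delta-2(\delta-v)})$. Since $\delta>v$, the term $\mathcal{O}(T^{\delta-v})$ \emph{diverges}; the rates you wrote down do not follow from the steps you described.

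The paper avoids this by dividing the descent inequality through by $\alpha_t$ \emph{before} summing, so the critic error enters with coefficient~$1$ rather than~$\alpha_t$. The resulting non-telescoping sum $\sum_t\frac{1}{\alpha_t}(J(K_t)-J(K_{t+1}))$ is handled by Abel summation and bounded by $U/\alpha_{T-1}=\mathcal{O}(T^{\delta})$; dividing by~$T$ (not by $\sum\alpha_t$) then gives the $\mathcal{O}(T^{-(1-\delta)})$ term, while the critic contribution becomes exactly $\tfrac{1}{T}\sum_t\Vert z_{t+1}\Vert^2$, to which Theorem~\ref{t1} applies directly. A second, smaller difference is that the paper does not split the cross term via Young's inequality: it keeps $\Vert E_{K_t}\Vert\cdot\Vert\hat E_{K_t}-E_{K_t}\Vert$, applies Cauchy--Schwarz \emph{across~$t$} after summation, and solves the resulting self-referential quadratic inequality $F(T)\le \mathcal{O}(T^{-(1-\delta)})+\mathcal{O}(T^{-\delta})+c\sqrt{F(T)H(T)}$ for $F(T)=\tfrac{1}{T}\sum_t\mathbb{E}\Vert E_{K_t}\Vert^2$. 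Your Young-inequality route would also work \emph{if} you first divide by $\alpha_t$ and then normalize by~$T$; the essential fix is the order of those two operations, not the choice of smoothness lemma (the paper uses the exact ``almost smoothness'' identity, Lemma~\ref{l6}, instead of $L$-smoothness, but your version is fine on the compact set guaranteed by Assumptions~\ref{a1}--\ref{a2}).
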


The optimal convergence rate of the actor is attained at $\delta = \frac{3}{5}$ and $v=\frac{2}{5}$. In particular, to obtain an $\epsilon$-optimal policy, the optimal complexity of Algorithm \ref{alg1} is $\mathcal{O}(\epsilon^{-2.5})$. To our knowledge, this is the first convergence result for solving LQR using single-sample two-timescale AC method.

To see the merit of our proof framework, we sketch the main proof steps of Theorems \ref{t1} and \ref{t2} in the following. The supporting propositions and theorems mentioned below can be found in \citet{chen2022global}. Note that since the critic and the actor are coupled together, we define the following notations to clarify their dependency:
\begin{align*}
    A(T)=\ &\frac{1}{T}\sum\limits_{t=0}^{T-1} \mathbb{E}[(\eta_t-J(K_t))^2],\\
    B(T)=\ &\frac{1}{T}\sum\limits_{t=0}^{T-1} \mathbb{E}[\Vert \omega_t-\omega_{K_t}^\ast\Vert^2],\\
    C(T)=\ &\frac{1}{T}\sum\limits_{t=0}^{T-1} \mathbb{E}[\Vert \nabla_{K_t}^N J(K_t)\Vert^2],
\end{align*}
where $\nabla_{K_t}^N J(K_t)=E_{K_t}$ is defined in \eqref{eq.natural_formula}.

\noindent \textbf{Proof Sketch:} 
\begin{enumerate}
\item Prove the convergence of the average cost. Note that in Line 7 of Algorithm \ref{alg1}, the average cost estimator $\eta_t$ is only coupled with actor $K_t$ via the cost $c_{t}$. We bound $\eta_t$ utilizing the local Lipschitz continuity of $J(K)$ shown in \citet[Proposition 12]{chen2022global} and the boundedness of $K_t$. Then it can be proved that
\begin{align*}
    A(T)\leq \sqrt{A(T)}\mathcal{O}(T^{\frac{1}{2}-2(\delta-v)})+\mathcal{O}(\frac{1}{T^{1-v}})+\mathcal{O}(\frac{1}{T^v}),
\end{align*}
where $\mathcal{O}(T^{\frac{1}{2}-2(\delta-v)})$ comes from the ratio between actor step size and critic step size, which reveals how the merit of two-timescale method can contribute to the proof. The other two terms $\mathcal{O}(\frac{1}{T^{1-v}})$ and $\mathcal{O}(\frac{1}{T^v})$ are induced by the step size for the average cost, which is $\gamma_{t}=\frac{1}{(1+t)^v}$. Solving this inequality gives the convergence of $\eta_t$ which we presented in \citet[Theorem 13]{chen2022global}.

\item Prove the convergence of the critic. Note that the critic is coupled with both $\eta_t$ and actor $K_t$. We decouple the critic and the actor in a similiar way to step 1 utilizing the Lipschitz continuity of $\omega^\ast_t$ as shown in \citet[Proposition 15]{chen2022global}. Then, the following inequality can be obtained,
\begin{align*}
    B(T)\leq &\sqrt{A(T)B(T)}+\sqrt{\mathcal{O}(\frac{1}{T^{2(\delta-v)}})B(T)}\\
    &+\mathcal{O}(\frac{1}{T^{1-v}})+\mathcal{O}(\frac{1}{T^v}),
\end{align*}
where $\sqrt{A(T)B(T)}$ shows the coupling between the average cost estimator $\eta_t$ and the critic $\omega_t$. Terms 
$\mathcal{O}(\frac{1}{T^{2(\delta-v)}})$, $\mathcal{O}(\frac{1}{T^{1-v}})$ and $\mathcal{O}(\frac{1}{T^v})$ are induced by the stepsizes. Combining the bound for $A(T)$, we can conclude the convergence of critic detailed in Theorem \ref{t1}. 

\item Prove the convergence of the actor. We utilize the almost smoothness property of the cost function $J(K)$ to establish the relation between actor, critic, and the natural gradient. We first prove that 
\begin{align*}
    C(T)\leq \sqrt{B(T)C(T)}+\mathcal{O}(\frac{1}{T^{1-\delta}})+\mathcal{O}(\frac{1}{T^\delta}),
\end{align*}
where $\sqrt{B(T)C(T)}$ shows the coupling between critic and actor. The terms $\mathcal{O}(\frac{1}{T^{1-\delta}})$ and $\mathcal{O}(\frac{1}{T^\delta})$ are induced by the step size of actor. By the convergence of critic established in Theorem \ref{t1}, we can show that $C(T)$ converges to zero, which means the convergence of natural gradient. Finally, using the following gradient domination condition (see \citet[Lemma 17]{chen2022global})
\begin{align*}
    J(K)-J(K^\ast)
    \leq &\frac{1}{\sigma_{\text{min}}(R)}\Vert D_{K^\ast}\Vert \text{Tr}(E_K^\top E_K),
\end{align*}
we can further prove the global convergence of actor shown in Theorem \ref{t2}.
\end{enumerate}




\section{Experiments}
In this section, we provide two examples to validate our theoretical results. 
\begin{example}
\label{ex.1}
Consider a two-dimensional system with 
\begin{align*}
    \text{A}=\begin{bmatrix}
    0 & 1\\ 1 & 0
    \end{bmatrix}, \text{B}=\begin{bmatrix}
    0 & 1\\ 1 & 0
    \end{bmatrix}, \text{Q}=\begin{bmatrix}
    9 & 2\\ 2 & 1
    \end{bmatrix}, \text{R}=\begin{bmatrix}
    1 & 2\\ 2 & 8
    \end{bmatrix}.
\end{align*}
\end{example}

\begin{example}
\label{ex.2}
Consider a four-dimensional system with 
\begin{align*}
    &\text{A}=\begin{bmatrix}
    0.2 & 0.1 & 1 & 0\\ 0.2 & 0.1 & 0.1 & 0\\ 0 & 0.1 & 0.5 & 0\\ 0 & 0 & 0 & 0.5
    \end{bmatrix}, \text{B}=\begin{bmatrix}
    0.3 & 0 & 0\\ 0.2 & 0 & 0.3\\ 1 & 1 & 0.3 \\ 0.3 & 0.1 & 0.1
    \end{bmatrix},  \\
    &\text{Q}=\begin{bmatrix}
    1 & 0 & 0.2 & 0\\ 0 & 1 & 0.1 & 0\\ 0.2 & 0.1 & 1 & 0.1 \\ 0 & 0 & 0.1 & 1
    \end{bmatrix}, \text{R}=\begin{bmatrix}
    1 & 0.1 & 1\\ 0.1 & 1 & 0.5 \\ 1 & 0.5 & 2
    \end{bmatrix}.
\end{align*}
\end{example}

The learning results of Algorithm \ref{alg1} for these two examples are shown in Figure \ref{fig1}. Consistent with our theoretical analysis, both the critic and the actor gradually converge to the optimal solution. Interested readers can refer to the Appendix of \citet{chen2022global} for experimental details.

\begin{figure}[H]
     \centering
     \begin{subfigure}[b]{0.23\textwidth}
         \centering
         \includegraphics[width=\textwidth]{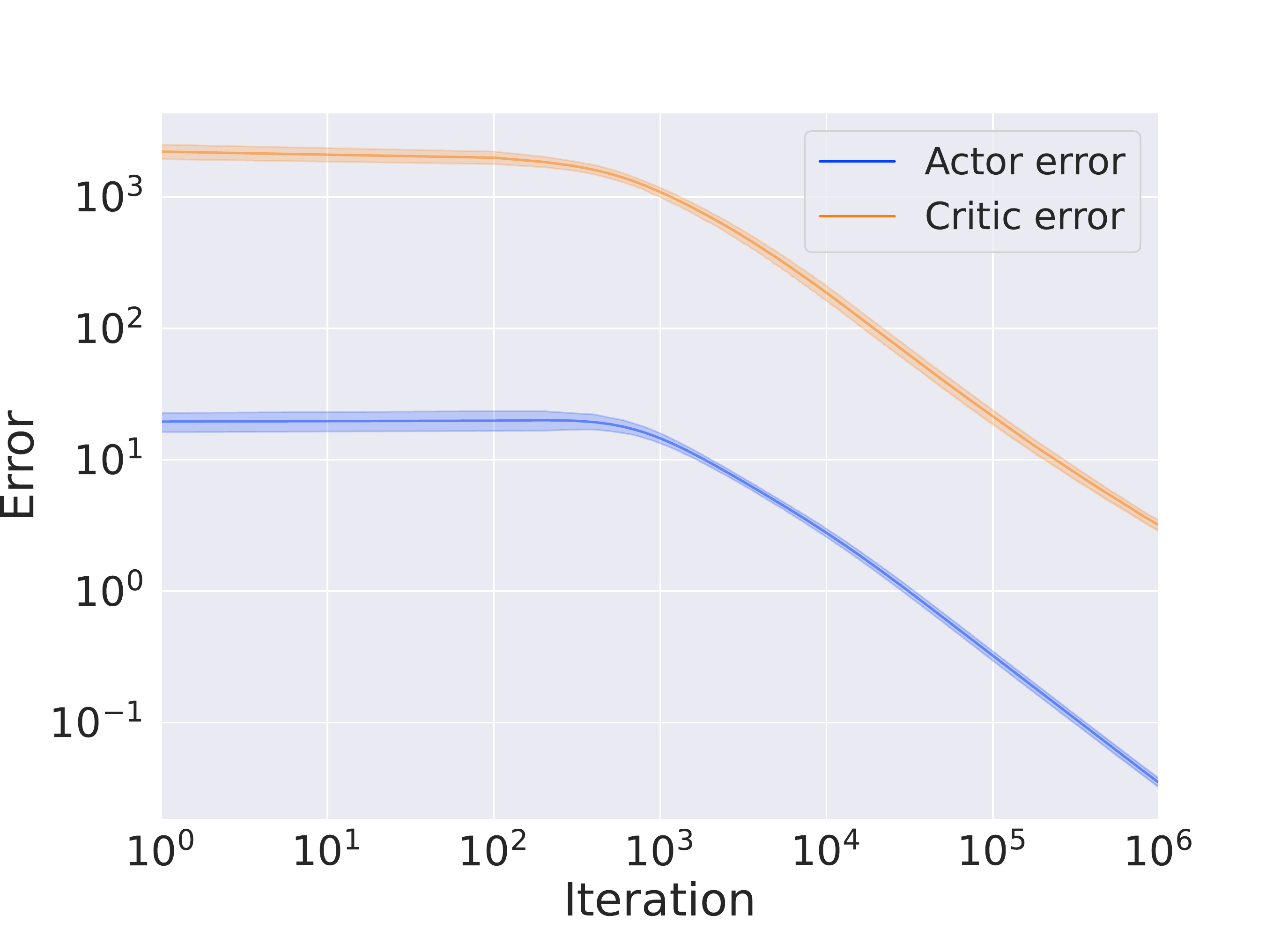}
         \caption{Example \ref{ex.1}}
     \end{subfigure}
     \hfill
     \begin{subfigure}[b]{0.23\textwidth}
         \centering
         \includegraphics[width=\textwidth]{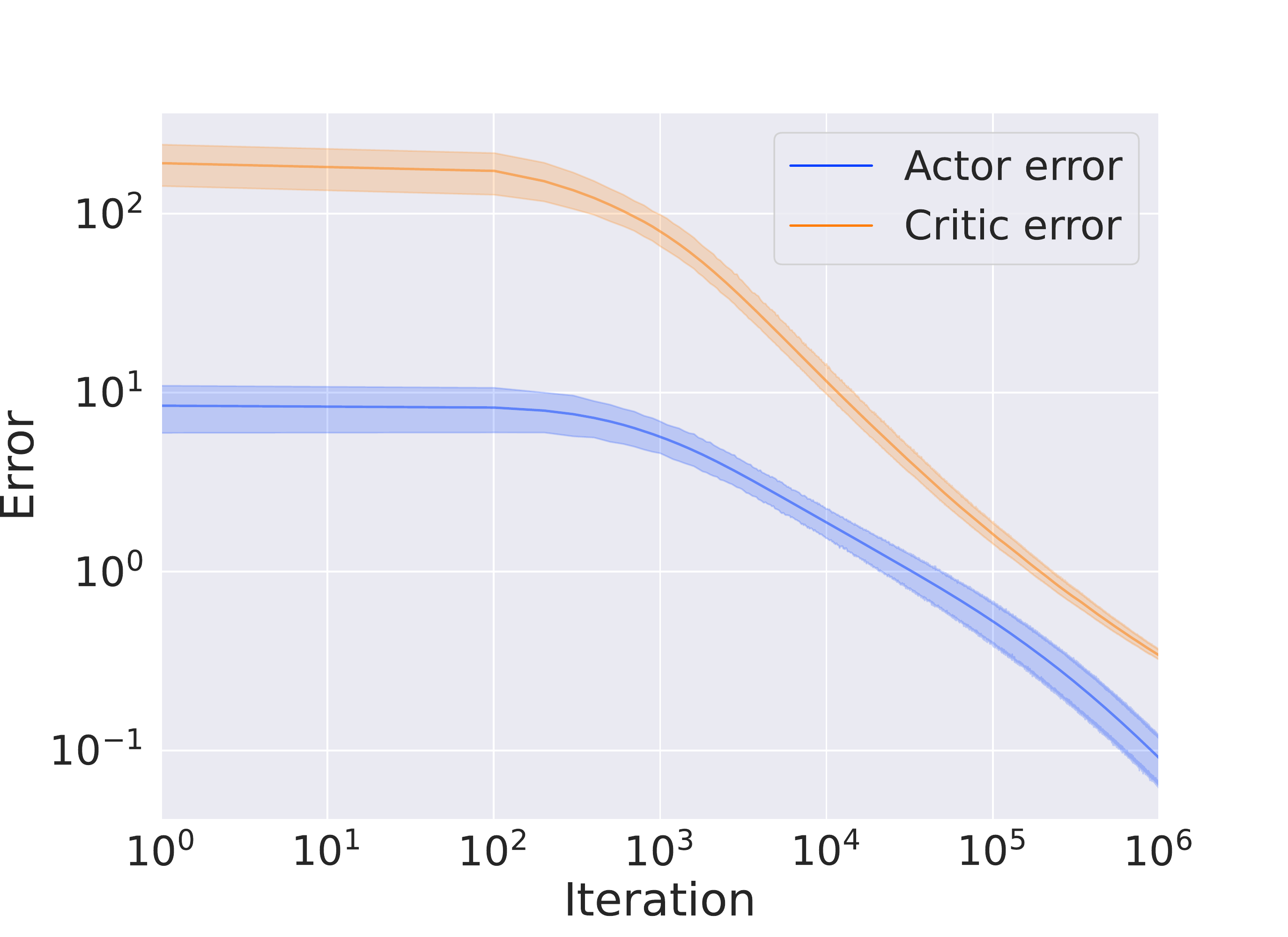}
         \caption{Example \ref{ex.2}}
     \end{subfigure}
        \caption{Learning curves of the critic and the actor. Critic error refers to $\frac{1}{T}\sum_{t=0}^{T-1}\Vert \omega_t-\omega_{K_t}^\ast\Vert^2$ while actor error refers to $\frac{1}{T}\sum_{t=0}^{T-1}[J(K_t)-J(K^\ast)]$. The solid lines correspond to the mean and the shaded regions correspond to 95\% confidence interval over 10 independent runs.}
        \label{fig1}
\end{figure}


We also compare our algorithm with the double-loop AC algorithm proposed in \citet{yang2019provably} and the zeroth-order method described in \citet{fazel2018global}. We plotted the relative error of the actor parameters for all three methods in Figure \ref{fig2}. These simulation results show the superior sample-efficiency of Algorithm \ref{alg1} empirically, confirming the practical wisdom of single sample two-timescale AC method. 

\begin{figure}[H]
     \centering
     \begin{subfigure}[b]{0.23\textwidth}
         \centering
         \includegraphics[width=\textwidth]{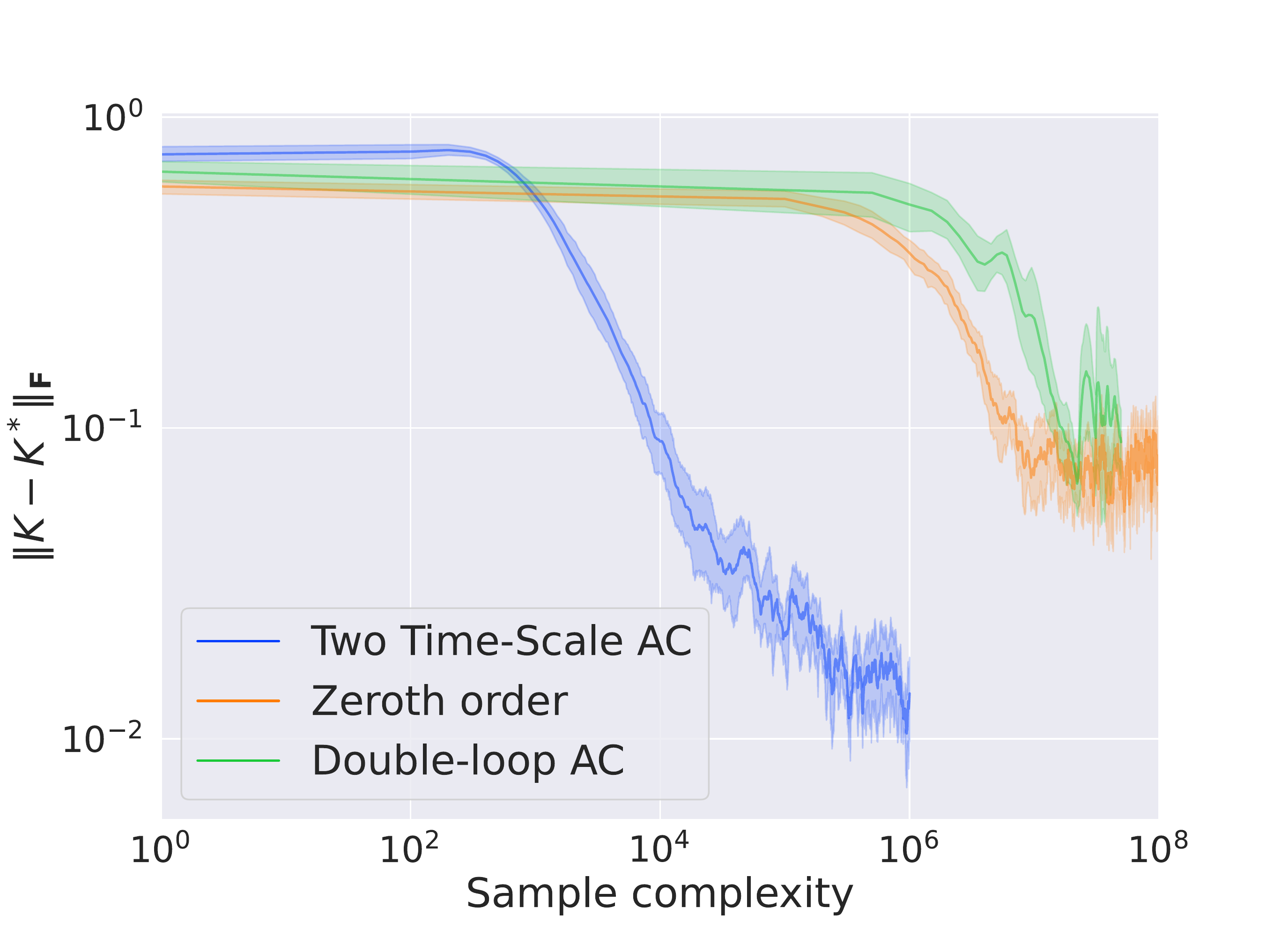}
         \caption{Example \ref{ex.1}}
         \label{fig:y equals x}
     \end{subfigure}
     \hfill
     \begin{subfigure}[b]{0.23\textwidth}
         \centering
         \includegraphics[width=\textwidth]{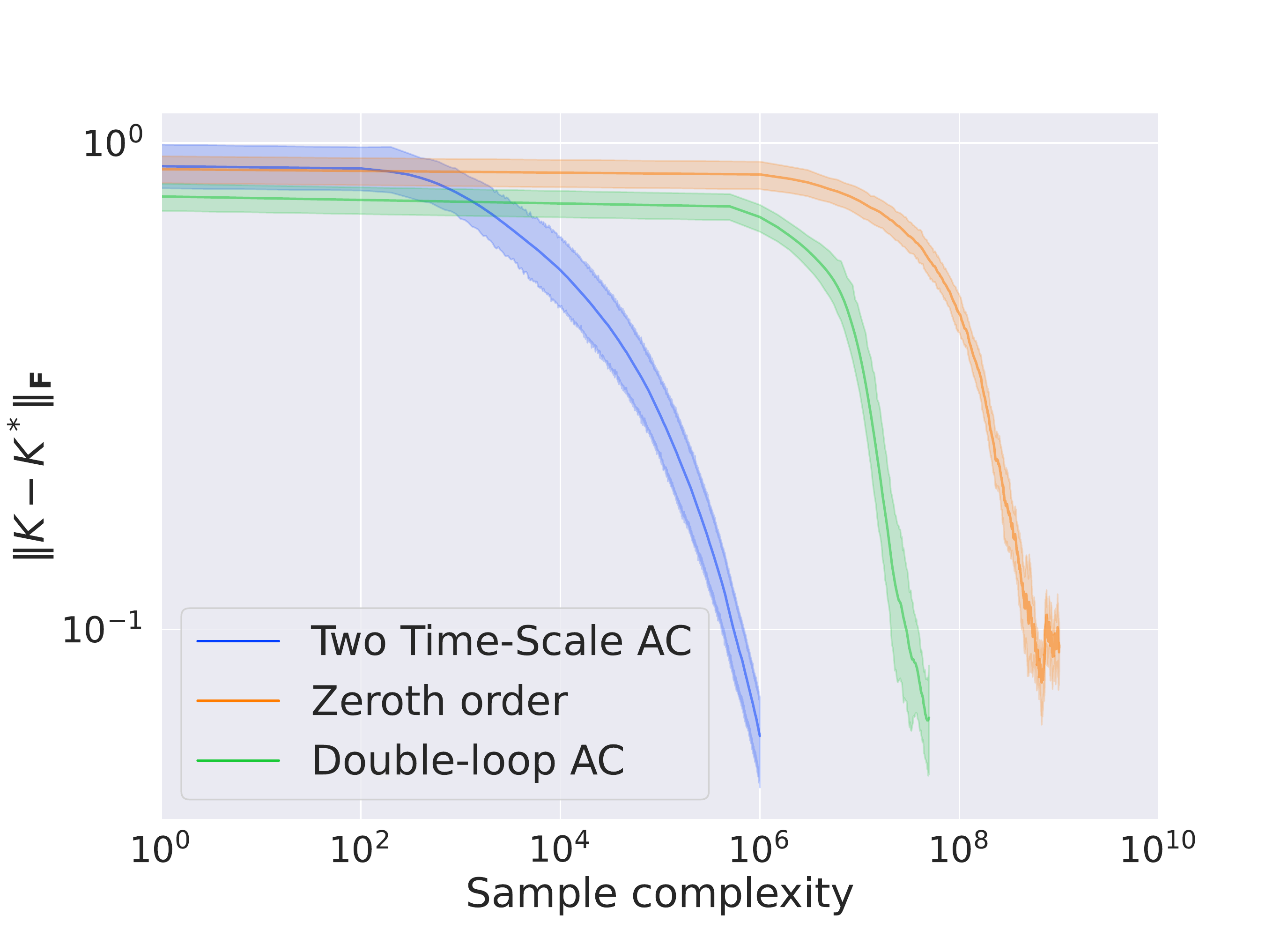}
         \caption{Example \ref{ex.2}}
         \label{fig:three sin x}
     \end{subfigure}
        \caption{Sample complexity comparison. The solid lines correspond to the mean and the shaded regions correspond to 95\% confidence interval over 10 independent runs.}
        \label{fig2}
\end{figure}

\section{Conclusion and Discussion}
In this paper, we establish the first finite-time global convergence analysis for the two-timescale AC method under LQR setting. We adopt a more practical single-sample two-timescale AC method and achieve an $\mathcal{O}(\epsilon^{-2.5})$ sample complexity. Our proof techniques of decoupling the actor and critic updates and controlling the accumulative estimate errors of the actor induced by the critic are novel and applicable to analyzing other AC methods where the actor and critic are updated simultaneously. 
Our future work includes further tightening the sample complexity bound under more relaxed settings and assumptions. 

\subsubsection*{Acknowledgments}
 The work of X. Chen and L. Zhao was supported in part by the Singapore Ministry of Education Academic Research Fund Tier 1 (R-263-000-E60-133). The work of J. Duan is sponsored by NSF China with 52202487. The work of Y. Liang was supported in part by the U.S. National Science Foundation under the grant CCF-1761506.

\clearpage
\newpage
\appendix
\section{Proof of Main Theorems}\label{appendix1}
\subsection{Convergence of Critic}
We establish the convergence of critic first. In our two-timescale algorithm, the actor updates simultaneously with critic, so we define the following notations for clarification.
\begin{align*}
\omega^\ast_t&:=\omega^\ast_{K_t},\\
z_t&:=\omega_t-\omega^\ast_t,\\
y_t&:=\eta_t-J(K_t),\\
O_t&:=(x_t,u_t,x'_{t},u'_{t}),\\
\Delta g(O,\eta,K)&:=[J(K)-\eta]\phi(x,u),\\
g(O,\omega,K)&:=[c(x,u)-J(K)+(\phi(x',u')\\
&\quad \ \ -\phi(x,u))^\top \omega]\phi(x,u),\\
\bar{g}(\omega,K)&:=\mathbb{E}_{x\sim \rho_K,u\sim \pi_K}[[c(x,u)-J(K)\\
&\quad \ \ +(\phi(x',u')-\phi(x,u))^\top \omega]\phi(x,u)].\\
\Lambda(O,\omega,K)&:=\langle \omega-\omega^\ast_K,g(O,\omega,K)-\bar{g}(\omega,K)\rangle.
\end{align*}
\subsubsection{Approximating the Average Cost}
In this section, we establish the finite-time convergence for the average cost estimator $\eta_t$.

We first give an uniform upper bound for the covariance matrix $D_{K_t}$.
\begin{proposition}\label{pa1.2.1}
(Upper bound for covariance matrix). Suppose that Assumption \ref{a2} holds. The covariance matrix of the stationary distribution $\mathcal{N}(0,D_{K_t})$ induced by the Markov chain in \eqref{eq:6} can be upper bounded by
\begin{align}\label{updk}
    \Vert D_{K_t}\Vert \leq \frac{c_1\Vert D_{\sigma}\Vert}{1-(\frac{1+\rho}{2})^2}\ \text{for all}\ t,
\end{align}
where $c_1$ is a constant.
\end{proposition}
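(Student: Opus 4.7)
The plan is to unroll the Lyapunov equation \eqref{lyap1} into a convergent infinite series and bound it term-by-term. Since $\rho(A-BK_t)\leq\rho<1$ by Assumption \ref{a2}, iterating the recursion
$$D_{K_t}=D_\sigma+(A-BK_t)D_{K_t}(A-BK_t)^\top$$
yields the explicit representation
$$D_{K_t}=\sum_{i=0}^\infty (A-BK_t)^i\, D_\sigma\, \bigl((A-BK_t)^\top\bigr)^i,$$
with convergence guaranteed by the strict stability of $A-BK_t$. Taking spectral norms and using submultiplicativity gives
$$\|D_{K_t}\|\leq \|D_\sigma\|\sum_{i=0}^\infty \|(A-BK_t)^i\|^2,$$
so the whole problem reduces to controlling $\|(A-BK_t)^i\|$ in a summable fashion, uniformly in $t$.

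The key step is to replace the soft spectral-radius information by a hard geometric decay: I want to show there exists a constant $c_1$ (independent of $t$) such that $\|(A-BK_t)^i\|\leq c_1^{1/2}\bigl(\tfrac{1+\rho}{2}\bigr)^i$ for all $i\geq 0$ and all $t$. For a single matrix $M$ with $\rho(M)\leq\rho$, Gelfand's formula yields such a bound with rate $(1+\rho)/2$ and some constant depending on $M$ (equivalently, one can pass to Jordan normal form and observe that powers of a Jordan block of size $\leq d$ grow only polynomially while the eigenvalue modulus is $\leq\rho$, which is absorbed into $\bigl(\tfrac{1+\rho}{2}\bigr)^i$ at the cost of a prefactor depending on $d$ and $\rho$). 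To upgrade this to a uniform constant, I would invoke Assumption \ref{a1}: the set $\mathcal{S}=\{A-BK:\|K\|\leq\bar K,\ \rho(A-BK)\leq\rho\}$ is bounded, and the Jordan-form estimate can be chosen to depend only on $\rho$, the dimension $d$, and an upper bound on $\|A-BK\|$, which is controlled by $\|A\|+\|B\|\bar K$. This produces a single $c_1$ depending on $\rho,d,\|A\|,\|B\|,\bar K$ but not on $t$.

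With the uniform geometric bound in hand, the remaining computation is a geometric series:
$$\|D_{K_t}\|\leq c_1\|D_\sigma\|\sum_{i=0}^\infty\Bigl(\tfrac{1+\rho}{2}\Bigr)^{2i}=\frac{c_1\|D_\sigma\|}{1-\bigl(\tfrac{1+\rho}{2}\bigr)^2},$$
which is exactly \eqref{updk}. The main obstacle is the second step: the unrolling and the geometric summation are essentially bookkeeping, whereas producing a constant $c_1$ that does not depend on $t$ requires moving past the pointwise Gelfand estimate. The Jordan-form argument (or, equivalently, a compactness argument on $\mathcal{S}$ combined with the upper semicontinuity of the minimal admissible constant for each decay rate) is what makes this work, and it is where Assumptions \ref{a1} and \ref{a2} are used in tandem.
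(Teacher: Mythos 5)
Your proposal is correct and follows the same overall strategy as the paper: unroll the Lyapunov equation \eqref{lyap1} into the series $D_{K_t}=\sum_{i\geq 0}(A-BK_t)^iD_\sigma((A-BK_t)^\top)^i$ and sum a geometric bound with ratio $\bigl(\tfrac{1+\rho}{2}\bigr)^2$. Where you differ is in how the decay rate $\tfrac{1+\rho}{2}$ and the prefactor $c_1$ are produced. The paper invokes the standard fact that for any $\epsilon>0$ there is a sub-multiplicative norm $\Vert\cdot\Vert_\ast$ with $\Vert A-BK_t\Vert_\ast\leq\rho(A-BK_t)+\epsilon$, takes $\epsilon=\tfrac{1-\rho}{2}$, sums the series in that norm, and then appeals to equivalence of norms to return to the spectral norm. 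That argument is clean for a fixed $t$, but the norm $\Vert\cdot\Vert_\ast$ is constructed from the matrix $A-BK_t$ itself, so both it and the resulting equivalence constant a priori vary with $t$; the paper asserts a single $c_1$ without addressing this. Your route bounds $\Vert(A-BK_t)^i\Vert$ directly by $c_1^{1/2}\bigl(\tfrac{1+\rho}{2}\bigr)^i$ and explicitly confronts the uniformity issue, observing that the prefactor can be chosen to depend only on $d$, $\rho$, and an upper bound on $\Vert A-BK_t\Vert$, the latter supplied by Assumption \ref{a1} via $\Vert A\Vert+\Vert B\Vert\bar K$. This is a genuine improvement in rigor at the one point where the paper's proof is loose, at the cost of needing Assumption \ref{a1} in addition to Assumption \ref{a2} (which is harmless here, since the paper assumes both throughout). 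Both arguments then conclude with the same geometric summation.
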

Note that the sampled state-action pair $(x_t,u_t)$ can be unbounded. However, in the following proposition, we show that by taking expectation over the stationary state-action distribution, the expected cost and feature function are all bounded.
\begin{proposition}[Upper bound for reward and feature function]\label{new_bounded}
    For $t=0,1,\cdots, T-1$, we have
    \begin{align*}
        \mathbb{E}[c^2_t]\leq C,\\
        \mathbb{E}[\Vert \phi(x_t,u_t)\Vert^2]\leq C,
    \end{align*}
where $C$ is a constant.
\end{proposition}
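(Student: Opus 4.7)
The plan is to exploit the fact that, for each $t$, the sampled pair $(x_t,u_t)$ is jointly zero-mean Gaussian with a covariance matrix whose spectral norm is bounded uniformly in $t$ by combining Assumption \ref{a1} and Proposition \ref{pa1.2.1}. Once this uniform covariance bound is in place, both $\mathbb{E}[c_t^2]$ and $\mathbb{E}[\Vert \phi(x_t,u_t)\Vert^2]$ reduce to fourth moments of Gaussian quadratic forms, which are computable by standard Isserlis-type identities for moments of quadratic forms in normal variables (cf.\ \citet{magnus1978moments,joarder2011statistical}).

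First I would record the joint law explicitly. Setting $z_t := (x_t^\top, u_t^\top)^\top$, the sampling rule $x_t \sim \mathcal{N}(0,D_{K_t})$ together with $u_t = -K_t x_t + \sigma \zeta_t$, $\zeta_t \sim \mathcal{N}(0,I_k)$ independent of $x_t$, gives $z_t \sim \mathcal{N}(0, \Sigma_{K_t})$ with blocks $\Sigma_{K_t}^{11} = D_{K_t}$, $\Sigma_{K_t}^{12} = -D_{K_t} K_t^\top$, and $\Sigma_{K_t}^{22} = K_t D_{K_t} K_t^\top + \sigma^2 I_k$. Using $\Vert D_{K_t}\Vert \le C_D := c_1 \Vert D_\sigma\Vert/(1-((1+\rho)/2)^2)$ from Proposition \ref{pa1.2.1} together with $\Vert K_t\Vert \le \bar{K}$ from Assumption \ref{a1}, the triangle inequality on block operator norms delivers a uniform bound $\Vert \Sigma_{K_t}\Vert \le C_\Sigma$, where $C_\Sigma$ depends only on $\bar K$, $\rho$, $\Vert D_\sigma\Vert$, and $\sigma$.

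Next I would bound the two target quantities. Write $c_t = z_t^\top M z_t$ with $M$ the block-diagonal matrix built from $Q$ and $R$; the standard Gaussian quartic identity $\mathbb{E}[(z_t^\top M z_t)^2] = 2\,\text{Tr}((M \Sigma_{K_t})^2) + (\text{Tr}(M\Sigma_{K_t}))^2$ then yields $\mathbb{E}[c_t^2] \le C_c$ since $\Vert M\Vert$, $\Vert \Sigma_{K_t}\Vert$, and $d+k$ are all bounded. For the feature function, the definition of $\text{svec}$ in the paper gives $\Vert \phi(x,u)\Vert^2 = \Vert z z^\top\Vert_F^2 = (z^\top z)^2 = (\Vert x\Vert^2 + \Vert u\Vert^2)^2$; the same quartic identity applied with $M = I_{d+k}$ produces $\mathbb{E}[\Vert \phi(x_t,u_t)\Vert^2] \le 2\,\text{Tr}(\Sigma_{K_t}^2) + (\text{Tr}\,\Sigma_{K_t})^2 \le 3(d+k)^2 \Vert \Sigma_{K_t}\Vert^2$. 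Taking $C := \max(C_c,\, 3(d+k)^2 C_\Sigma^2)$ concludes the argument.

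Main obstacle. The only non-routine step is verifying that $\Vert \Sigma_{K_t}\Vert$ is controlled \emph{uniformly} in $t$; this uniformity is precisely what Assumption \ref{a2} (uniform stability $\rho(A-BK_t)\le \rho < 1$) feeding into Proposition \ref{pa1.2.1} is designed to supply. Everything else is a direct Gaussian moment computation, and in particular no almost-sure boundedness of the (potentially unbounded) realizations $(x_t,u_t)$ is needed.
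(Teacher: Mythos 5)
Your proposal is correct, and for the first bound it coincides with the paper's argument: both identify the joint law $(x_t^\top,u_t^\top)^\top\sim\mathcal{N}(0,\tilde D_{K_t})$, bound $\Vert\tilde D_{K_t}\Vert$ uniformly via Proposition \ref{pa1.2.1} and Assumption \ref{a1}, and invoke the quartic identity $\mathbb{E}[(z^\top M z)^2]=2\,\mathrm{Tr}((M\Sigma)^2)+(\mathrm{Tr}(M\Sigma))^2$ with $M=\mathrm{diag}(Q,R)$. Where you diverge is the feature-function bound: the paper expands $\Vert\phi\Vert^2=\sum_{i,j}(\vartheta^i\vartheta^j)^2$ and controls each entry by writing $\vartheta^i\vartheta^j$ as a difference of scaled $\chi^2_1$ variables, then bounding means, variances, and a covariance term entrywise. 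You instead observe that $\Vert zz^\top\Vert_F^2=(z^\top z)^2$, so the same quadratic-form identity with $M=I_{d+k}$ dispatches this term in one line, giving $2\,\mathrm{Tr}(\Sigma_{K_t}^2)+(\mathrm{Tr}\,\Sigma_{K_t})^2\le 3(d+k)^2\Vert\Sigma_{K_t}\Vert^2$. This is a genuine simplification: it avoids the chi-square decomposition and the associated covariance bookkeeping entirely, yields an explicit constant in terms of $(d+k)$ and $C_\Sigma$, and unifies both bounds under a single moment identity. The paper's entrywise route buys nothing extra here (it is used again later only to argue boundedness of $\mathbb{E}\Vert(\phi(x',u')-\phi(x,u))\phi(x,u)^\top\Vert^2$ in the critic analysis, which is outside the scope of this proposition), so your version is the cleaner proof of the stated result. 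You also correctly identify that the only substantive hypothesis is the uniform covariance bound supplied by Assumptions \ref{a1}--\ref{a2}, and that no almost-sure boundedness of the samples is required.
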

\begin{proposition}[Upper bound for cost function]\label{cost_function}
    For $t=0,1,\cdots, T-1$, we have
    \begin{align*}
        J(K_t)\leq U,
    \end{align*}
where $U:=\Vert Q\Vert_{\text{F}}+d\Bar{K}^2+\Vert R\Vert_{\text{F}}+\sigma^2\text{Tr}(R)+\frac{c_1\sqrt{d}\Vert D_{\sigma}\Vert}{1-(\frac{1+\rho}{2})^2}$ is a constant.
\end{proposition}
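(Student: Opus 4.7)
The plan is to evaluate $J(K_t)$ in its Gaussian-expectation form (which exposes $D_{K_t}$, directly controlled by Proposition \ref{pa1.2.1}) rather than through Lemma \ref{pro1}'s $P_{K_t}$ form, since no a priori uniform bound on $\|P_{K_t}\|$ is yet available from Assumptions \ref{a1} and \ref{a2}, whereas Proposition \ref{pa1.2.1} already delivers a $t$-independent bound on $\|D_{K_t}\|$. Once $J(K_t)$ is written as a trace against $D_{K_t}$, the conclusion follows by chaining standard matrix trace inequalities with Assumption \ref{a1}.

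Concretely, starting from $x_t \sim \mathcal{N}(0, D_{K_t})$ and $u_t = -K_t x_t + \sigma \zeta_t$ with $\zeta_t \sim \mathcal{N}(0, I_k)$ independent of $x_t$, a direct evaluation of Gaussian quadratic-form expectations gives
\begin{equation*}
J(K_t) \;=\; \mathrm{Tr}\bigl((Q + K_t^\top R K_t)\, D_{K_t}\bigr) + \sigma^2 \mathrm{Tr}(R),
\end{equation*}
where the cross term $-2\sigma\,\mathbb{E}[x_t^\top K_t^\top R \zeta_t]$ vanishes by independence. This splits the problem into bounding $\mathrm{Tr}(Q D_{K_t})$, bounding $\mathrm{Tr}(K_t^\top R K_t D_{K_t})$, and handling the constant residue $\sigma^2 \mathrm{Tr}(R)$.

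For the two non-trivial pieces I would use the Frobenius Cauchy--Schwarz inequality $\mathrm{Tr}(MN)\le \|M\|_F\|N\|_F$ combined with the dimensional conversion $\|D_{K_t}\|_F \le \sqrt{d}\,\|D_{K_t}\|$, obtaining
\begin{equation*}
\mathrm{Tr}(Q D_{K_t}) \le \|Q\|_F \,\sqrt{d}\, \|D_{K_t}\|, \qquad \mathrm{Tr}(K_t^\top R K_t D_{K_t}) \le \|K_t^\top R K_t\|_F \,\sqrt{d}\, \|D_{K_t}\|,
\end{equation*}
and then $\|K_t^\top R K_t\|_F \le \|K_t\|_F^2\,\|R\|_F \le d\bar K^2\, \|R\|_F$ via Assumption \ref{a1} and the elementary inequality $\|K_t\|_F \le \sqrt{d}\,\|K_t\|$. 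Finally, Proposition \ref{pa1.2.1} replaces $\|D_{K_t}\|$ by the $t$-independent bound $c_1\|D_\sigma\|/(1-((1+\rho)/2)^2)$, and collecting everything with $\sigma^2 \mathrm{Tr}(R)$ yields a constant upper bound of the form stated as $U$.

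The argument is essentially bookkeeping with no genuine analytic obstacle; the only minor care needed is matching the dimensional factors ($\sqrt{d}$ vs.\ $d$, Frobenius vs.\ spectral) at each step so that the constants produced match the exact form of $U$ written in the proposition, and ensuring the cross term in the expansion of $u_t^\top R u_t$ is correctly observed to have mean zero.
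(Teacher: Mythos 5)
Your proposal follows essentially the same route as the paper's proof: expand $J(K_t)=\mathrm{Tr}\bigl((Q+K_t^\top RK_t)D_{K_t}\bigr)+\sigma^2\mathrm{Tr}(R)$ with the cross term vanishing by independence of $\zeta_t$ and $x_t$, then combine Assumption \ref{a1} with the uniform bound on $\Vert D_{K_t}\Vert$ from Proposition \ref{pa1.2.1}. The one substantive discrepancy is in the final bookkeeping: your (correct) Cauchy--Schwarz chain yields a \emph{multiplicative} constant of the form $\bigl(\Vert Q\Vert_F+d\bar K^2\Vert R\Vert_F\bigr)\sqrt{d}\,c_1\Vert D_\sigma\Vert/(1-(\tfrac{1+\rho}{2})^2)+\sigma^2\mathrm{Tr}(R)$, which does not literally reproduce the \emph{additive} $U$ in the statement; the paper reaches that additive form via a step of the shape $\Vert MN\Vert_F\le\Vert M\Vert_F+\Vert N\Vert_F$, which is not valid in general, so your version is the defensible one, and since all downstream uses only need some $t$-independent bound, the difference in the constant is immaterial.
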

Proposition \ref{cost_function} justifies the projection introduced in the average cost update since the true average cost $J(K_t)$ is upper bounded by $U$.
\begin{lemma}\label{l2}
(Perturbation of $P_K$). Suppose $K'$ is a small perturbation of $K$ in the sense that
\begin{align}
    \Vert K'-K\Vert \leq \frac{\sigma_{\text{min}}(D_0)}{4}\Vert D_K\Vert^{-1}\Vert B\Vert^{-1}(\Vert A-BK\Vert+1)^{-1}. \label{eq24}
\end{align}
Then we have 
\begin{align*}
    \Vert P_{K'}-P_K\Vert \leq &6\sigma_{\text{min}}^{-1}(D_0)\Vert D_K\Vert \Vert K\Vert \Vert R\Vert(\Vert K\Vert \Vert B\Vert \cdot \\
    &\Vert A-BK\Vert +\Vert K\Vert \Vert B\Vert +1)\Vert K-K'\Vert.
\end{align*}
\end{lemma}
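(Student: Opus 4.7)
The plan is to derive a fixed-point equation for the perturbation $P_{K'}-P_K$ directly from the Lyapunov equation \eqref{lyap3}, invert the associated discrete Lyapunov operator via its Neumann series, and use the smallness condition \eqref{eq24} to close a coupled perturbation bound also involving $D_{K'}$ and $D_K$.

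First I would instantiate \eqref{lyap3} at both $K$ and $K'$ and subtract. After inserting and subtracting the mixed term $(A-BK')^\top P_K(A-BK')$, the difference satisfies the fixed-point identity
\begin{align*}
(P_{K'}-P_K) = (A-BK')^\top (P_{K'}-P_K)(A-BK') + \Delta,
\end{align*}
where the driving term is $\Delta := [K'^\top RK'-K^\top RK] + [(A-BK')^\top P_K(A-BK') - (A-BK)^\top P_K(A-BK)]$. Expanding each bracket via $K' = K+(K'-K)$ and $A-BK' = (A-BK) - B(K'-K)$ and separating linear from quadratic remainders in $(K'-K)$ gives, after the triangle inequality,
\begin{align*}
\Vert\Delta\Vert \leq C_0\Vert K-K'\Vert\bigl(\Vert K\Vert\Vert R\Vert + \Vert B\Vert\Vert P_K\Vert(\Vert A-BK\Vert + 1)\bigr),
\end{align*}
where the quadratic $\Vert K'-K\Vert^2$ remainders are absorbed into the linear part because \eqref{eq24} forces $\Vert B\Vert\Vert K'-K\Vert \leq 1/4$.

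Next I would iterate the fixed-point equation to obtain the explicit series $P_{K'}-P_K = \sum_{t\geq 0}((A-BK')^\top)^t \Delta\, (A-BK')^t$ and apply the standard operator-norm estimate for the inverse of the discrete Lyapunov operator with stable generator $A-BK'$, namely
\begin{align*}
\Bigl\Vert\sum_{t\geq 0}((A-BK')^\top)^t X (A-BK')^t\Bigr\Vert \leq \frac{\Vert X\Vert\,\Vert D_{K'}\Vert}{\sigma_{\text{min}}(D_\sigma)},
\end{align*}
to reach $\Vert P_{K'}-P_K\Vert \leq \Vert D_{K'}\Vert\,\Vert\Delta\Vert/\sigma_{\text{min}}(D_\sigma)$. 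At this point two auxiliary bounds close the loop. First, a twin perturbation argument applied to the covariance Lyapunov equation \eqref{lyap1}, tailored to the exact smallness \eqref{eq24}, yields $\Vert D_{K'}\Vert \leq 2\Vert D_K\Vert$. Second, the Lyapunov representation $P_K = \sum_t((A-BK)^\top)^t(Q+K^\top RK)(A-BK)^t$ together with the same operator-norm estimate (applied to the stable generator $A-BK$ and the PSD matrix $Q+K^\top RK$) produces a bound on $\Vert P_K\Vert$ of the form $\mathcal{O}(\Vert K\Vert\Vert R\Vert\Vert D_K\Vert/\sigma_{\text{min}}(D_0))$ compatible with the stated answer, after using $\sigma_{\text{min}}(D_\sigma)\geq \sigma_{\text{min}}(D_0)$ and collecting the constants into the prefactor $6$.

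The main obstacle is the \emph{coupled} nature of these two perturbation estimates: the bound on $\Vert P_{K'}-P_K\Vert$ depends on $\Vert D_{K'}\Vert$, whose control in turn calls for an analogous perturbation analysis of \eqref{lyap1} in which the roles of the two Lyapunov operators are intertwined. The smallness hypothesis \eqref{eq24} is engineered precisely so that this coupling is non-circular---it is just tight enough to give $\Vert D_{K'}\Vert \leq 2\Vert D_K\Vert$, which feeds back into the Lyapunov series bound for $P$ with only a doubling effect. Once this bookkeeping is set up and the quadratic-in-$(K'-K)$ remainders are absorbed via \eqref{eq24}, the remainder of the argument is a careful consolidation of Lyapunov operator norms into the form claimed in the lemma.
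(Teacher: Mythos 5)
First, a point of reference: the paper does not actually prove \Cref{l2} --- its ``proof'' is a one-line citation to Lemma 5.7 of \citet{yang2019provably}. Your architecture (subtract the two instances of \eqref{lyap3}, recognize $P_{K'}-P_K$ as the solution of a discrete Lyapunov equation with stable generator $A-BK'$ and driving term $\Delta$, invert that operator through a $\Vert D_{K'}\Vert/\sigma_{\text{min}}(D_\sigma)$ bound, and control $\Vert D_{K'}\Vert\leq 2\Vert D_K\Vert$ by a companion perturbation argument for \eqref{lyap1} under \eqref{eq24}) is exactly the standard route behind the cited lemma, so the skeleton is right.

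The final consolidation step, however, has a genuine gap. You invoke a bound $\Vert P_K\Vert=\mathcal{O}(\Vert K\Vert\Vert R\Vert\Vert D_K\Vert/\sigma_{\text{min}}(D_0))$, but the Lyapunov representation $P_K=\sum_{t\geq 0}((A-BK)^\top)^t(Q+K^\top RK)(A-BK)^t$ gives $\Vert P_K\Vert\lesssim(\Vert Q\Vert+\Vert K\Vert^2\Vert R\Vert)\Vert D_K\Vert/\sigma_{\text{min}}(D_\sigma)$, and the $\Vert Q\Vert$ contribution cannot be discarded since $P_K\succeq Q$. Feeding the correct bound into the term $2\Vert B\Vert\Vert P_K\Vert(\Vert A-BK\Vert+1)\Vert K'-K\Vert$ of $\Vert\Delta\Vert$ produces a contribution proportional to $\Vert Q\Vert$ that does not fit inside the claimed prefactor $\Vert K\Vert\Vert R\Vert(\Vert K\Vert\Vert B\Vert\Vert A-BK\Vert+\Vert K\Vert\Vert B\Vert+1)$; indeed that prefactor vanishes at $K=0$ while the Fr\'echet derivative of $K\mapsto P_K$ at $K=0$ is generically nonzero (it is driven by $B^\top P_0A$), so no estimate of $\Vert P_K\Vert$ alone can close the argument in the form you describe. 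Two further points: absorbing the quadratic remainder $\Vert R\Vert\Vert K'-K\Vert^2$ from $K'^\top RK'-K^\top RK$ into $\Vert K\Vert\Vert R\Vert\Vert K'-K\Vert$ requires $\Vert K'-K\Vert\lesssim\Vert K\Vert$, which \eqref{eq24} does not supply (it only yields $\Vert B\Vert\Vert K'-K\Vert\leq 1/4$); and the operator estimate you quote bounds the forward operator $\sum_t(A-BK')^tX((A-BK')^\top)^t$ via $D_{K'}=\Gamma_{K'}(D_\sigma)$, whereas your fixed-point series is the adjoint $\Gamma_{K'}^\top$, so passing between the two costs either a positive/negative decomposition of $\Delta$ or a trace (dimension) factor. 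A cleaner organization of the linear part of $\Delta$ is via the almost-smoothness identity of \Cref{l6}, writing $\Delta=(K'-K)^\top E_K+E_K^\top(K'-K)+(K'-K)^\top(R+B^\top P_KB)(K'-K)$ and bounding $\Vert E_K\Vert\leq\Vert R\Vert\Vert K\Vert+\Vert B\Vert\Vert P_K\Vert\Vert A-BK\Vert$, but the same obstruction through $\Vert P_K\Vert$ reappears there; the missing ingredient is precisely the step that eliminates $\Vert Q\Vert$ and $\Vert P_K\Vert$ from the final constant.
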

With the perturbation of $P_K$, we are ready to prove the Lipschitz continuous of $J(K)$.
\begin{proposition}\label{p1}
(Local Lipschitz continuity of $J(K)$) Suppose Lemma \ref{l2} holds, for any $K_t,K_{t+1}$, we have
\begin{align*}
    |J(K_{t+1})-J(K_t)|\leq l_1 \Vert K_{t+1}-K_t\Vert,
\end{align*}
where
\begin{align}\label{newl1}
    l_1:=&6c_1d\bar{K}\sigma_{\text{min}}^{-1}(D_0)\frac{\Vert D_{\sigma} \Vert^2}{1-(\frac{1+\rho}{2})^2}\Vert R\Vert(\bar{K}\Vert B\Vert \cdot \nonumber\\
    &(\Vert A\Vert +\bar{K}\Vert B\Vert+1) +1).
\end{align}
\end{proposition}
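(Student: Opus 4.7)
My plan is to reduce the Lipschitz bound on $J(K)$ to the perturbation bound on $P_K$ already recorded in Lemma \ref{l2}, by leveraging the explicit formula \eqref{eq.cost_formula}. Since $J(K) = \mathrm{Tr}(P_K D_\sigma) + \sigma^2 \mathrm{Tr}(R)$ and the second term is independent of $K$, subtracting the two values of the cost gives
$$
J(K_{t+1}) - J(K_t) \;=\; \mathrm{Tr}\!\bigl((P_{K_{t+1}} - P_{K_t})\, D_\sigma\bigr),
$$
so everything reduces to controlling $\|P_{K_{t+1}} - P_{K_t}\|$ and then passing through the trace against $D_\sigma$.

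The next step is the trace inequality. Since $D_\sigma$ is a $d\times d$ positive definite matrix, I would write $|\mathrm{Tr}(M D_\sigma)| = |\mathrm{Tr}(D_\sigma^{1/2} M D_\sigma^{1/2})| \leq \|M\|\,\mathrm{Tr}(D_\sigma) \leq d\,\|M\|\,\|D_\sigma\|$ for any symmetric $M$, which is exactly how the $d\,\|D_\sigma\|$ factor appears in $l_1$. Applied with $M = P_{K_{t+1}} - P_{K_t}$, this gives
$$
|J(K_{t+1}) - J(K_t)| \;\leq\; d\,\|D_\sigma\|\,\|P_{K_{t+1}} - P_{K_t}\|.
$$

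The final step is to invoke Lemma \ref{l2} on the right-hand side. The condition \eqref{eq24} is exactly what the qualifier ``local'' in the statement refers to, and it will be guaranteed by a sufficiently small actor stepsize $c_\alpha$ when the result is used downstream; for the proof of the proposition itself I simply assume it holds. Lemma \ref{l2} then yields
$$
\|P_{K_{t+1}} - P_{K_t}\| \leq 6\sigma_{\min}^{-1}(D_0)\,\|D_{K_t}\|\,\|K_t\|\,\|R\|\,(\|K_t\|\|B\|\|A - BK_t\| + \|K_t\|\|B\| + 1)\,\|K_{t+1} - K_t\|.
$$
I then apply Assumption \ref{a1} to replace $\|K_t\|$ by $\bar{K}$, the triangle inequality $\|A - BK_t\| \leq \|A\| + \bar{K}\|B\|$, and Proposition \ref{pa1.2.1} to bound $\|D_{K_t}\| \leq c_1\|D_\sigma\| / (1 - ((1+\rho)/2)^2)$ uniformly in $t$. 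Collecting terms and factoring $\bar{K}\|B\|$ out of the first two summands inside the parenthesis turns $\bar{K}\|B\|(\|A\|+\bar{K}\|B\|) + \bar{K}\|B\| + 1$ into $\bar{K}\|B\|(\|A\|+\bar{K}\|B\|+1) + 1$, reproducing the expression for $l_1$ in \eqref{newl1}.

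There is no real obstacle here beyond careful bookkeeping: the proof is an algebraic combination of the trace bound, Lemma \ref{l2}, and the two uniform bounds from Assumption \ref{a1} and Proposition \ref{pa1.2.1}. The only conceptual subtlety is the word ``local''---the entire argument is predicated on the perturbation condition \eqref{eq24}, and one must keep in mind that its validity is an implicit hypothesis inherited from Lemma \ref{l2} and will have to be rechecked each time the proposition is applied in the downstream analysis of the actor iterates.
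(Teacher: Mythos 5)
Your proposal is correct and follows essentially the same route as the paper's proof: write $J(K_{t+1})-J(K_t)=\mathrm{Tr}((P_{K_{t+1}}-P_{K_t})D_\sigma)$ via \eqref{eq.cost_formula}, bound the trace by $d\Vert D_\sigma\Vert\Vert P_{K_{t+1}}-P_{K_t}\Vert$, invoke Lemma \ref{l2}, and then replace $\Vert K_t\Vert$, $\Vert A-BK_t\Vert$, and $\Vert D_{K_t}\Vert$ by their uniform bounds from Assumption \ref{a1} and Proposition \ref{pa1.2.1}. Your remark that the perturbation condition \eqref{eq24} is an inherited hypothesis to be verified downstream (via the choice of $c_\alpha$) matches how the paper uses the proposition in the proof of Theorem \ref{t3}.
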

Equipped with the above propositions and lemmas, we are able to show that $\eta_t$ can approximate $J(K_t)$ by the following theorem.
\begin{theorem}\label{t3}
Suppose that Assumptions \ref{a1} and \ref{a2} hold. We choose $\alpha_t=\frac{c_{\alpha}}{(1+t)^\delta}, \beta_t=\frac{1}{(1+t)^v}, \gamma_t=\frac{1}{(1+t)^v}$, where $0<v<\delta<1$, $c_{\alpha}$ is a small positive constant. We have
\begin{align}
\frac{1}{T}{\sum\limits_{t=0}^{T-1}\mathbb{E}y^2_t}= \mathcal{O}(\frac{1}{T^{1-v}})+\mathcal{O}(\frac{1}{T^v})+\mathcal{O}(\frac{1}{T^{\delta-v}}).
\end{align}
\end{theorem}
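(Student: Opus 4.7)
The plan is to derive a one-step recursion for $y_t = \eta_t - J(K_t)$ by decomposing the update in Line 7 as
\begin{align*}
y_{t+1} \;=\; (1-\gamma_t)\,y_t + \gamma_t M_t - \Delta_t,
\end{align*}
where $M_t := c_t - J(K_t)$ is a conditional martingale increment of the cost sample and $\Delta_t := J(K_{t+1}) - J(K_t)$ is the drift induced by the actor step. The projection $\Pi_U$ in Line 7 only contracts $|y_t|$, since Proposition \ref{cost_function} guarantees $J(K_t)\in[0,U]$, so it can be dropped when obtaining an upper bound. Squaring and taking the conditional expectation given $\mathcal{F}_t=\sigma(K_t,\eta_t,\omega_t)$ produces six terms, which I would control separately.

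Next I would exploit three facts. First, because Line 3 samples $(x_t,u_t)$ from the stationary distribution $\rho_{K_t}$ paired with $\pi_{K_t}$, $\mathbb{E}[M_t\mid\mathcal{F}_t]=0$ and, by Proposition \ref{new_bounded}, $\mathbb{E}[M_t^2\mid\mathcal{F}_t]=\mathcal{O}(1)$; consequently the $\mathbb{E}[y_t M_t]$ cross term vanishes and $\gamma_t^2 \mathbb{E}[M_t^2]$ contributes only $\mathcal{O}(\gamma_t^2)$. Second, by the local Lipschitzness of $J$ (Proposition \ref{p1}) applied to $K_{t+1}-K_t = -\alpha_t\,\widehat{\nabla_{K_t}^N J(K_t)}$, together with Assumption \ref{a1} and the projection $\Pi_{\bar\omega}$ of the critic in Line 8, we obtain $|\Delta_t|\le l_1 G\,\alpha_t$ almost surely for a constant $G$ depending on $\bar\omega,\bar K$. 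Third, the remaining cross term is bounded by Cauchy--Schwarz as $|\mathbb{E}[y_t\Delta_t]|\le l_1 G\,\alpha_t \sqrt{\mathbb{E}[y_t^2]}$. Combining these ingredients gives a scalar recursion of the form
\begin{align*}
\mathbb{E}[y_{t+1}^2]\;\le\;(1-\gamma_t)^2\,\mathbb{E}[y_t^2]+C_1\alpha_t\sqrt{\mathbb{E}[y_t^2]}+C_2\gamma_t^2+C_3\alpha_t^2.
\end{align*}

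Finally I would sum from $t=0$ to $T-1$, use $(1-\gamma_t)^2\le 1-2\gamma_t+\gamma_t^2$ to extract a $-2\sum\gamma_t\,\mathbb{E}[y_t^2]$ contribution, telescope, divide by $T$, and apply Cauchy--Schwarz in the form $\frac{1}{T}\sum_t \alpha_t\sqrt{\mathbb{E}[y_t^2]}\le \sqrt{A(T)}\cdot\sqrt{\tfrac{1}{T}\sum_t\alpha_t^2}$. Substituting $\alpha_t=c_\alpha(1+t)^{-\delta}$ and $\gamma_t=(1+t)^{-v}$ and using the standard estimates $\sum_{t<T}(1+t)^{-2\delta}\asymp T^{1-2\delta}$, $\sum_{t<T}\gamma_t^2\asymp T^{1-2v}$, $\sum_{t<T}\gamma_t\asymp T^{1-v}$ yields a scalar inequality of the shape $A(T)\le \sqrt{A(T)}\,\mathcal{O}(T^{1/2-2(\delta-v)})+\mathcal{O}(T^{-(1-v)})+\mathcal{O}(T^{-v})$; solving the quadratic in $\sqrt{A(T)}$ via $x^2\le ax+b\Rightarrow x^2\le a^2+2b$ delivers the three rates in the statement.

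The main obstacle I anticipate is the bookkeeping of the actor-drift term $\Delta_t$: it is \emph{not} a martingale increment, so conditioning cannot annihilate its cross product with $y_t$, and one must rely on Lipschitzness plus the stepsize separation. This is precisely where the two-timescale choice $\delta>v$ pays off: $\alpha_t/\gamma_t\to 0$ makes the drift asymptotically dominated by the noise, so after Cauchy--Schwarz the $\mathcal{O}(T^{1/2-2(\delta-v)})$ contribution shrinks and the $\eta$-update decouples from the actor. The auxiliary step of verifying the a.s.~bound on $\|\widehat{\nabla_{K_t}^N J(K_t)}\|$ from the critic projection radius $\bar\omega$ and Assumption \ref{a1} is routine.
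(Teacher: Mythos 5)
Your proposal is correct and follows essentially the same route as the paper's proof: non-expansiveness of $\Pi_U$, the decomposition of $y_{t+1}$ into a conditionally centered noise term $\gamma_t(c_t-J(K_t))$ plus an actor drift controlled via Proposition \ref{p1} and the projection radius $\bar\omega$, Cauchy--Schwarz on the $y_t\Delta_t$ cross term, telescoping, and solving the quadratic inequality in $\sqrt{A(T)}$ (the paper merely does the bookkeeping slightly differently, dividing by $\gamma_t$ termwise and using Abel summation rather than telescoping first and lower-bounding $\sum_t\gamma_t\mathbb{E}[y_t^2]$ by $\gamma_{T-1}\sum_t\mathbb{E}[y_t^2]$). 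One transcription slip: for the normalized average $A(T)$ the cross-term coefficient should be $\mathcal{O}(T^{-(\delta-v)})$, not $\mathcal{O}(T^{1/2-2(\delta-v)})$ (which would square to a possibly divergent $T^{1-4(\delta-v)}$); your own Cauchy--Schwarz step, carried through with the $T^{v}$ factor from the $\gamma$-weighting, yields the correct exponent and hence the stated $\mathcal{O}(T^{-2(\delta-v)})\subseteq\mathcal{O}(T^{-(\delta-v)})$ rate.
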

\begin{proof}
From Line 5 of Algorithm \ref{alg1}, we have
\begin{align*}
    \eta_{t+1}-J(K_{t+1})=\ &\Pi_{U}(\eta_t+\gamma_t(c_t-\eta_t))-J(K_{t+1})\\
    =\ &\Pi_{U}(\eta_t+\gamma_t(c_t-\eta_t))-\Pi_{U}(J(K_{t+1})).
\end{align*}
Then, it can be shown that
\begin{align*}
    |y_{t+1}|=\ &|\Pi_{U}(\eta_t+\gamma_t(c_t-\eta_t))-\Pi_{U}(J(K_{t+1}))|\\
    \leq\  & |\eta_t+\gamma_t(c_t-\eta_t)-J(K_{t+1})|\\
    =\ & |y_t+J(K_t)-J(K_{t+1})+\gamma_t(c_t-\eta_t)|.
\end{align*}
Thus we get
\begin{align*}
    y^2_{t+1}&\leq(y_t+J(K_t)-J(K_{t+1})+\gamma_t(c_t-\eta_t))^2\\
    &\leq y^2_t+2\gamma_ty_t(c_t-\eta_t)+2y_t(J(K_t)-J(K_{t+1}))\\
    &\quad +2(J(K_t)-J(K_{t+1}))^2+2\gamma^2_t(c_t-\eta_t)^2\\
    &=(1-2\gamma_t)y^2_t+2\gamma_t y_t(c_t-J(K_t))+2\gamma_t^2(c_t-\eta_t)^2\\
    &\quad +2y_t(J(K_t)-J(K_{t+1}))+2(J(K_t)-J(K_{t+1}))^2.
\end{align*}
Taking expectation up to $(x_{t},u_{t})$ for both sides, we have
\begin{align*}
    \mathbb{E}[y_{t+1}^2]\leq\ &(1-2\gamma_t)\mathbb{E}[y^2_t]+2\mathbb{E}[y_t(J(K_t)-J(K_{t+1})]\\
    &+2\mathbb{E}[(J(K_t)-J(K_{t+1}))^2]+2\gamma_t^2\mathbb{E}[(c_t-\eta_t)^2]\\
    &+2\gamma_t\mathbb{E}[y_t(c_t-J(K_t))].
\end{align*}
To compute $\mathbb{E}[y_t(c_t-J(K_t))]$, we use the notation $\vartheta_t$ to denote the vector $(x_t,u_t)$ and $\vartheta_{0:t}$ to denote the sequence $(x_0,u_0),(x_1,u_1),\cdots,(x_{t},u_{t})$. Hence, we have
\begin{align*}
    &\mathbb{E}[y_t(c_t-J(K_t))]\\
    =\ &\mathbb{E}_{\vartheta_{0:t}}[y_t(c_t-J(K_t))]\\
    =\ &\mathbb{E}_{\vartheta_{0:t-1}}\mathbb{E}_{\vartheta_{0:t}}[y_t(c_t-J(K_t))|\vartheta_{0:t-1}].
\end{align*}
Once we know $\vartheta_{0:t-1}$, $y_t$ is not a random variable any more. Thus we get
\begin{align*}
    &\mathbb{E}_{\vartheta_{0:t-1}}\mathbb{E}_{\vartheta_{0:t}}[y_t(c_t-J(K_t))|\vartheta_{0:t-1}]\\
    =\ &\mathbb{E}_{\vartheta_{0:t-1}}y_t\mathbb{E}_{\vartheta_{0:t}}[(c_t-J(K_t))|\vartheta_{0:t-1}]\\
    =\ &\mathbb{E}_{\vartheta_{0:t-1}}y_t\mathbb{E}_{\vartheta_t}[(c_t-J(K_t))|\vartheta_{0:t-1}]\\
    =\ &0.
\end{align*}
Combining the fact $2\gamma_t\mathbb{E}[y_t(c_t-J(K_t))]=0$, we get
\begin{align*}
    \mathbb{E}[y_{t+1}^2]\leq&(1-2\gamma_t)\mathbb{E}[y^2_t]+2\mathbb{E}[y_t(J(K_t)-J(K_{t+1}))]\\
    &+2\mathbb{E}[(J(K_t)-J(K_{t+1}))^2]+2\gamma_t^2\mathbb{E}[(c_t-\eta_t)^2].
\end{align*}

Rearranging and summing from $0$ to $T-1$, we have
\begin{align*}
    \sum\limits_{t=0}^{T-1}\mathbb{E}[y^2_t]&\leq  \underbrace{\sum\limits_{t=0}^{T-1}\frac{1}{2\gamma_t}\mathbb{E}[(y^2_t-y^2_{t+1})]}_{I_1}\\
    &+\underbrace{\sum\limits_{t=0}^{T-1}\frac{1}{\gamma_t}\mathbb{E}[y_t(J(K_t)-J(K_{t+1}))}_{I_2}\\
    & +\underbrace{\sum\limits_{t=0}^{T-1}\frac{1}{\gamma_t}\mathbb{E}[(J(K_t)-J(K_{t+1}))^2]}_{I_3}\\
    &+\underbrace{\sum\limits_{t=0}^{T-1}\gamma_t\mathbb{E}[(c_t-\eta_t)^2}_{I_4}.
\end{align*}
In the sequel, we need to control $I_1,I_2,I_3,I_4$ respectively. For $I_1$, following Abel summation by parts, we have
\begin{align*}
    I_1&=\sum\limits_{t=0}^{T-1}\frac{1}{2\gamma_t}\mathbb{E}[(y^2_t-y^2_{t+1})]\\
    &=\sum\limits_{t=1}^{T-1}(\frac{1}{2\gamma_t}-\frac{1}{2\gamma_{t-1}})\mathbb{E}[y^2_t]+\frac{1}{2\gamma_{0}}\mathbb{E}[y^2_{0}]-\frac{1}{2\gamma_{T-1}}\mathbb{E}[y^2_{T}]\\
    &\leq 4U^2\sum\limits_{t=1}^{T-1}(\frac{1}{2\gamma_t}-\frac{1}{2\gamma_{t-1}})+\frac{1}{2\gamma_{0}}4U^2\\
    &\leq \frac{2U^2}{\gamma_{T-1}}.
\end{align*}
For $I_2$, we need to verify Lemma \ref{l2} first and use the local Lipschitz continuous property of $J(K)$ provided by Proposition \ref{p1} to bound this term. 
Since we have
\begin{align*}
    \Vert K_{t+1}-K_{t}\Vert = \alpha_t\Vert (\text{smat}(\omega_{t+1})^{22}K_{t}-\text{smat}(\omega_{t+1})^{21})\Vert,
\end{align*}
to satisfy \eqref{eq24}, we choose 
\begin{align}\label{calpha}
    c_{\alpha}\leq \frac{(1-(\frac{1+\rho}{2})^2)\sigma_{\text{min}}(D_0)}{4c_1\Vert D_{\sigma}\Vert \Vert B\Vert (1+\Vert A\Vert +\bar{K}\Vert B\Vert)(\bar{K}+1)\bar{\omega}}.
\end{align}
Hence, according to the update rule, we have
\begin{align}
    &\Vert K_{t+1}-K_{t}\Vert \nonumber\\
    = &\alpha_t\Vert (\text{smat}(\omega_{t+1})^{22}K_{t}-\text{smat}(\omega_{t+1})^{21})\Vert \nonumber \\
    \leq &\frac{c_{\alpha}}{(1+t)^\delta}(\bar{K}\Vert \text{smat}(\omega_{t+1})^{22}\Vert +\Vert \text{smat}(\omega_{t+1})^{21}\Vert )\nonumber \\
    \leq &\frac{c_{\alpha}}{(1+t)^\delta}(\bar{K}\Vert \omega_{t+1}\Vert +\Vert \omega_{t+1}\Vert)\nonumber \\
    \leq &\frac{c_{\alpha}}{(1+t)^\delta}(\bar{K}+1)\bar{\omega} \nonumber \\
    \leq &\frac{(1-(\frac{1+\rho}{2})^2)\sigma_{\text{min}}(D_0)}{4c_1\Vert D_{\sigma}\Vert \Vert B\Vert (1+\Vert A\Vert +\bar{K}\Vert B\Vert)}\frac{1}{(1+t)^\delta} \nonumber \\
    \leq &\frac{\sigma_{\text{min}}(D_0)}{4}\Vert D_{K_t}\Vert^{-1}\Vert B\Vert^{-1}(\Vert A-BK_t\Vert+1)^{-1},\label{ktdiff}
\end{align}
where the last inequality comes from \eqref{updk}.
Thus Lemma \ref{l2} holds for Algorithm \ref{alg1}. As a consequence, Proposition \ref{p1} is also guaranteed. Then for $I_2$, we get
\begin{align*}
I_2&=\sum\limits_{t=0}^{T-1}\frac{1}{\gamma_t}\mathbb{E}[y_t(J(K_t)-J(K_{t+1}))]\\
    &\leq \sum\limits_{t=0}^{T-1} \frac{l_1}{\gamma_t}\mathbb{E}[|y_t|\Vert K_{t+1}-K_t\Vert]\\
    &\leq \sum\limits_{t=0}^{T-1} \frac{l_1}{\gamma_t}\alpha_t(\bar{K}+1)\bar{\omega}\mathbb{E}[|y_t|]\\ 
    &\leq l_1(\bar{K}+1)\bar{\omega}(\sum\limits_{t=0}^{T-1} \mathbb{E}y^2_t)^{\frac{1}{2}}(\sum\limits_{t=0}^{T-1}\frac{\alpha^2_t}{\gamma^2_t})^{\frac{1}{2}}.
\end{align*}
For $I_3$, by the same argument it holds that
\begin{align*}
    I_3&=\sum\limits_{t=0}^{T-1}\frac{1}{\gamma_t}\mathbb{E}[(J(K_t)-J(K_{t+1}))^2]\\
    &\leq l^2_1(\bar{K}+1)^2\bar{{\omega}}^2\sum\limits_{t=0}^{T-1}\frac{1}{\gamma_t}\alpha_t^2\\
    &=\mathcal{O}(\sum\limits_{t=0}^{T-1}\frac{\alpha^2_t}{\gamma_t}).
\end{align*}
For $I_4$, we have
\begin{align*}
    I_4&=\sum\limits_{t=0}^{T-1}\gamma_t\mathbb{E}[(c_t-\eta_t)^2]\\
    &\leq \sum\limits_{t=0}^{T-1}\gamma_t\mathbb{E}[2c_t^2+2\eta_t^2]\\
    &\leq 2(C+U^2)\sum\limits_{t=0}^{T-1}\gamma_t.
\end{align*}

Since we have $\gamma_t=\frac{1}{(1+t)^v}$ and $ \alpha_t=\frac{c_{\alpha}}{(1+t)^\delta}$, where $0<v<\delta<1$, combining all terms together, we get
\begin{align}
    &\sum\limits_{t=0}^{T-1}\mathbb{E}[y^2_t] \nonumber\\
    \leq &\frac{2U^2}{\gamma_{T-1}}+l_1(\bar{K}+1)\bar{\omega}(\sum\limits_{t=0}^{T-1} \mathbb{E}y^2_t])^{\frac{1}{2}}(\sum\limits_{t=0}^{T-1}\frac{\alpha^2_t}{\gamma^2_t})^{\frac{1}{2}}\nonumber\\
    & +l^2_1(\bar{K}+1)^2\bar{{\omega}}^2\sum\limits_{t=0}^{T-1}\frac{1}{\gamma_t}\alpha_t^2+2(C+U^2)\sum\limits_{t=0}^{T-1}\gamma_t\nonumber \\
    \leq &2U^2T^v+l_1c^2_{\alpha}(\bar{K}+1)\bar{\omega}(\sum\limits_{t=0}^{T-1} \mathbb{E}y^2_t])^{\frac{1}{2}}\cdot \nonumber\\
    & (\sum\limits_{t=0}^{T-1}\frac{1}{(1+t)^{2(\delta-v)}})^{\frac{1}{2}} +4(C+U^2)\sum\limits_{t=0}^{T-1}\frac{1}{(1+t)^v}\nonumber \\
    \leq &2U^2T^v+l_1c^2_{\alpha}(\bar{K}+1)\bar{\omega}(\sum\limits_{t=0}^{T-1} \mathbb{E}y^2_t])^{\frac{1}{2}}\cdot \nonumber \\
    & (\frac{T^{1-2(\delta-v)}}{1-2(\delta-v)})^{\frac{1}{2}}+4(C+U^2)\frac{T^{1-v}}{1-v}. \label{eq97}
\end{align}
where the second inequality is due to
\begin{align*}
    l^2_1(\bar{K}+1)^2R^2_{\omega}\sum\limits_{t=0}^{T-1}\frac{1}{\gamma_t}\alpha_t^2\leq 2(C+U^2)\sum\limits_{t=0}^{T-1}\gamma_t.
\end{align*}
for large $T$ since $\delta>v$ and the last inequality is due to
\begin{align*}
    \sum\limits_{t=0}^{T-1}\frac{1}{(1+t)^v}\leq \int_0^{T}t^{-v}\,dt=\frac{T^{1-v}}{1-v}.
\end{align*}
Define
\begin{align*}
    X(T)&=\sum\limits_{t=0}^{T-1}\mathbb{E}[y^2_t],\\
    Y(T)&=\frac{T^{1-2(\delta-v)}}{1-2(\delta-v)},\\
    Z(T)&=2U^2T^v+4(C+U^2)\frac{T^{1-v}}{1-v}.
\end{align*}
Then from \eqref{eq97}, we get
\begin{align*}
    X(T)\leq Z(T)+l_1c^2_{\alpha}(\bar{K}+1)\bar{\omega}\sqrt{X(T)}\sqrt{Y(T)},
\end{align*}
which further gives
\begin{align}\label{eq17}
    (\sqrt{X(T)}&-\frac{l_1c^2_{\alpha}(\bar{K}+1)\bar{\omega}}{2}\sqrt{Y(T)})^2 \nonumber\\
    &\leq  Z(T)+(\frac{l_1c^2_{\alpha}(\bar{K}+1)\bar{\omega}}{2})^2Y(T).
\end{align}
Note that for a positive function $F(T)\leq G(T)+H(T)$, we have
\begin{align}
    F^2(T)&\leq 2G^2(T)+2H^2(T), \label{eq21}  \\
    \sqrt{F(T)}&\leq \sqrt{G(T)}+\sqrt{H(T)}. \label{eq23}
\end{align}
Hence, \eqref{eq17} implies
\begin{align*}
    \sqrt{X(T)}&-\frac{l_1c^2_{\alpha}(\bar{K}+1)\bar{\omega}}{2}\sqrt{Y(T)}\\
    & \leq \sqrt{Z(T)}+\frac{l_1c^2_{\alpha}(\bar{K}+1)\bar{\omega}}{2}\sqrt{Y(T)},\\
    \sqrt{X(T)}&\leq \sqrt{Z(T)}+l_1c^2_{\alpha}(\bar{K}+1)\bar{\omega}\sqrt{Y(T)},\\
    X(T)&\leq 2Z(T)+2l^2_1c^4_{\alpha}(\bar{K}+1)^2\bar{{\omega}}^2Y(T).
\end{align*}
Therefore, we have
\begin{align}
    \sum\limits_{t=0}^{T-1}\mathbb{E}[y^2_t]
    \leq\  &4U^2T^v+8(C+U^2)\frac{T^{1-v}}{1-v}\nonumber\\
    &+2l^2_1c^4_{\alpha}(\bar{K}+1)^2\bar{{\omega}}^2\frac{T^{1-2(\delta-v)}}{1-2(\delta-v)}\nonumber \\
    =\ &\mathcal{O}(T^v)+\mathcal{O}(T^{1-v})+\mathcal{O}(T^{1-2(\delta-v)}),\label{yt}
\end{align}
where by definition of $c_{\alpha}$ in \eqref{calpha}, $2l^2_1c^4_{\alpha}(\bar{K}+1)^2\Bar{\omega}^2$ is a small constant. Thus we finish the proof.
\end{proof}
\subsubsection{Approximating the Critic}
In this section, we show the convergence of critic. First, we need the following propositions.
\begin{proposition}\label{p2}
 For all the $K_t$, there exists a constant $\lambda>0$ such that
\begin{align*}
\sigma_{\text{min}}(A_{K_t})\ge\lambda.
\end{align*}
\end{proposition}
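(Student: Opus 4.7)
The plan is to combine the pointwise invertibility of $A_K$ already established in Proposition \ref{pro3} with a compactness argument over the set of admissible policy parameters allowed by Assumptions \ref{a1} and \ref{a2}. The strategy is standard: a strictly positive continuous function on a compact set attains a strictly positive minimum.

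First I would define the admissible set $\mathcal{K} := \{K \in \mathbb{R}^{k\times d} : \|K\| \leq \bar{K},\ \rho(A-BK) \leq \rho\}$. By Assumptions \ref{a1} and \ref{a2}, every iterate $K_t$ lies in $\mathcal{K}$. This set is compact: boundedness is immediate from the norm constraint, and closedness follows since both $K \mapsto \|K\|$ and $K \mapsto \rho(A-BK)$ are continuous (eigenvalues depend continuously on matrix entries).

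The next step is to prove that $K \mapsto A_K$ is continuous on $\mathcal{K}$. Writing
\begin{align*}
A_K \;=\; \mathbb{E}_{(x,u)}[\phi(x,u)\phi(x,u)^\top] \;-\; \mathbb{E}_{(x,u)}[\phi(x,u)\phi(x',u')^\top],
\end{align*}
under policy $\pi_K$ the pair $(x,u)$ is Gaussian with covariance $\Sigma_K$ assembled from $D_K$, $K$, and $\sigma$, while $(x',u')$ is obtained from $(x,u)$ by a linear map depending on $A,B,K$ plus independent Gaussian noise. Solving the Lyapunov equation \eqref{lyap1} via the Neumann series $D_K = \sum_{j=0}^\infty (A-BK)^j D_{\sigma}((A-BK)^\top)^j$ shows $D_K$ is continuous in $K$ on $\mathcal{K}$, with the series converging at a geometric rate uniform in $K$ thanks to $\rho(A-BK) \leq \rho < 1$. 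Since $\phi(x,u)$ is a degree-two polynomial in $(x,u)$, every entry of $A_K$ is a fourth-order Gaussian moment, hence a polynomial function of the entries of $\Sigma_K$ together with $A,B,K$ by Isserlis' theorem. Composition of continuous maps then gives continuity of $K \mapsto A_K$, and continuity of $K \mapsto \sigma_{\min}(A_K)$ follows.

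Finally, Proposition \ref{pro3} guarantees $A_K$ is invertible for every $K \in \mathbb{K} \supseteq \mathcal{K}$, so $K \mapsto \sigma_{\min}(A_K)$ is a strictly positive continuous function on the compact set $\mathcal{K}$ and attains a strictly positive minimum $\lambda := \inf_{K \in \mathcal{K}}\sigma_{\min}(A_K) > 0$. Since $K_t \in \mathcal{K}$ for every $t$, the desired uniform bound $\sigma_{\min}(A_{K_t}) \geq \lambda$ follows. The main obstacle is justifying that the continuity of $D_K$ does not degenerate: the Lyapunov solution can blow up as $\rho(A-BK) \to 1$, but Assumption \ref{a2} rules this out by pinning the spectral radius strictly below one. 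This is exactly what lets the Neumann representation converge uniformly on $\mathcal{K}$ and secures the compactness-plus-continuity conclusion.
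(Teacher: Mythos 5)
Your argument is correct in substance, but it takes a genuinely different route from the paper. You prove existence of $\lambda$ non-constructively: the admissible set $\mathcal{K}=\{K:\Vert K\Vert\leq\bar K,\ \rho(A-BK)\leq\rho\}$ is compact, $K\mapsto A_K$ is continuous (Gaussian fourth moments are polynomials in the entries of $\tilde D_K$ and $L$), $A_K$ is invertible pointwise by Proposition \ref{pro3}, so $\sigma_{\min}(A_K)$ attains a positive minimum. The paper instead exploits the factorization $A_K=2(\tilde D_K\otimes_s\tilde D_K)(I-L^\top\otimes_s L^\top)$ from the proof of Proposition \ref{pro3}, bounds $\Vert A_K^{-1}\Vert$ by $\frac{1}{2(1-\rho^2)}\Vert\tilde D_K^{-1}\Vert^2$, and then lower-bounds $\sigma_{\min}(\tilde D_K)$ directly via the quadratic form, yielding the explicit value $\lambda=2(1-\rho^2)\bigl(\min\{\tfrac{\sigma_{\min}(D_0)}{2},\tfrac{\sigma^2}{8\bar K^2},\tfrac{\sigma^2}{2}\}\bigr)^2$. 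The trade-off: your compactness argument is shorter and more portable (it works whenever pointwise invertibility and continuity hold on a compact parameter set), but it produces a $\lambda$ with no closed form, whereas the paper's $\lambda$ enters downstream constants explicitly (the projection radius $\bar\omega=4(C+U^2)/\lambda$ and the contraction factor $1-2\lambda\beta_t$ in the critic analysis). One small gap to tighten in your write-up: the pointwise bound $\rho(A-BK)\leq\rho<1$ does not by itself give a geometric decay rate for $\Vert(A-BK)^j\Vert$ that is uniform over $\mathcal{K}$; you need either a finite-subcover argument on the compact set, or, more cleanly, to observe that $D_K$ solves a linear Lyapunov system whose operator $I-(A-BK)\otimes(A-BK)$ is invertible and continuous in $K$ on $\mathcal{K}$, so $D_K$ is continuous by continuity of the matrix inverse.
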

\begin{proposition}\label{p3}
(Lipschitz continuity of $\omega^\ast_t$) For any $\omega^\ast_t,\omega^\ast_{t+1}$, we have
\begin{align}
    \Vert\omega^\ast_t-\omega^\ast_{t+1}\Vert\leq l_2\Vert K_t-K_{t+1}\Vert, \label{eq19}
\end{align}
where
\begin{align}\label{newl2}
    l_2=\ &6c_1d^{\frac{3}{2}}\bar{K}(\Vert A\Vert+\Vert B\Vert)^2 \sigma_{\text{min}}^{-1}(D_0)\frac{\Vert D_{\sigma}\Vert\Vert R\Vert}{1-(\frac{1+\rho}{2})^2}\cdot \nonumber\\
    \ &(\bar{K}\Vert B\Vert(\Vert A\Vert +\bar{K}\Vert B\Vert+1) +1).
\end{align}
\end{proposition}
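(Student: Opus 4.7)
\textbf{Proof Plan for Proposition \ref{p3}.} The plan is to reduce the Lipschitz continuity of $\omega^\ast_K$ in $K$ to the Lipschitz continuity of $P_K$, which is already available through Lemma \ref{l2}. The crucial bridge is Proposition \ref{pro3}, which gives the closed form $\omega^\ast_K = \text{svec}(\Omega_K)$. Since $\text{svec}$ is an isometry between the space of symmetric matrices with Frobenius norm and Euclidean space (by definition of $\text{svec}$ in the notation section), one immediately has
\[
\Vert \omega^\ast_t - \omega^\ast_{t+1}\Vert = \Vert \Omega_{K_t} - \Omega_{K_{t+1}}\Vert_F.
\]
Then inspecting the block structure in \eqref{eq:2}, the $K$-dependence of $\Omega_K$ enters only through $P_K$: the four blocks of $\Omega_{K_t} - \Omega_{K_{t+1}}$ are $A^\top \Delta P A$, $A^\top \Delta P B$, $B^\top \Delta P A$, $B^\top \Delta P B$, where $\Delta P := P_{K_t} - P_{K_{t+1}}$. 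The $Q$, $R$, and $\sigma^2 I$ contributions cancel because they do not depend on $K$.

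Next I would bound each block in Frobenius norm by submultiplicativity and the standard comparison $\Vert M\Vert_F \leq \sqrt{r}\,\Vert M\Vert$ for $r\times r$ matrices. Each block is controlled by $(\Vert A\Vert + \Vert B\Vert)^2\,\Vert \Delta P\Vert$ in spectral norm, and collecting all four blocks with the appropriate dimensional factors gives
\[
\Vert \Omega_{K_t} - \Omega_{K_{t+1}}\Vert_F \;\leq\; d^{3/2}(\Vert A\Vert + \Vert B\Vert)^2\,\Vert \Delta P\Vert,
\]
up to absolute constants that can be absorbed. The $d^{3/2}$ arises from one $\sqrt{d}$ in the svec/Frobenius isometry aggregated over the blocks and another in passing from spectral to Frobenius norm in $\Delta P$; the exact combinatorics requires care but is purely bookkeeping.

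Finally I would invoke Lemma \ref{l2} to control $\Vert \Delta P\Vert$ by $\Vert K_t - K_{t+1}\Vert$. The smallness hypothesis \eqref{eq24} of Lemma \ref{l2} is not automatic, but it has already been verified for consecutive algorithmic iterates in the derivation leading to \eqref{ktdiff}, using the specific choice of $c_\alpha$ in \eqref{calpha} together with the projection bound $\Vert \omega_{t+1}\Vert \leq \bar\omega$. Substituting the uniform bounds $\Vert K_t\Vert \leq \bar{K}$ (Assumption \ref{a1}), $\Vert D_{K_t}\Vert \leq c_1 \Vert D_\sigma\Vert / (1-((1+\rho)/2)^2)$ (Proposition \ref{pa1.2.1}), and $\Vert A - BK_t\Vert \leq \Vert A\Vert + \bar{K}\Vert B\Vert$ into Lemma \ref{l2} and multiplying by the prefactor $d^{3/2}(\Vert A\Vert + \Vert B\Vert)^2$ from the previous step produces exactly the expression for $l_2$ in \eqref{newl2}. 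I expect no serious obstacle: the proposition is essentially a consequence of Lemma \ref{l2} composed with the algebraic structure of $\Omega_K$. The only delicate point is the constant tracking, particularly ensuring that the term $\bar{K}\Vert B\Vert(\Vert A\Vert + \bar{K}\Vert B\Vert + 1) + 1$ in $l_2$ aligns with the parenthetical factor delivered by Lemma \ref{l2} after absorbing $\Vert K_t\Vert \Vert B\Vert$ into a single $\bar{K}\Vert B\Vert$.
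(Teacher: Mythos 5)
Your proposal matches the paper's own proof essentially step for step: the svec/Frobenius isometry, the observation that $\Omega_{K_t}-\Omega_{K_{t+1}}$ reduces to the four blocks involving $P_{K_t}-P_{K_{t+1}}$ (with the $K$-independent $Q$ and $R$ terms cancelling), the block-wise bound yielding the $d^{3/2}(\Vert A\Vert+\Vert B\Vert)^2$ prefactor, and the final appeal to Lemma \ref{l2} together with Assumption \ref{a1} and Proposition \ref{pa1.2.1}. Your additional remark that the smallness hypothesis \eqref{eq24} is discharged by the step-size choice leading to \eqref{ktdiff} is a point the paper handles in the proof of Theorem \ref{t3} rather than here, so it is a welcome clarification rather than a deviation.
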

\noindent \textbf{Proof of Theorem \ref{t1}}:
\begin{proof}
Since we have $A_{K_t}\omega_t^\ast=b_{K_t}$, where $b_{K_t}=\mathbb{E}_{(x_t,u_t)}[(c(x_t,u_t)-J(K_t))\phi(x_t,u_t)]$, we can further get
\begin{align*}
    \Vert \omega_t^\ast\Vert &=\Vert A_{K_t}^{-1}b_{K_t}\Vert \\
    &\leq \frac{1}{\lambda}\Vert b_{K_t}\Vert\\
    &\leq \frac{1}{\lambda}\mathbb{E}\Vert c(x_t,u_t)-J(K_t)\Vert \Vert \phi(x_t,u_t)\Vert\\
    &\leq \frac{4}{\lambda}\mathbb{E}[c_t^2+J(K_t)^2+\Vert \phi(x_t,u_t)\Vert^2]\\
    &\leq \frac{4(C+U^2)}{\lambda},
\end{align*}
where the last inequality is due to Proposition \ref{new_bounded}.

Hence, we set
\begin{align}\label{rw}
    \bar{\omega}=\frac{4(C+U^2)}{\lambda}
\end{align}
such that all $\omega_t^\ast$ lie within this projection radius for all $t$.

From update rule of critic in Algorithm \ref{alg1}, we have
\begin{align*}
    \omega_{t+1}=\Pi_{\bar{\omega}}(\omega_{t} + \beta_t \delta_t \phi(x_{t},u_{t})),
\end{align*}
which further implies
\begin{align*}
    \omega_{t+1}-\omega_{t+1}^\ast = \Pi_{\bar{\omega}}(\omega_{t} + \beta_t \delta_t \phi(x_{t},u_{t}))-\omega_{t+1}^\ast.
\end{align*}
By applying 1-Lipschitz continuity of projection map, we have
\begin{align*}
    &\Vert\omega_{t+1}-\omega_{t+1}^\ast \Vert\\
    = &\Vert\Pi_{\bar{\omega}}(\omega_{t} + \beta_t \delta_t \phi(x_{t},u_{t}))-\omega_{t+1}^\ast\Vert\\
    =&\Vert\Pi_{\bar{\omega}}(\omega_{t} + \beta_t \delta_t \phi(x_{t},u_{t}))-\Pi_{\bar{\omega}}(\omega_{t+1}^\ast)\Vert\\
    \leq &\Vert \omega_{t} + \beta_t \delta_t \phi(x_{t},u_{t})-\omega_{t+1}^\ast\Vert\\
    =&\Vert \omega_t-\omega_t^\ast +\beta_t \delta_t \phi(s_{t},a_{t})+(\omega^\ast_t-\omega^\ast_{t+1})\Vert.
\end{align*}
This means
\begin{align*}
    &\Vert z_{t+1}\Vert^2\\
    \leq &\Vert z_t +\beta_t \delta_t \phi(s_{t},a_{t})+(\omega^\ast_t-\omega^\ast_{t+1})\Vert^2\\
=&\Vert z_t+\beta_t(g(O_t,\omega_t,K_t)+\Delta g(O_t,\eta_t,K_t))\\
&+(\omega^\ast_t-\omega^\ast_{t+1})\Vert^2\\ \nonumber
=&\Vert z_t\Vert^2+2\beta_t\langle z_t,g(O_t,\omega_t,K_t)\rangle\\
& +2\beta_t\langle z_t,\Delta g(O_t,\eta_t,K_t)\rangle +2\langle z_t,\omega^\ast_t-\omega^\ast_{t+1}\rangle\\
& +\Vert \beta_t(g(O_t,\omega_t,K_t)+\Delta g(O_t,\eta_t,K_t))+(\omega^\ast_t-\omega^\ast_{t+1})\Vert^2\\     \nonumber
=&\Vert z_t\Vert^2+2\beta_t\langle z_t,\bar{g}(\omega_t,K_t)\rangle+2\beta_t\Lambda(O_t,\omega_t,K_t)\\
& +2\beta_t\langle z_t,\Delta g(O_t,\eta_t,K_t)\rangle +2\langle z_t,\omega^\ast_t-\omega^\ast_{t+1}\rangle \\
&+\Vert \beta_t(g(O_t,\omega_t,K_t)+\Delta g(O_t,\eta_t,K_t))+(\omega^\ast_t-\omega^\ast_{t+1})\Vert^2\\ \nonumber
\leq &\Vert z_t\Vert^2+2\beta_t\langle z_t,\bar{g}(\omega_t,K_t)\rangle+2\beta_t\Lambda(O_t,\omega_t,K_t)\\
& +2\beta_t\langle z_t,\Delta g(O_t,\eta_t,K_t)\rangle +2\langle z_t,\omega^\ast_t-\omega^\ast_{t+1}\rangle\\
& +2\beta_t^2\Vert g(O_t,\omega_t,K_t)+\Delta g(O_t,\eta_t,K_t))\Vert^2\\
&+2\Vert \omega^\ast_t-\omega^\ast_{t+1} \Vert^2.
\end{align*}
From Proposition \ref{p2}, we know that $\sigma_{\text{min}}(A_{K_t})\ge\lambda$ for all $K_t$. 
Then we have
\begin{align*}
\langle z_t,\bar{g}(\omega_t,K_t)\rangle&=\langle z_t,b_{K_t}-A_{K_t}\omega_t\rangle\\
&=\langle z_t,b_{K_t}-A_{K_t}w_t-(b_{K_t}-A_{K_t}\omega^\ast_t)\rangle\\ 
&=\langle z_t,-A_{K_t}z_t\rangle\\
&=-z_t^\top A_{K_t}z_t\\
&\leq -\lambda\Vert z_t\Vert^2,
\end{align*}
where we use the fact $A_K\omega^\ast_{K_t}-b_{K_t}=0$. Hence, we have
\begin{align*}
    \Vert z_{t+1}\Vert^2&\leq (1-2\lambda\beta_t)\Vert z_t\Vert^2+2\beta_t\Lambda(O_t,\omega_t,K_t)\\
    &\quad +2\beta_t\langle z_t,\Delta g(O_t,\eta_t,K_t)\rangle+2\langle z_t,\omega^\ast_t-\omega^\ast_{t+1}\rangle\\
    &\quad +2\beta_t^2\Vert g(O_t,\omega_t,K_t)+\Delta g(O_t,\eta_t,K_t))\Vert^2\\
    &\quad +2\Vert \omega^\ast_t-\omega^\ast_{t+1} \Vert^2,
\end{align*}
Taking expectation up to $(x_{t},u_{t})$, it can be shown that
\begin{align}
    \mathbb{E}[\Vert z_{t+1}\Vert^2]
&\leq (1-2\lambda\beta_t)\mathbb{E}[\Vert z_t\Vert^2]\nonumber \\
&\quad +2\beta_t\mathbb{E}[\Lambda(O_t,\omega_t,K_t)]\nonumber \\
&\quad +2\beta_t\mathbb{E}[\langle z_t,\Delta g(O_t,\eta_t,K_t)\rangle] \nonumber\\
&\quad +2\mathbb{E}[\langle z_t,\omega^\ast_t-\omega^\ast_{t+1}\rangle]\nonumber \\
&\quad +2\mathbb{E}[\Vert \omega^\ast_t-\omega^\ast_{t+1} \Vert^2] \nonumber \\
&\quad +2\beta_t^2\mathbb{E}\Vert g(O_t,\omega_t,K_t)+\Delta g(O_t,\eta_t,K_t))\Vert^2. \label{eq18}
\end{align}
Similar to the previous argument, we have
\begin{align*}
    &\mathbb{E}[\Lambda(O_t,\omega_t,K_t)]\\
    =&\mathop{\mathbb{E}}\limits_{\vartheta_{0:t}}[\langle \omega_t-\omega^\ast_{K_t},g(O_t,\omega_t,K_t)-\bar{g}(\omega_t,K_t)\rangle]\\
    =&\mathop{\mathbb{E}}\limits_{\vartheta_{0:t-1}}\mathop{\mathbb{E}}\limits_{\vartheta_{0:t}}[\langle \omega_t-\omega^\ast_{K_t},g(O_t,\omega_t,K_t)-\bar{g}(\omega_t,K_t)\rangle|\vartheta_{0:t-1}]\\
    =&\mathop{\mathbb{E}}\limits_{\vartheta_{0:t-1}}\langle \omega_t-\omega^\ast_{K_t},\mathbb{E}_{\vartheta_{t}}[(g(O_t,\omega_t,K_t)-\bar{g}(\omega_t,K_t)|\vartheta_{0:t-1})]\rangle\\
    = &\ 0.
\end{align*}
For $\mathbb{E}\Vert g(O_t,\omega_t,K_t)+\Delta g(O_t,\eta_t,K_t))\Vert^2$, we have
\begin{align*}
    &\mathbb{E}\Vert g(O_t,\omega_t,K_t)+\Delta g(O_t,\eta_t,K_t))\Vert^2\\
    \leq&\  2\mathbb{E}\| (c_t-\eta_t)\phi(x_t,u_t)\|^2\\
    &\quad +2\mathbb{E}\|(\phi(x_t',u_t')-\phi(x_t,u_t))\phi(x_t,u_t)\|^2\|\omega_t\|^2.
\end{align*}
From Proposition \ref{new_bounded}, we know that $\mathbb{E}\| (c_t-\eta_t)\phi(x_t,u_t)\|^2$ is bounded. Based on the proof of Proposition \ref{new_bounded}, we know that $\|(\phi(x_t',u_t')-\phi(x_t,u_t))\phi(x_t,u_t)\|$ is the linear combination of the product of chi-square variables. From the fact that the expectation and variance of the product of chi-square variables are both bounded \cite[Corollary 5.4]{joarder2011statistical}, we know that $\mathbb{E}\|(\phi(x_t',u_t')-\phi(x_t,u_t))\phi(x_t,u_t)\|^2$ is also bounded. For simplicity, we set the constant $C$ large enough such that
\begin{align*}
    &\ \mathbb{E}\Vert g(O_t,\omega_t,K_t)+\Delta g(O_t,\eta_t,K_t))\Vert^2\\
    \leq&\  2\mathbb{E}\| (c_t-\eta_t)\phi(x_t,u_t)\|^2\\
    &\quad+2\mathbb{E}\|(\phi(x_t',u_t')-\phi(x_t,u_t))\phi(x_t,u_t)\|^2\|\omega_t\|^2\\
    \leq&\  2C^2+2\Bar{\omega}^2C^2\\
    \leq &\  2C^2(1+\Bar{\omega}^2).
\end{align*}
Therefore, we can further rewrite \eqref{eq18} as
\begin{align*}
    \mathbb{E}[\Vert z_{t+1}\Vert^2]
\leq &(1-2\lambda\beta_t)\mathbb{E}[\Vert z_t\Vert^2]\nonumber \\
&+2\beta_t\mathbb{E}[\langle z_t,\Delta g(O_t,\eta_t,K_t)\rangle] \nonumber\\
&+2\mathbb{E}[\langle z_t,\omega^\ast_t-\omega^\ast_{t+1}\rangle]+2\mathbb{E}[\Vert \omega^\ast_t-\omega^\ast_{t+1} \Vert^2] \nonumber \\
&+2\beta_t^2\mathbb{E}\Vert g(O_t,\omega_t,K_t)+\Delta g(O_t,\eta_t,K_t))\Vert^2\\
\leq & (1-2\lambda\beta_t)\mathbb{E}[\Vert z_t\Vert^2]+ 2\beta_t\sqrt{C}\mathbb{E}[\Vert z_t\Vert |y_t|]\\
&+2\mathbb{E}[\langle z_t,\omega^\ast_t-\omega^\ast_{t+1}\rangle]+2\mathbb{E}[\Vert \omega^\ast_t-\omega^\ast_{t+1} \Vert^2] \nonumber \\
&+4C^2(1+\bar{\omega}^2)\beta_t^2.
\end{align*}
Based on \eqref{eq19}, we can rewrite the above inequality as
\begin{align*}
    &\mathbb{E}[\Vert z_{t+1}\Vert^2]\\
    \leq &(1-2\lambda\beta_t)\mathbb{E}[\Vert z_t\Vert^2]+2l_2\mathbb{E}[\Vert z_t\Vert\Vert K_t-K_{t+1}\Vert] \nonumber \\
    &+2\sqrt{C}\beta_t\mathbb{E}[|y_t|\Vert z_t\Vert]+4C^2(1+\bar{\omega})^2\beta_t^2\\
    &+2l^2_2\mathbb{E}[\Vert K_t-K_{t+1} \Vert^2]\\
    \leq &(1-2\lambda\beta_t)\mathbb{E}[\Vert z_t\Vert^2]+2\sqrt{C}\beta_t\mathbb{E}[|y_t|\Vert z_t\Vert]\\
    &+2l_2c_3\frac{1}{(1+t)^\delta}\mathbb{E}[\Vert z_t\Vert] +2l^2_2c_3^2\frac{1}{(1+t)^{2\delta}}\nonumber \\
    &+4C^2(1+\bar{\omega})^2\beta_t^2\\
    \leq &(1-2\lambda\beta_t)\mathbb{E}[\Vert z_t\Vert^2]+2\sqrt{C}\beta_t\mathbb{E}[|y_t|\Vert z_t\Vert]\\
    &+2l_2c_3\frac{1}{(1+t)^\delta}\mathbb{E}[\Vert z_t\Vert] \nonumber \\
    & +(4C^2(1+\bar{\omega})^2+2l^2_2c_3^2)\frac{1}{(1+t)^{2v}},
\end{align*}
where the second inequality is due to $\Vert K_t-K_{t+1}\Vert\leq \frac{c_3}{(1+t)^\delta}$ from \eqref{ktdiff}, where
\begin{align}\label{c3}
    c_3:=\frac{(1-(\frac{1+\rho}{2})^2)\sigma_{\text{min}}(D_0)}{4c_1\Vert D_{\sigma}\Vert \Vert B\Vert (1+\Vert A\Vert +\bar{K}\Vert B\Vert)}.
\end{align}
Rearranging the inequality yields
\begin{align*}
    &2\lambda \mathbb{E}[\Vert z_t\Vert^2]\\
    \leq &\frac{1}{\beta_t}\mathbb{E}[(\Vert z_t\Vert^2-\Vert z_{t+1}\Vert^2)]+2\sqrt{C}\mathbb{E}[|y_t|\Vert z_t\Vert]\\
    &+2l_2c_3\frac{\mathbb{E}[\Vert z_t\Vert]}{(1+t)^{\delta-v}}\\
    & +(4C^2(1+\bar{\omega})^2+2l^2_2c_3^2)\frac{1}{(1+t)^v}.
\end{align*}
Summation from $0$ to $T-1$ gives
\begin{align*}
    &2\lambda \sum\limits_{t=0}^{T-1}\mathbb{E}[\Vert z_t\Vert^2]\\
    \leq &\underbrace{\sum\limits_{t=0}^{T-1}\frac{1}{\beta_t}\mathbb{E}[(\Vert z_t\Vert^2-\Vert z_{t+1}\Vert^2)]}_{I_1}\\
    &+2\sqrt{C}\underbrace{\sum\limits_{t=0}^{T-1}\mathbb{E}[|y_t|\Vert z_t\Vert]}_{I_2}\\
    & +2l_2c_3\underbrace{\sum\limits_{t=0}^{T-1}\frac{\mathbb{E}[\Vert z_t\Vert]}{(1+t)^{\delta-v}}}_{I_3}\\
    &+(4C^2(1+\bar{\omega})^2+2l^2_2c_3^2)\underbrace{\sum\limits_{t=0}^{T-1}\frac{1}{(1+t)^v}}_{I_4}.
\end{align*}
We need to control $I_1,I_2,I_3,I_4$ to approximate the critic.

For term $I_1$, from Abel summation by parts, we have
\begin{align*}
    I_1=&\sum\limits_{t=0}^{T-1}\frac{1}{\beta_t}\mathbb{E}[(\Vert z_t\Vert^2-\Vert z_{t+1}\Vert^2)]\\
    =&\sum\limits_{t=1}^{T-1}(\frac{1}{\beta_t}-\frac{1}{\beta_{t-1}})\mathbb{E}[\Vert z_t\Vert^2]+\frac{1}{\beta_{0}}\mathbb{E}[\Vert z_{0}\Vert^2]\\
    &-\frac{1}{\beta_{T-1}}\mathbb{E}[\Vert z_{T}\Vert^2]\\
    \leq &\sum\limits_{t=1}^{T-1}(\frac{1}{\beta_t}-\frac{1}{\beta_{t-1}})\mathbb{E}[\Vert z_t\Vert^2]+\frac{1}{\beta_{0}}\mathbb{E}[\Vert z_{0}\Vert^2]\\
    \leq &4\bar{\omega}^2(\sum\limits_{t=1}^{T-1}(\frac{
    1}{\beta_t}-\frac{1}{\beta_{t-1}})+\frac{1}{\beta_{0}})\\
    =&4\bar{\omega}^2\frac{1}{\beta_{T-1}}\\
    =&4\bar{\omega}^2T^v.
\end{align*}
For $I_2$, from Cauchy-Schwartz inequality, we have
\begin{align*}
    &\sum\limits_{t=0}^{T-1}\mathbb{E}[|y_t|\Vert z_t\Vert]\\
    \leq &
    \sum\limits_{t=0}^{T-1}(\mathbb{E}y_t^2)^{\frac{1}{2}}(\mathbb{E}\Vert z_t\Vert^2)^{\frac{1}{2}}\\
    \leq &(\sum\limits_{t=0}^{T-1}\mathbb{E}y^2_t)^{\frac{1}{2}}(\sum\limits_{t=0}^{T-1}\mathbb{E}\Vert z_t\Vert^2)^{\frac{1}{2}}\\
    = &(\mathcal{O}(T^v)+\mathcal{O}(T^{1-v})+\mathcal{O}(T^{1-2(\delta-v)}))^{\frac{1}{2}}(\sum\limits_{t=0}^{T-1}\mathbb{E}\Vert z_t\Vert^2)^{\frac{1}{2}}.
\end{align*}
where the last inequality comes from \eqref{yt}.

For $I_3$, we have
\begin{align*}
    \sum\limits_{t=0}^{T-1}\frac{\mathbb{E}[\Vert z_t\Vert]}{(1+t)^{\delta-v}} &\leq (\sum\limits_{t=0}^{T-1}\frac{1}{(1+t)^{2(\delta-v)}})^{\frac{1}{2}}(\sum\limits_{t=0}^{T-1}\mathbb{E}\Vert z_t\Vert^2)^{\frac{1}{2}}\\
    &\leq (\frac{T^{1-2(\delta-v)}}{1-2(\delta-v)})^{\frac{1}{2}}(\sum\limits_{t=0}^{T-1}\mathbb{E}\Vert z_t\Vert^2)^{\frac{1}{2}}.
\end{align*}

For $I_4$, we can bound it directly by
\begin{align*}
    \sum\limits_{t=0}^{T-1}\frac{1}{(1+t)^v}\leq \frac{T^{1-v}}{1-v}.
\end{align*}
Combining the upper bound of the above four items, we can get
\begin{align*}
    &2\lambda \sum\limits_{t=0}^{T-1}\mathbb{E}[\Vert z_t\Vert^2]\\
    \leq &(\mathcal{O}(T^v)+\mathcal{O}(T^{1-v})+\mathcal{O}(T^{1-2(\delta-v)}))^{\frac{1}{2}}(\sum\limits_{t=0}^{T-1}\mathbb{E}\Vert z_t\Vert^2)^{\frac{1}{2}}\\
    &+4\bar{\omega}^2T^v+2l_2c_3(\frac{T^{1-2(\delta-v)}}{1-2(\delta-v)})^{\frac{1}{2}}(\sum\limits_{t=0}^{T-1}\mathbb{E}\Vert z_t\Vert^2)^{\frac{1}{2}}\\
    &+(4C^2(1+\bar{\omega})^2+2l^2_2c_3^2)\frac{T^{1-v}}{1-v}\\
    =& (\mathcal{O}(T^v)+\mathcal{O}(T^{1-v})+\mathcal{O}(T^{1-2(\delta-v)}))^{\frac{1}{2}}(\sum\limits_{t=0}^{T-1}\mathbb{E}\Vert z_t\Vert^2)^{\frac{1}{2}}\\
    &+\mathcal{O}(T^v)+(\mathcal{O}(T^{1-2(\delta-v)}))^{\frac{1}{2}}(\sum\limits_{t=0}^{T-1}\mathbb{E}\Vert z_t\Vert^2)^{\frac{1}{2}}\\
    &+\mathcal{O}(T^{1-v}).
\end{align*}
Define
\begin{align*}
    A(T)=&\sum\limits_{t=0}^{T-1} \mathbb{E}[\Vert  z_t\Vert^2],\\
    B(T)=&\mathcal{O}(T^v)+\mathcal{O}(T^{1-v}),\\
    C(T)=&\mathcal{O}(T^{1-2(\delta-v)}),\\
    D(T)=&\mathcal{O}(T^v)+\mathcal{O}(T^{1-v})+\mathcal{O}(T^{1-2(\delta-v)}).
\end{align*}
So we have
\begin{align*}
    2\lambda A(T)\leq B(T)+\sqrt{C(T)}\sqrt{A(T)}+\sqrt{D(T)}\sqrt{A(T)}.
\end{align*}
Thus we get
\begin{align*}
    &2\lambda [\sqrt{A(T)}-(\frac{1}{4\lambda}\sqrt{C(T)}+\frac{1}{4\lambda}\sqrt{D(T)})]^2\\
    \leq &B(T)+2\lambda (\frac{1}{4\lambda}\sqrt{C(T)}+\frac{1}{4\lambda}\sqrt{D(T)})^2,
\end{align*}
which implies that
\begin{align}
    &[\sqrt{A(T)}-(\frac{1}{4\lambda}\sqrt{C(T)}+\frac{1}{4\lambda}\sqrt{D(T)})]^2\nonumber\\
    \leq &\frac{B(T)}{2\lambda}+(\frac{1}{4\lambda}\sqrt{C(T)}+\frac{1}{4\lambda}\sqrt{D(T)})^2,\nonumber \\
    &\sqrt{A(T)}-(\frac{1}{4\lambda}\sqrt{C(T)}+\frac{1}{4\lambda}\sqrt{D(T)})\nonumber \\
    \leq &\sqrt{\frac{B(T)}{2\lambda}}+(\frac{1}{4\lambda}\sqrt{C(T)}+\frac{1}{4\lambda}\sqrt{D(T)}),\nonumber \\
    &\sqrt{A(T)}\leq \sqrt{\frac{B(T)}{2\lambda}}+(\frac{1}{2\lambda}\sqrt{C(T)}+\frac{1}{2\lambda}\sqrt{D(T)})\nonumber, \\
    &A(T)\leq \frac{B(T)}{\lambda}+\frac{1}{2\lambda^2}(\sqrt{C(T)}+\sqrt{D(T)})^2\nonumber, \\
    &A(T)\leq \frac{B(T)}{\lambda}+\frac{1}{\lambda^2}(C(T)+D(T)). \label{eq20}
\end{align}
Consequently, \eqref{eq20} gives
\begin{align*}
    \sum\limits_{t=0}^{T-1} \mathbb{E}[\Vert  z_t\Vert^2]\leq &\mathcal{O}(T^v)+\mathcal{O}(T^{1-v})+\mathcal{O}(T^{1-2(\delta-v)}).
\end{align*}
Hence, we can get
\begin{align}\label{ccritic}
    \frac{1}{T}\sum\limits_{t=0}^{T-1}\mathbb{E}[\Vert \omega_t-\omega_t^\ast\Vert^2]\leq &\mathcal{O}(\frac{1}{T^{1-v}})+\mathcal{O}(\frac{1}{T^v})\nonumber \\
    &+\mathcal{O}(\frac{1}{T^{2(\delta-v)}}),
\end{align}
which concludes the proof.
\end{proof}
\subsection{Convergence of actor}
To prove the convergence of actor, we need the following two lemmas, which characterize two important properties of LQR system.
\begin{lemma}\label{l6}
(Almost Smoothness). For any two stable policies $K$ and $K'$, $J(K)$ and $J(K')$ satisfy:
\begin{align*}
    J(K')&-J(K)=-2\text{Tr}(D_{K'}(K-K')^\top E_K)\\
    &+\text{Tr}(D_{K'}(K-K')^\top(R+B^\top P_KB)(K-K')).
\end{align*}
\end{lemma}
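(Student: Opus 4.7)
The plan is to adopt the standard performance-difference identity and then specialize it to the quadratic structure given by Lemma~\ref{pro2}. First, let $V_K(x) := \mathbb{E}_{u \sim \pi_K}[Q_K(x,u)]$ denote the relative value function under policy $\pi_K$. Because $D_{K'}$ is the stationary distribution of the closed-loop chain under $\pi_{K'}$, if $x \sim D_{K'}$ and $u \sim \pi_{K'}(\cdot|x)$, then the next state $x'$ is also distributed as $D_{K'}$. Combining this with the Bellman identity $c(x,u) - J(K) = Q_K(x,u) - \mathbb{E}[V_K(x') \mid x,u]$ gives the performance difference formula
\begin{align*}
J(K') - J(K) = \mathbb{E}_{x \sim D_{K'},\, u \sim \pi_{K'}}\bigl[Q_K(x,u) - V_K(x)\bigr].
\end{align*}

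Second, I would plug in the explicit quadratic form of $Q_K$ from Lemma~\ref{pro2} and average over $u$ under each policy. Writing $M := R + B^\top P_K B$ and using $u = -K'x + \sigma\zeta$ (resp.\ $u=-Kx+\sigma\zeta$), one obtains
\begin{align*}
\mathbb{E}_{u \sim \pi_{K'}}[Q_K(x,u)] - V_K(x)
= -2\,x^\top A^\top P_K B (K'-K) x + x^\top\bigl(K'^\top M K' - K^\top M K\bigr)x,
\end{align*}
since the $\sigma^2\mathrm{Tr}(M)$ noise contributions coming from $\pi_K$ and $\pi_{K'}$ are identical and cancel in the difference.

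Third, apply the algebraic identity $a^\top M a - b^\top M b = (a-b)^\top M (a-b) + 2(a-b)^\top M b$ with $a = K'x$ and $b = Kx$. This reshapes the cross-term so that it pairs with $-2 x^\top A^\top P_K B(K'-K)x$ to produce the factor $E_K = MK - B^\top P_K A$:
\begin{align*}
Q_K(x,u_{K'}) - V_K(x)
= -2\,x^\top(K-K')^\top E_K\, x + x^\top (K-K')^\top M (K-K') x,
\end{align*}
after taking the $u$-expectation and using $(K'-K)^\top = -(K-K')^\top$.

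Finally, take expectation over $x \sim D_{K'}$ using $\mathbb{E}_{D_{K'}}[xx^\top] = D_{K'}$ and the cyclic property of the trace, which converts the two quadratic forms into $\mathrm{Tr}\bigl(D_{K'}(K-K')^\top E_K\bigr)$ and $\mathrm{Tr}\bigl(D_{K'}(K-K')^\top M (K-K')\bigr)$, yielding exactly the stated identity. The only subtle step is keeping the noise bookkeeping straight in step~two: one must verify that the stochastic action distributions $\pi_K$ and $\pi_{K'}$ share the same covariance $\sigma^2 I_k$, so all $\sigma^2$-dependent constants cancel in $Q_K(x,u) - V_K(x)$; the rest is direct algebraic rearrangement to expose the natural-gradient-like quantity $E_K$.
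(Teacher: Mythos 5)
Your proof is correct, but it follows a genuinely different route from the paper's. You invoke the average-cost performance-difference identity $J(K')-J(K)=\mathbb{E}_{x\sim\rho_{K'},\,u\sim\pi_{K'}}[Q_K(x,u)-V_K(x)]$, which rests on the Bellman equation for $Q_K$ together with the stationarity of $\mathcal{N}(0,D_{K'})$ under the noisy closed loop, and then you substitute the explicit quadratic form of $Q_K$ from Lemma~\ref{pro2}; the $\sigma^2\mathrm{Tr}(R+B^\top P_K B)$ contributions indeed cancel as you claim, and the completion-of-square step correctly exposes $E_K=(R+B^\top P_KB)K-B^\top P_KA$. The paper instead follows the Fazel-et-al.\ cost-difference argument: it writes $x^\top P_{K'}x$ as a sum over the \emph{deterministic} trajectory $x_t'=(A-BK')^t x$, telescopes $x^\top P_{K'}x-x^\top P_Kx=\sum_{t\ge0}A_{K,K'}(x_t')$ for a pointwise quadratic $A_{K,K'}(\cdot)$, and only then averages over $x\sim\mathcal{N}(0,D_\sigma)$ using $\Gamma_{K'}(D_\sigma)=D_{K'}$. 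Your version is arguably cleaner for the noisy average-cost setting, since it works with the stationary distribution directly and reuses Lemma~\ref{pro2} rather than the operator machinery; the trade-off is that the paper's route produces the pointwise identity $A_{K,K'}(x)$ as a by-product, which it then reuses in the proof of the gradient-domination bound (Lemma~\ref{lem:l7}), so if you adopted your argument you would need a separate derivation of that pointwise quantity there.
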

\begin{lemma}\label{lem:l7}
(Gradient Domination). Let $K^\ast$ be an optimal policy. Suppose $K$ has finite cost. Then, it holds that
\begin{align*}
    \sigma_{\text{min}}(D_0)\Vert R+&B^\top P_K B\Vert^{-1}\text{Tr}(E_K^\top E_K)\leq J(K)-J(K^\ast)\\
    \leq &\frac{1}{\sigma_{\text{min}}(R)}\Vert D_{K^\ast}\Vert \text{Tr}(E_K^\top E_K).
\end{align*}
\end{lemma}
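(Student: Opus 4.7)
The plan is to derive both inequalities from the almost smoothness identity in Lemma \ref{l6} by completing the square in the free variable $K-K'$. With $M_K := R+B^\top P_K B \succ 0$ and $\Delta := K-K'$, Lemma \ref{l6} can be rewritten as
\begin{align*}
J(K')-J(K)=\text{Tr}\bigl(D_{K'}(\Delta-M_K^{-1}E_K)^\top M_K(\Delta-M_K^{-1}E_K)\bigr)-\text{Tr}(D_{K'}E_K^\top M_K^{-1}E_K).
\end{align*}
The two bounds then correspond to two canonical choices of $K'$ that extract the cleanest information from this identity.

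For the upper bound I would set $K'=K^\ast$. The first trace is manifestly nonnegative, since $D_{K^\ast}\succeq 0$ and $M_K\succ 0$, which immediately yields $J(K)-J(K^\ast)\leq \text{Tr}(D_{K^\ast}E_K^\top M_K^{-1}E_K)$. Because $M_K=R+B^\top P_K B\succeq R$, we have $M_K^{-1}\preceq R^{-1}\preceq \sigma_{\text{min}}(R)^{-1}I$, and then $D_{K^\ast}\preceq \Vert D_{K^\ast}\Vert I$; two standard trace-monotonicity steps (using that $\text{Tr}(XY)\leq \text{Tr}(XZ)$ whenever $X\succeq 0$ and $Y\preceq Z$) produce the desired bound $\tfrac{1}{\sigma_{\text{min}}(R)}\Vert D_{K^\ast}\Vert\,\text{Tr}(E_K^\top E_K)$.

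For the lower bound I would take the one-step Riccati-style improvement $K':=K-M_K^{-1}E_K$, precisely the choice that annihilates $\Delta-M_K^{-1}E_K$. The identity then collapses to
\begin{align*}
J(K)-J(K')=\text{Tr}(D_{K'}E_K^\top M_K^{-1}E_K).
\end{align*}
From the Lyapunov equation \eqref{lyap1} for $D_{K'}$ we get $D_{K'}\succeq D_\sigma=D_0+\sigma^2 BB^\top\succeq \sigma_{\text{min}}(D_0)I$, while $M_K^{-1}\succeq \Vert R+B^\top P_K B\Vert^{-1}I$. Combining these two PSD estimates via the same trace monotonicity gives $J(K)-J(K')\geq \sigma_{\text{min}}(D_0)\Vert R+B^\top P_K B\Vert^{-1}\,\text{Tr}(E_K^\top E_K)$, and the optimality of $K^\ast$ yields $J(K)-J(K^\ast)\geq J(K)-J(K')$, closing the chain.

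The main obstacle is to certify that the improved policy $K'=K-M_K^{-1}E_K$ used in the lower bound is itself stabilizing, so that $D_{K'}$ is well defined via \eqref{lyap1} and Lemma \ref{l6} legitimately applies. The standard route is to show $P_{K'}\preceq P_K$ by inserting $K'$ into the Lyapunov-type equation \eqref{lyap3} and again completing the square in $P_K$; this forces $P_{K'}$ to solve a Lyapunov equation in $A-BK'$ with a positive definite right-hand side, which together with $P_{K'}\succ 0$ implies $\rho(A-BK')<1$. Once stability of $K'$ is in hand, all remaining manipulations are routine trace and PSD inequalities.
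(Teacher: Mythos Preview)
Your proposal is correct and follows essentially the same route as the paper: both complete the square in Lemma~\ref{l6}, take $K'=K^\ast$ for the upper bound, and take the Riccati-improved $K'=K-(R+B^\top P_K B)^{-1}E_K$ for the lower bound, then finish with the same PSD/trace estimates. You actually go slightly further than the paper by flagging the need to verify that this improved $K'$ is stabilizing (so that $D_{K'}$ exists and Lemma~\ref{l6} applies), a point the paper's proof silently assumes.
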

\noindent \textbf{Proof of Theorem \ref{t2}}:
\begin{proof}
From the update rule of actor, we know that
\begin{align*}
    K_{t+1}=K_t-\alpha_t\widehat{\nabla_{K_t}^N J(K_t)},
\end{align*}
where $\widehat{\nabla_{K_t}^N J(K_t)}=\text{smat}(\omega_{t+1})^{22}K_{t}-\text{smat}(\omega_{t+1})^{21}$. We define $\hat{E}_{K_t}=\widehat{\nabla_{K_t}^N J(K_t)}$ for simplicity.  Combining the almost smoothness property, we get
\begin{align*}
    &J(K_{t+1})-J(K_t)\\
    =&-2\text{Tr}(D_{K_{t+1}}(K_t-K_{t+1})^\top E_{K_t})+\nonumber\\
    &\text{Tr}(D_{K_{t+1}}(K_t-K_{t+1})^\top(R+B^\top P_{K_t} B)(K_t-K_{t+1}))\\
    =&-2\alpha_t\text{Tr}(D_{K_{t+1}}\hat{E}_{K_t}^\top E_{K_t})\\
    &+\alpha_t^2\text{Tr}(D_{K_{t+1}}\hat{E}^\top_{K_t}(R+B^\top P_{K_t} B)\hat{E}_{K_t})\\
    =&-2\alpha_t\text{Tr}(D_{K_{t+1}}(\hat{E}_{K_t}-E_{K_t})^\top E_{K_t})\\
    &-2\alpha_t\text{Tr}(D_{K_{t+1}}E_{K_t}^\top E_{K_t})\nonumber\\
    & +\alpha_t^2\text{Tr}(D_{K_{t+1}}\hat{E}^\top_{K_t}(R+B^\top P_{K_t} B)\hat{E}_{K_t}).\\
\end{align*}
By the similar trick to the proof of Proposition \ref{pa1.2.1}, we can bound $P_{K_t}$ by
\begin{align*}
    \Vert P_{K_t}\Vert\leq &\frac{\hat{c}_1}{1-(\frac{1+\rho}{2})^2}\Vert Q+K^\top RK\Vert\\
    \leq &\frac{\hat{c}_1(\sigma_{\text{max}}(Q)+\bar{K}^2\sigma_{\text{max}}(R))}{1-(\frac{1+\rho}{2})^2},
\end{align*}
where $\hat{c}_1$ is a constant. Hence we further have
\begin{align*}
    &\text{Tr}(D_{K_{t+1}}\hat{E}^\top_{K_t}(R+B^\top P_{K_t} B)\hat{E}_{K_t})\\
    \leq &d\Vert D_{K_{t+1}}\Vert \Vert R+B^\top P_{K_t}B\Vert \Vert \hat{E}_{K_t}\Vert^2_{\text{F}}\\
    \leq &d(\bar{K}+1)^2\bar{\omega}^2 \frac{c_1\Vert D_{\sigma}\Vert}{1-(\frac{1+\rho}{2})^2}(\sigma_{\text{max}}(R)\\
    &+\sigma^2_{\text{max}}(B)\frac{\hat{c}_1(\sigma_{\text{max}}(Q)+\bar{K}^2\sigma_{\text{max}}(R))}{1-(\frac{1+\rho}{2})^2}),
\end{align*}
where we use $\Vert \hat{E}_{K_t}\Vert_{\text{F}}\leq (\bar{K}+1)\bar{\omega}$. Hence we define $c_4$ as follows
\begin{align}\label{c4}
    c_4:= &d^2(\bar{K}+1)^2\bar{\omega}^2 \frac{c_1\Vert D_{\sigma}\Vert}{1-(\frac{1+\rho}{2})^2}(\sigma_{\text{max}}(R)\nonumber \\
    &+\sigma^2_{\text{max}}(B)\frac{\hat{c}_1(\sigma_{\text{max}}(Q)+\bar{K}^2\sigma_{\text{max}}(R))}{1-(\frac{1+\rho}{2})^2}).
\end{align}
Then we get
\begin{align*}
    &J(K_{t+1})-J(K_t)\\
    \leq &-2\alpha_t\text{Tr}(D_{K_{t+1}}(\hat{E}_{K_t}-E_{K_t})^\top E_{K_t})\\
    &-2\alpha_t\text{Tr}(D_{K_{t+1}}E_{K_t}^\top E_{K_t})+c_4\alpha_t^2\nonumber \\
    \leq &\alpha_t\frac{2c_1d^{\frac{3}{2}}\Vert D_{\sigma}\Vert}{1-(\frac{1+\rho}{2})^2}\Vert E_{K_t}\Vert \Vert \hat{E}_{K_t}-E_{K_t}\Vert\\
    &-2\alpha_t\sigma_{\text{min}}(D_0)\Vert E_{K_t}\Vert^2+c_4\alpha_t^2\\
    =&c_5\alpha_t\Vert E_{K_t}\Vert \Vert \hat{E}_{K_t}-E_{K_t}\Vert-2\alpha_t\sigma_{\text{min}}(D_0)\Vert E_{K_t}\Vert^2+c_4\alpha_t^2,
\end{align*}
where
\begin{align}\label{c_5}
    c_5:=\frac{2c_1d^{\frac{3}{2}}\Vert D_{\sigma}\Vert}{1-(\frac{1+\rho}{2})^2}.
\end{align}
Considering the restriction that $\bar{A}$ holds and taking expectation up to $(x_{t},u_{t})$, we have
\begin{align*}
    \mathbb{E}[\Vert E_{K_t}\Vert^2]&\leq \frac{\mathbb{E}[(J(K_t)-J(K_{t+1}))]}{2\alpha_t\sigma_{\text{min}}(D_0)}\\
    &+\frac{c_5}{2\sigma_{\text{min}}(D_0)}\mathbb{E}[\Vert E_{K_t}\Vert \Vert \hat{E}_{K_t}-E_{K_t}\Vert]\\
    &+\frac{c_4\alpha_t}{2\sigma_{\text{min}}(D_0)}.
\end{align*}
Summing over $t$ from $0$ to $T-1$ gives
\begin{align*}
    \sum\limits_{t=0}^{T-1}\mathbb{E} [\Vert E_{K_t}\Vert^2]&\leq \underbrace{\sum\limits_{t=0}^{T-1} \frac{\mathbb{E}[(J(K_t)-J(K_{t+1}))]}{2\alpha_t\sigma_{\text{min}}(D_0)}}_{I_1}\\
    &+\frac{c_5}{2\sigma_{\text{min}}(D_0))}\underbrace{\sum\limits_{t=0}^{T-1} \mathbb{E}[\Vert E_{K_t}\Vert \Vert \hat{E}_{K_t}-E_{K_t}\Vert]}_{I_2}\nonumber\\
    &+\frac{c_4}{2\sigma_{\text{min}}(D_0)}\underbrace{\sum\limits_{t=0}^{T-1}\alpha_t}_{I_3}.
\end{align*}
For term $I_1$, using Abel summation by parts, we have
\begin{align*}
    &\sum\limits_{t=0}^{T-1} \frac{\mathbb{E}[(J(K_t)-J(K_{t+1}))]}{2\alpha_t\sigma_{\text{min}}(D_0)}\\
    =&\frac{1}{2\sigma_{\text{min}}(D_0)}(\sum\limits_{t=1}^{T-1} (\frac{1}{\alpha_{t}}-\frac{1}{\alpha_{t-1}})\mathbb{E}[J(K_t)] \nonumber \\
    & +\frac{1}{\alpha_{0}}\mathbb{E}[J(K_{0})]-\frac{1}{\alpha_{T-1}}\mathbb{E}[J(K_{T})])\\
    \leq &\frac{U}{2\sigma_{\text{min}}(D_0)}(\sum\limits_{t=1}^{T-1} (\frac{1}{\alpha_{t}}-\frac{1}{\alpha_{t-1}})+\frac{1}{\alpha_{0}})\\
    =&\frac{U}{2\sigma_{\text{min}}(D_0)}\frac{1}{\alpha_{T-1}}.
\end{align*}
For term $I_2$, by Cauchy-Schwartz inequality, we have
\begin{align*}
    &\sum\limits_{t=0}^{T-1}\mathbb{E}[\Vert E_{K_t}\Vert \Vert \hat{E}_{K_t}-E_{K_t}\Vert]\\
    \leq &(\sum\limits_{t=0}^{T-1}\mathbb{E}\Vert E_{K_t}\Vert^2)^{\frac{1}{2}}(\sum\limits_{t=0}^{T-1}\mathbb{E}\Vert \hat{E}_{K_t}-E_{K_t}\Vert^2)^{\frac{1}{2}}.
\end{align*}
For term $I_3$, following the previous trick, we have
\begin{align*}
    \sum\limits_{t=0}^{T-1}\alpha_t\leq \frac{c_{\alpha}}{1-\delta}T^{1-\delta}.
\end{align*}
Combining the results of $I_1, I_2$ and $I_3$, we have
\begin{align*}
    &\sum\limits_{t=0}^{T-1} \mathbb{E}[\Vert E_{K_t}\Vert^2]\\
    \leq &\frac{U}{2\sigma_{\text{min}}(D_0)c_{\alpha}}T^{\delta}+\frac{c_4c_{\alpha}}{2\sigma_{\text{min}}(D_0)(1-\delta)}T^{1-\delta}\nonumber \\
    +&\frac{c_5}{2\sigma_{\text{min}}(D_0)}(\sum\limits_{t=0}^{T-1}\mathbb{E}\Vert E_{K_t}\Vert^2)^{\frac{1}{2}}(\sum\limits_{t=0}^{T-1}\mathbb{E} \Vert \hat{E}_{K_t}-E_{K_t}\Vert^2)^{\frac{1}{2}}.
\end{align*}
Dividing $T$ at both sides, we can express the result as
\begin{align*}
    &\frac{1}{T}\sum\limits_{t=0}^{T-1} \mathbb{E}[\Vert E_{K_t}\Vert^2]\\
    \leq &\frac{U}{2\sigma_{\text{min}}(D_0)c_{\alpha}}\frac{1}{T^{1-\delta}}+\frac{c_4c_{\alpha}}{2\sigma_{\text{min}}(D_0)(1-\delta)}\frac{1}{T^{\delta}}\nonumber \\
    &+\frac{c_5}{2\sigma_{\text{min}}(D_0)}\frac{1}{T}(\sum\limits_{t=0}^{T-1}\mathbb{E}\Vert E_{K_t}\Vert^2)^{\frac{1}{2}}\cdot \\
    &(\sum\limits_{t=0}^{T-1} \mathbb{E}\Vert \hat{E}_{K_t}-E_{K_t}\Vert^2)^{\frac{1}{2}}.\nonumber
\end{align*}
Denote
\begin{align*}
    F(T)&=\frac{1}{T}\sum\limits_{t=0}^{T-1}\mathbb{E} [\Vert E_{K_t}\Vert^2],\\
    H(T)&=\frac{1}{T}\sum\limits_{t=0}^{T-1}\mathbb{E} [\Vert \hat{E}_{K_t}-E_{K_t}\Vert^2].
\end{align*}
Then we have
\begin{align*}
    F(T)\leq &\mathcal{O}(\frac{1}{T^{1-\delta}})+\mathcal{O}(\frac{1}{T^{\delta}})\\
    &+ \frac{c_5}{2\sigma_{\text{min}}(D_0)}\sqrt{F(T)} \sqrt{H(T)},
\end{align*}
which further implies
\begin{align*}
    &(\sqrt{F(T)}-\frac{c_5}{4\sigma_{\text{min}}(D_0)}\sqrt{H(T)})^2\\
    \leq &\mathcal{O}(\frac{1}{T^{1-\delta}})+\mathcal{O}(\frac{1}{T^{\delta}})+(\frac{c_5}{4\sigma_{\text{min}}(D_0)})^2H(T).
\end{align*}
From \eqref{eq21} and \eqref{eq23}, we know that
\begin{align*}
    &\sqrt{F(T)}-\frac{c_5}{4\sigma_{\text{min}}(D_0)}\sqrt{H(T)}\\
    \leq &\sqrt{\mathcal{O}(\frac{1}{T^{1-\delta}})+\mathcal{O}(\frac{1}{T^{\delta}})}+\frac{c_5}{4\sigma_{\text{min}}(D_0)}\sqrt{H(T)},\\
    \sqrt{F(T)}&\leq \sqrt{\mathcal{O}(\frac{1}{T^{1-\delta}})+\mathcal{O}(\frac{1}{T^{\delta}})}+\frac{c_5}{2\sigma_{\text{min}}(D_0)}\sqrt{H(T)},\\
    F(T)&\leq \mathcal{O}(\frac{1}{T^{1-\delta}})+\mathcal{O}(\frac{1}{T^{\delta}})+(\frac{c_5}{\sigma_{\text{min}}(D_0)})^2H(T).
\end{align*}
Since we have
\begin{align*}
    &\hat{E}_{K_t}=\text{smat}(\omega_{t+1})^{22}K_{t}-\text{smat}(\omega_{t+1})^{21},\\  &E_{K_t}=\text{smat}(\omega^\ast_{t+1})^{22}K_{t}-\text{smat}(\omega^\ast_{t+1})^{21}.
\end{align*}
Thus we get
\begin{align*}
    &\sum\limits_{t=0}^{T-1}\mathbb{E}[\Vert \hat{E}_{K_t}-E_{K_t}\Vert^2]\\
    =&\sum\limits_{t=0}^{T-1}\mathbb{E}[\Vert (\text{smat}(\omega_{t+1})^{22}-\text{smat}(\omega^\ast_{t+1})^{22})K_{t}\\
    &-(\text{smat}(\omega_{t+1})^{21}-\text{smat}(\omega^\ast_{t+1})^{21})\Vert^2]\\
    \leq &\sum\limits_{t=0}^{T-1} 2\mathbb{E}[\Vert (\text{smat}(\omega_{t+1})^{22}-\text{smat}(\omega^\ast_{t+1})^{22})K_{t}\Vert^2] \nonumber\\
    & +\sum\limits_{t=0}^{T-1}[2\mathbb{E}\Vert \text{smat}(\omega_{t+1})^{21}-\text{smat}(\omega^\ast_{t+1})^{21}\Vert^2]\\
    \leq &\sum\limits_{t=0}^{T-1}2(\bar{K}+1)\mathbb{E}[\Vert \text{smat}(\omega_{t+1})-\text{smat}(\omega^\ast_{t+1})\Vert^2_{\text{F}}]\\
    =&\sum\limits_{t=0}^{T-1}2(\bar{K}+1)\mathbb{E}[\Vert \omega_{t+1}-\omega^\ast_{t+1}\Vert^2].
\end{align*}
From the convergence of critic in \eqref{ccritic}, we have
\begin{align*}
    H(T)=&\frac{1}{T}\sum\limits_{t=0}^{T-1} \mathbb{E}[\Vert \hat{E}_{K_t}-E_{K_t}\Vert^2] \\
    \leq &\mathcal{O}(\frac{1}{T^{1-v}})+\mathcal{O}(\frac{1}{T^v})+\mathcal{O}(\frac{1}{T^{2(\delta-v)}}).
\end{align*}
Overall, we get 
\begin{align}\label{n1}
    \frac{1}{T}&\sum\limits_{t=0}^{T-1} \mathbb{E}[\Vert E_{K_t}\Vert^2] \nonumber \\ \leq &\mathcal{O}(\frac{1}{T^{1-\delta}})+\mathcal{O}(\frac{1}{T^v})+\mathcal{O}(\frac{1}{T^{2(\delta-v)}}).
\end{align}
Since we have $0<v<\sigma<1$, the fastest convergence rate is
\begin{align*}
    \text{maxmin}\{1-\delta, v, 2(\delta-v)\}=\frac{2}{5},
\end{align*}
where the optimal value is attained by choosing $\delta=\frac{3}{5}, v=\frac{2}{5}$. From gradient domination, we know that
\begin{align}\label{n2}
    \mathbb{E}[(J(K_t)-J(K^\ast))]\leq &\frac{1}{\sigma_{\text{min}}(R)}\Vert D_{K^\ast}\Vert \mathbb{E}[\text{Tr}(E_{K_t}^\top E_{K_t})]\nonumber \\
    \leq& \frac{d\Vert D_{K^\ast}\Vert}{\sigma_{\text{min}}(R)} \mathbb{E}[\Vert E_{K_t}\Vert^2].
\end{align}
Hence, combining \eqref{n1}, we have
\begin{align*}
    \mathop{\text{min}}\limits_{0\leq t< T}&\frac{d\Vert D_{K^\ast}\Vert}{\sigma_{\text{min}}(R)} \mathbb{E}[\Vert E_{K_t}\Vert^2]\\
    &\leq \mathcal{O}(\frac{1}{T^{1-\delta}})+\mathcal{O}(\frac{1}{T^v})+\mathcal{O}(\frac{1}{T^{2(\delta-v)}}).  
\end{align*}
Therefore, we get
\begin{align*}
    \mathop{\text{min}}\limits_{0\leq t< T}&\mathbb{E}[(J(K_t)-J(K^\ast))]\\
    &\leq \mathcal{O}(\frac{1}{T^{1-\delta}})+\mathcal{O}(\frac{1}{T^v})+\mathcal{O}(\frac{1}{T^{2(\delta-v)}}).
\end{align*}
Thus we conclude the convergence of actor.
\end{proof}
\section{Proof of Propositions}\label{appendix2}
To establish the Proposition \ref{pro3}, we need the following lemma, the proof of which can be found in \citet{nagar1959bias,magnus1978moments}.
\begin{lemma}\label{lemmab1}
Let $g\sim\mathcal{N}(0,I_n)$ be the standard Gaussian random variable in $\mathbb{R}^n$ and let $M,N$ be two symmetric matrices. Then we have
\begin{align*}
    \mathbb{E}[g^\top Mgg^\top N g]=2\text{Tr}(MN)+\text{Tr}(M)\text{Tr}(N).
\end{align*}
\end{lemma}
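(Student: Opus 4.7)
The plan is to reduce the computation to the standard fourth moment of a Gaussian and apply Isserlis' (Wick's) theorem. Writing the quadratic forms entrywise, I have
\begin{equation*}
\mathbb{E}[g^\top M g \cdot g^\top N g] \;=\; \sum_{i,j,k,l} M_{ij} N_{kl}\, \mathbb{E}[g_i g_j g_k g_l].
\end{equation*}
Since $g \sim \mathcal{N}(0, I_n)$ has independent standard Gaussian coordinates, Isserlis' theorem gives
\begin{equation*}
\mathbb{E}[g_i g_j g_k g_l] \;=\; \delta_{ij}\delta_{kl} + \delta_{ik}\delta_{jl} + \delta_{il}\delta_{jk}.
\end{equation*}

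The remaining work is the bookkeeping of the three resulting sums. The pairing $\delta_{ij}\delta_{kl}$ contributes $\sum_i M_{ii} \sum_k N_{kk} = \text{Tr}(M)\text{Tr}(N)$. The pairing $\delta_{ik}\delta_{jl}$ contributes $\sum_{i,j} M_{ij} N_{ij}$, which, using that $N$ is symmetric, equals $\sum_{i,j} M_{ij} N_{ji} = \text{Tr}(MN)$. The pairing $\delta_{il}\delta_{jk}$ contributes $\sum_{i,j} M_{ij} N_{ji} = \text{Tr}(MN)$ directly. Summing the three terms yields $\text{Tr}(M)\text{Tr}(N) + 2\text{Tr}(MN)$, as claimed.

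There is no real obstacle here: the only subtlety is recognizing that symmetry of $N$ collapses $\sum_{i,j} M_{ij}N_{ij}$ to $\text{Tr}(MN)$ rather than $\text{Tr}(MN^\top)$, so that both cross terms contribute the same $\text{Tr}(MN)$. If one prefers a coordinate-free route, an equivalent argument is to use $g^\top A g = \text{Tr}(A gg^\top)$, write the left-hand side as $\text{Tr}(M\, \mathbb{E}[gg^\top N gg^\top])$, and invoke the identity $\mathbb{E}[gg^\top N gg^\top] = \text{Tr}(N) I_n + 2N$ (itself a consequence of Isserlis' theorem for symmetric $N$); taking the trace with $M$ reproduces the stated formula. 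I would use whichever version best matches the notation of the surrounding appendix.
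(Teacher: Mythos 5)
Your proof is correct. The Wick/Isserlis expansion $\mathbb{E}[g_i g_j g_k g_l]=\delta_{ij}\delta_{kl}+\delta_{ik}\delta_{jl}+\delta_{il}\delta_{jk}$ is exactly the right tool, the bookkeeping of the three pairings is accurate, and you correctly flag the one place where symmetry of $N$ is used (to identify $\sum_{i,j}M_{ij}N_{ij}$ with $\mathrm{Tr}(MN)$ rather than $\mathrm{Tr}(MN^\top)$). Note, however, that the paper does not actually prove this lemma: it simply cites Nagar (1959) and Magnus (1978) for the moments of products of quadratic forms in normal variables. So there is no in-paper argument to compare against; your contribution is a short, self-contained, elementary derivation of the cited fact. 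That is arguably preferable for a reader, since the general results in those references are considerably heavier than what is needed here, and your coordinate-free variant via $\mathbb{E}[gg^\top N gg^\top]=\mathrm{Tr}(N)I_n+2N$ would slot cleanly into the appendix's notation if one wanted to include it.
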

\noindent \textbf{Proof of Proposition \ref{pro3}}:

\begin{proof}
This proposition is a slight modification of lemma 3.2 in \citet{yang2019provably} and the proof is inspired by the proof of this lemma.


For any state-action pair $(x^\top,u^\top)^\top\in\mathbb{R}^{d+k}$, we denote the successor state-action pair following policy $\pi_K$ by $(x'^\top,u'^\top)^\top$. With this notation, as we defined in \eqref{policy}, we have
\begin{align*}
    x'=Ax+Bu+\epsilon, \quad u'= -Kx'+\sigma\zeta.
\end{align*}
where $\epsilon\sim\mathcal{N}(0,D_0)$ and $\zeta\sim\mathcal{N}(0,I_k)$. We further denote $(x^\top,u^\top)^\top$ and $(x'^\top,u'^\top)^\top$ by $\vartheta$ and $\vartheta'$ respectively. Therefore, we have
\begin{align}\label{markov2}
    \vartheta'=L\vartheta+\varepsilon,
\end{align}
where
\begin{align*}
     L:=&\begin{bmatrix}
     A & B\\ -KA & -KB
    \end{bmatrix}=\begin{bmatrix}
     I_d \\ -K
    \end{bmatrix}\begin{bmatrix}
     A & B
    \end{bmatrix}, \\ 
    \varepsilon:=&\begin{bmatrix}
    \epsilon \\ -K\epsilon+\sigma\zeta
    \end{bmatrix}.
\end{align*}
Therefore, by definition, we have $\varepsilon\sim\mathcal{N}(0,\tilde{D}_0)$ where
\begin{align*}
    \tilde{D}_0=\begin{bmatrix}
    D_0 & -D_0 K^\top \\ -KD_0 & KD_0 K^\top+\sigma^2I_k
    \end{bmatrix}.
\end{align*}
Since for any two matrices $M$ and $N$, it holds that $\rho(MN)=\rho(NM)$. Then we get $\rho(L)=\rho(A-BK)<1$. Consequently, the Markov chain defined in \eqref{markov2} have a stationary distribution $\mathcal{N}(0,\tilde{D}_K)$ denoted by $\tilde{\rho}_K$, where $\tilde{D}_K$ is the unique positive definite solution of the following Lyapunov equation
\begin{align}\label{lyap2}
    \tilde{D}_K=L\tilde{D}_KL^\top + \tilde{D}_0.
\end{align}
Meanwhile, from the fact that $x\sim\mathcal{N}(0,D_K)$ and $u=-Kx+\sigma\zeta$, by direct computation we have
\begin{align*}
    \tilde{D}_K&=\begin{bmatrix}
    D_K & -D_KK^\top \\ -KD_K & KD_KK^\top+\sigma^2I_k
    \end{bmatrix}\\
    &=\begin{bmatrix}
    0 & 0 \\ 0 & \sigma^2I_k
    \end{bmatrix}+\begin{bmatrix}
    I_d \\ -K
    \end{bmatrix}D_K\begin{bmatrix}
    I_d \\ -K
    \end{bmatrix}^\top.
\end{align*}
From the fact that $\Vert AB\Vert_{\text{F}}\leq \Vert A\Vert_{\text{F}}\Vert B\Vert$ and $\Vert A\Vert\leq \Vert A\Vert_{\text{F}}$, we have
\begin{align}\label{tilde_D_K}
    \Vert\tilde{D}_K\Vert\leq \Vert \tilde{D}_K\Vert_{\text{F}}\leq \sigma^2k+\Vert D_{K}\Vert(d+\Vert K\Vert^2_{\text{F}}).
\end{align}
Then we get
\begin{align*}
    \mathbb{E}_{(x,u)}[\phi(x,u)\phi(x,u)^\top]=\mathbb{E}_{\vartheta\sim\tilde{\rho}_K}[\phi(\vartheta)\phi(\vartheta)^\top].
\end{align*}
Let $M,N$ be any two symmetric matrices with appropriate dimension, we have
\begin{align*}
    &\quad \ \text{svec}(M)^\top \mathbb{E}_{\vartheta\sim\tilde{\rho}_K}[\phi(\vartheta)\phi(\vartheta)^\top]\text{svec}(N)\\
    &=\mathbb{E}_{\vartheta\sim\tilde{\rho}_K}[\text{svec}(M)^\top \phi(\vartheta)\phi(\vartheta)^\top \text{svec}(N)]\\
    &=\mathbb{E}_{\vartheta\sim\tilde{\rho}_K}[\langle \vartheta\vartheta^\top, M \rangle \langle \vartheta\vartheta^\top,N\rangle]\\
    &=\mathbb{E}_{\vartheta\sim\tilde{\rho}_K}[\vartheta^\top M\vartheta \vartheta^\top N\vartheta]\\
    &=\mathbb{E}_{g\sim\mathcal{N}(0,I_{d+k})}[g^\top \tilde{D}_K^{1/2} M\tilde{D}_K^{1/2}gg^\top\tilde{D}_K^{1/2}N\tilde{D}_K^{1/2}g],
\end{align*}
where $\tilde{D}_K^{1/2}$ is the square root of $\tilde{D}_K$. By applying Lemma \ref{lemmab1}, we have
\begin{align*}
    &\text{svec}(M)^\top \mathbb{E}_{\vartheta\sim\tilde{\rho}_K}[\phi(\vartheta)\phi(\vartheta)^\top]\text{svec}(N)\\
    =&\mathbb{E}_{g\sim\mathcal{N}(0,I_{d+k})}[g^\top \tilde{D}_K^{1/2} M\tilde{D}_K^{1/2}gg^\top\tilde{D}_K^{1/2}N\tilde{D}_K^{1/2}g]\\
    =&2\text{Tr}(\tilde{D}_K^{1/2}M\tilde{D}_KN\tilde{D}_K^{1/2})\\
    &+\text{Tr}(\tilde{D}_K^{1/2}M\tilde{D}_K^{1/2})\text{Tr}(\tilde{D}_K^{1/2}N\tilde{D}_K^{1/2})\\
    =&2\langle M,\tilde{D}_KN\tilde{D}_K\rangle+\langle M,\tilde{D}_K\rangle \langle N,\tilde{D}_K\rangle\\
    =&\text{svec}(M)^\top (2\tilde{D}_K\otimes_s \tilde{D}_K+\text{svec}(\tilde{D}_K)\text{svec}(\tilde{D}_K)^\top)\cdot\\
    &\text{svec}(N),
\end{align*}
where the last equality follows from the fact that 
\begin{align*}
    \text{svec}(\frac{1}{2}(NSM^\top+MSN^\top))=(M\otimes_s N)\text{svec}(S).
\end{align*}
for any two matrix $M,N$ and a symmetric matrix $S$ \cite{schacke2004kronecker}. Thus we have
\begin{align}\label{eq226}
    &\mathbb{E}_{\vartheta\sim\tilde{\rho}_K}[\phi(\vartheta)\phi(\vartheta)^\top] \nonumber\\
    =&2\tilde{D}_K\otimes_s \tilde{D}_K+\text{svec}(\tilde{D}_K)\text{svec}(\tilde{D}_K)^\top.
\end{align}
Similarly
\begin{align*}
    \phi(\vartheta')&=\text{svec}[(L\vartheta+\varepsilon)(L\vartheta+\varepsilon)^\top]\\
    &=\text{svec}(L\vartheta\vartheta^\top L^\top+L\vartheta\varepsilon^\top-\varepsilon \vartheta^\top L^\top+\varepsilon\varepsilon^\top).
\end{align*}
Since $\epsilon$ is independent of $\vartheta$, we get
\begin{align*}
    \mathbb{E}_{\vartheta\sim\tilde{\rho}_K}&[\phi(\vartheta)\phi(\vartheta')^\top]\\
    &=\mathbb{E}_{\vartheta\sim\tilde{\rho}_K}[\phi(\vartheta)\text{svec}(L\vartheta\vartheta^\top L^\top+\tilde{D}_0)].
\end{align*}
By the same argument, we have
\begin{align*}
    &\text{svec}(M)^\top \mathbb{E}_{\vartheta\sim\tilde{\rho}_K}[\phi(\vartheta)\phi(\vartheta')^\top] \text{svec}(N)\\
    =&\mathbb{E}_{\vartheta\sim\tilde{\rho}_K} [\langle \vartheta\vartheta^\top,M\rangle \langle L\vartheta\vartheta^\top L^\top+\tilde{D}_0,N\rangle]\\
    =&\mathbb{E}_{\vartheta\sim\tilde{\rho}_K}[\vartheta^\top M\vartheta\vartheta^\top L^\top NL\vartheta] +\langle M,\tilde{D}_K\rangle \langle \tilde{D}_0,N\rangle]\\
    =&\mathbb{E}_{g\in\mathcal{N}(0,I_{d+k})}[g^\top\tilde{D}_K^{\frac{1}{2}}M\tilde{D}_K^{\frac{1}{2}}gg^\top\tilde{D}_K^{\frac{1}{2}}L^\top NL\tilde{D}_K^{\frac{1}{2}}g]\\
    &+\langle M,\tilde{D}_K,\rangle \langle \tilde{D}_0,N\rangle]\\
    =&2\text{Tr}(M\tilde{D}_KL^\top NL\tilde{D}_K)+\text{Tr}(M\tilde{D}_K)\text{Tr}(L^\top NL \tilde{D}_K)\\
    &+\langle M,\tilde{D}_K\rangle \langle \tilde{D}_0,N\rangle\\
    =&2\langle M,\tilde{D}_KL^\top NL\tilde{D}_K\rangle+\langle M,\tilde{D}_K\rangle \langle L \tilde{D}_KL^\top, N\rangle \\
    &+\langle M,\tilde{D}_K\rangle \langle \tilde{D}_0,N\rangle\\
    =&2\langle M,\tilde{D}_KL^\top NL\tilde{D}_K\rangle+\langle M,\tilde{D}_K\rangle\langle \tilde{D}_K,N\rangle \\
    =&\text{svec}(M)^\top (2\tilde{D}_KL^\top \otimes_s \tilde{D}_KL^\top\\
    &+\text{svec}(\tilde{D}_K)\text{svec}(\tilde{D}_K)^\top)\text{svec}(N),
\end{align*}
where we make use of the Lyapunov equation \eqref{lyap2}. Thus we get
\begin{align}\label{eq239}
    \mathbb{E}_{\vartheta\sim\tilde{\rho}_K}&[\phi(\vartheta)\phi(\vartheta')^\top]\nonumber\\
    =2&\tilde{D}_KL^\top \otimes_s \tilde{D}_KL^\top+\text{svec}(\tilde{D}_K)\text{svec}(\tilde{D}_K)^\top.
\end{align}
Therefore, combining \eqref{eq226} and \eqref{eq239}, we have
\begin{align*}
    A_K&=2(\tilde{D}_K\otimes_s \tilde{D}_K-\tilde{D}_KL^\top \otimes_s \tilde{D}_KL^\top)\\
    &=2(\tilde{D}_K\otimes_s \tilde{D}_K)(I-L^\top\otimes_s L^\top),
\end{align*}
where in the last equality we use the fact that
\begin{align*}
    (A\otimes_s B)(C\otimes_s D)=\frac{1}{2}(AC\otimes_s BD+AD\otimes_s BC)
\end{align*}
for any matrices $A,B,C,D$. Since $\rho(L)<1$, then $I-L^\top\otimes_s L^\top$ is positive definite, which further implies $A_K$ is invertible.

From Bellman equation of $Q_K$, we have
 \begin{align*}
     \langle \phi(x,u), \text{svec}(\Omega_K)\rangle &=c(x,u)-J(K)\\
     &+\langle \mathbb{E}[\phi(x',u')|x,u],\text{svec}(\Omega_K)\rangle.
 \end{align*}
Multiply each side by $\phi(x,u)$ and take a expectation with respect to $(x,u)$, we get
\begin{align*}
    \mathbb{E}[\phi(x,u)(\phi(x,u)-&\mathbb{E}[\phi(x',u')|x,u])^\top]\text{svec}(\Omega_K)\\
    =&\mathbb{E}[\phi(x,u)(c(x,u)-J(K))].
\end{align*}
We further have
\begin{align*}
    &\mathbb{E}[\phi(x,u)(\phi(x,u)-\mathbb{E}[\phi(x',u')|x,u])^\top]\\
    =&\mathbb{E}[\phi(x,u)(\phi(x,u)-\phi(x',u'))^\top]\\
    =&A_K,
\end{align*}
where the first equality comes from the low of total expectation and
\begin{align*}
    \mathbb{E}[\phi(x,u)(c(x,u)-J(K))]=b_K.
\end{align*}
Therefore, we get
 \begin{align*}
     A_K\text{svec}(\Omega_K)=b_K,
 \end{align*}
which implies $\omega^\ast_K=\text{svec}(\Omega_K)$. Thus we conclude our proof.
\end{proof}
\noindent \textbf{Proof of Proposition \ref{pa1.2.1}}:
\begin{proof}
Since $D_{K_t}$ satisfies the Lyapunov equation defined in \eqref{lyap1}, we have
\begin{align*}
    D_{K_t}=\sum\limits_{k=0}^{\infty}(A-BK_t)^kD_{\sigma}((A-BK_t)^\top)^k.
\end{align*}
From Assumption \ref{a2}, we know that $\rho(A-BK_t)\leq\rho<1$. Thus for any $\epsilon>0$, there exists a sub-multiplicative matrix norm $\Vert \cdot \Vert_{\ast}$ such that
\begin{align*}
    \Vert A-BK_t\Vert_{\ast}\leq \rho(A-BK_t)+\epsilon.
\end{align*}
Choose $\epsilon=\frac{1-\rho}{2}$, we get
\begin{align*}
    \Vert A-BK_t\Vert_{\ast}\leq \frac{1+\rho}{2}<1.
\end{align*}
Therefore, we can bound the norm of $D_{K_t}$ by
\begin{align*}
    \Vert D_{K_t}\Vert_{\ast} &\leq \sum\limits_{k=0}^\infty \Vert A-BK_t\Vert^{2k}_{\ast}\Vert D_{\sigma}\Vert_{\ast}\\
    &\leq \Vert D_{\sigma}\Vert_{\ast}\sum\limits_{k=0}^{\infty}(\frac{1+\rho}{2})^{2k}\\
    &\leq \Vert D_{\sigma} \Vert_{\ast}\frac{1}{1-(\frac{1+\rho}{2})^2}.
\end{align*}
Since all norms are equivalent on the finite dimensional Euclidean space, there exists a constant $c_1$ satisfies
\begin{align*}
    \Vert D_{K_t}\Vert \leq \frac{c_1}{1-(\frac{1+\rho}{2})^2}\Vert D_{\sigma}\Vert,
\end{align*}
which concludes our proof.
\end{proof}
\noindent \textbf{Proof of Proposition \ref{new_bounded}}:
\begin{proof}
    We first bound $\mathbb{E}[c_t^2]$. Note that from the proof of Proposition \ref{pro3}, we have $\vartheta_t=(x_t^\top,u_t^\top)^\top\sim \mathcal{N}(0,\Tilde{D}_{K_t})$, where $\Tilde{D}_{K_t}$ is upper bounded by \eqref{tilde_D_K}. Combining with Proposition \ref{pa1.2.1}, we know that $\Tilde{D}_{K_t}$ is norm bounded.
Define
\begin{align*}
    \Sigma:=\begin{bmatrix}
        Q &  \\  & R
    \end{bmatrix}.
\end{align*}
It holds that 
\begin{align*}
    c_t=x_t^\top Qx_t+u_t^\top Ru_t=\vartheta^\top \Sigma \vartheta.
\end{align*}
Then we have
\begin{align*}
    \mathbb{E}[c_t^2]=&\mathbb{E}[(\vartheta^\top \Sigma \vartheta)^2]\\
    =& \text{Var}(\vartheta^\top \Sigma \vartheta)+[\mathbb{E}(\vartheta^\top \Sigma \vartheta)]^2\\
    =& 2\text{Tr}(\Sigma \Tilde{D}_{K_t}\Sigma \Tilde{D}_{K_t})+(\text{Tr}(\Sigma \Tilde{D}_{K_t}))^2,
\end{align*}
where we use the fact that if $\vartheta\sim\mathcal{N}(\mu,D)$ is a multivariate Gaussian distribution and $\Sigma$ is a symmetric matrix, we have (\cite{rencher2008linear})
\begin{align*}
    \mathbb{E}[\vartheta^\top \Sigma \vartheta]=&\text{Tr}(\Sigma D)+\mu^\top \Sigma \mu,\\
    \text{Var}(\vartheta^\top \Sigma \vartheta)=& 2\text{Tr}(\Sigma D\Sigma D)+4\mu^\top D\Sigma D \mu.
\end{align*}
Since $\Sigma$ and $\Tilde{D}_{K_t}$ are both uniform bounded, $\mathbb{E}[c_t^2]$ is also uniform bounded.

It reminds to bound $\mathbb{E}[\Vert \phi(x_t,u_t)\Vert^2]$. We know that
    \begin{align*}
        \Vert \phi(x_t,u_t)\Vert^2=&\langle \text{svec}(\vartheta_t \vartheta_t^\top), \text{svec}(\vartheta_t \vartheta_t^\top)\rangle\\
        =&\Vert \vartheta_t\vartheta_t^\top\Vert_{\text{F}}^2\\
        =&\sum\limits_{1\leq i,j\leq d+k}(\vartheta_t^i\vartheta_t^j)^2,
    \end{align*}
where $\vartheta_t^i$ and $\vartheta_j^j$ are i-th and j-th component of $\vartheta_t$ respectively. Therefore, we can further get
    \begin{align*}
        \mathbb{E}[\Vert \phi(x_t,u_t)\Vert^2]=& \sum\limits_{1\leq i,j\leq d+k}\mathbb{E}(\vartheta_t^i\vartheta_t^j)^2.
    \end{align*}
It can be shown that
    \begin{align*}
        \vartheta_t^i\vartheta_t^j=\frac{1}{4}(\vartheta_t^i+\vartheta_t^j)^2-\frac{1}{4}(\vartheta_t^i-\vartheta_t^j)^2.
    \end{align*}
Since both $\vartheta_t^i$ and $\vartheta_t^j$ are univariate Gaussian distributions, we have
\begin{align*}
    \vartheta_t^i\vartheta_t^j=\frac{\text{Var}(\vartheta_t^i+\vartheta_t^j)}{4}X-\frac{\text{Var}(\vartheta_t^i-\vartheta_t^j)}{4}Y,
\end{align*}
where $X,Y\sim \chi_1^2$ and we use the fact that the squared of a standard Gaussian random variable has a chi-squared distribution. From $\Vert\Tilde{D}_{K_t}\Vert_{\text{F}}$ is bounded, we know that $\text{Var}(\vartheta_t^i+\vartheta_t^j)$ and $\text{Var}(\vartheta_t^i-\vartheta_t^j)$ are both bounded. Define $c_1:=\frac{\text{Var}(\vartheta_t^i+\vartheta_t^j)}{4}$ and $c_2:=\frac{\text{Var}(\vartheta_t^i-\vartheta_t^j)}{4}$, we can show have
\begin{align*}
    \mathbb{E}[(\vartheta_t^i\vartheta_t^j)^2]=&\text{Var}(\vartheta_t^i\vartheta_t^j)+(\mathbb{E}(\vartheta_t^i\vartheta_t^j))^2\\
    =&\text{Var}(c_1X-c_2Y )+(\mathbb{E}[c_1X-c_2Y])^2.
\end{align*}
Since $EX=EY=1, \text{Var}(X)=\text{Var}(Y)=2$, it holds that
\begin{align*}
    \mathbb{E}[(\vartheta_t^i\vartheta_t^j)^2]=&\text{Var}(c_1X-c_2Y )+(\mathbb{E}[c_1X-c_2Y])^2\\
    =&2c_1^2+2c_2^2-2c_1c_2\text{Cov}(X,Y)+(c_1-c_2)^2\\
    \leq& 4c_1^2+4c_2^2+2c_1c_2\sqrt{\text{Var}(X)\text{Var}(Y)}\\
    =&4c_1^2+4c_2^2+4c_1c_2.
\end{align*}
Therefore, we get
\begin{align*}
    \mathbb{E}[\Vert \phi(x_t,u_t)\Vert^2]&= \sum\limits_{1\leq i,j\leq d+k}\mathbb{E}(v_iv_j)^2\\
    &\leq (d+k)^2(4c_1^2+4c_2^2+4c_1c_2),
\end{align*}
which is bounded.

Overall, we have shown that there exists a constant $C>0$ such that
\begin{align*}
    \mathbb{E}[c^2_t]\leq C,\\
    \mathbb{E}[\Vert \phi(x_t,u_t)\Vert^2]\leq C.
\end{align*}
\end{proof}
\noindent \textbf{Proof of Proposition \ref{cost_function}}:
\begin{proof}
It can be shown that
    \begin{align*}
    J(K_t)=&\mathbb{E}_{(x_t,u_t)}[c(x_t,u_t)] \\
    =&\mathbb{E}[x_t^\top Qx_t+u_t^\top Ru_t] \\
    =&\mathbb{E}[x_t^\top Qx_t+(-Kx_t+\sigma \zeta_t)^\top R(-Kx_t+\sigma\zeta_t)] \\
    =&\mathbb{E}_{x_t\sim \rho_{K_t}}\mathbb{E}_{\zeta_t\sim \mathcal{N}(0,I_k)}[x_t^\top(Q+K_t^\top RK_t)x_t\\
    &-\sigma x_t^\top K_t^\top R\zeta_t -\sigma\zeta_t^\top RK_tx_t+\sigma^2\zeta_t^\top R\zeta_t ] \\
    =&\mathbb{E}_{x_t\sim\rho_{K_t}}[x_t^\top (Q+K_t^\top RK_t)x_t]+\sigma^2\text{Tr}(R)  \\
    =&\text{Tr}((Q+K_t^\top RK_t)D_{K_t})+\sigma^2\text{Tr}(R)\\
    \leq & \Vert (Q+K_t^\top RK_t)D_{K_t}\Vert_{\text{F}}+\sigma^2\text{Tr}(R)\\
    \leq & \Vert Q\Vert_{\text{F}}+\Vert K_t \Vert^2_{\text{F}}+\Vert R\Vert_{\text{F}}+\Vert D_{K_t}\Vert_{\text{F}}+\sigma^2\text{Tr}(R)\\
    \leq & \Vert Q\Vert_{\text{F}}+d\Bar{K}^2+\Vert R\Vert_{\text{F}}+\sqrt{d}\Vert D_{K_t}\Vert +\sigma^2\text{Tr}(R)\\
    \leq &\Vert Q\Vert_{\text{F}}+d\Bar{K}^2+\Vert R\Vert_{\text{F}}+\sigma^2\text{Tr}(R)\\
    &+\frac{c_1\sqrt{d}}{1-(\frac{1+\rho}{2})^2}\Vert D_{\sigma}\Vert\\
    := & U,
\end{align*}
where the last inequality comes from Proposition \ref{pa1.2.1}.
\end{proof}
\noindent \textbf{Proof of Proposition \ref{p1}}:
\begin{proof}
\begin{align*}
    &|J(K_{t+1})-J(K_t)|\\
    =&|\text{Tr}((P_{K_{t+1}}-P_{K_t})D_{\sigma})| \\
    \leq& d\Vert D_{\sigma} \Vert \Vert P_{K_{t+1}}-P_{K_t}\Vert\\
    \leq& 6d\Vert D_{\sigma} \Vert \sigma_{\text{min}}^{-1}(D_0)\Vert D_{K_t}\Vert \Vert K_t\Vert \Vert R\Vert(\Vert K_t\Vert \Vert B\Vert \Vert A-BK_t\Vert\\
    &+\Vert K_t\Vert \Vert B\Vert +1)\Vert K_{t+1}-K_t\Vert\\
    \leq& 6c_1d\bar{K}\sigma_{\text{min}}^{-1}(D_0)\frac{\Vert D_{\sigma} \Vert^2}{1-(\frac{1+\rho}{2})^2}\Vert R\Vert(\bar{K}\Vert B\Vert(\Vert A\Vert \\
    &+\bar{K}\Vert B\Vert+1) +1)\Vert K_{t+1}-K_t\Vert\\
    =&l_1\Vert K_{t+1}-K_t\Vert,
\end{align*}
where the second inequality is due to the perturbation of $P_K$ in Lemma \ref{l2} and
\begin{align*}
    l_1:=&6c_1d\bar{K}\sigma_{\text{min}}^{-1}(D_0)\frac{\Vert D_{\sigma} \Vert^2}{1-(\frac{1+\rho}{2})^2}\Vert R\Vert(\bar{K}\Vert B\Vert\cdot \\
    &(\Vert A\Vert+\bar{K}\Vert B\Vert+1) +1).
\end{align*}
Thus we finish our proof.
\end{proof}
\noindent \textbf{Proof of Proposition \ref{p2}}:
\begin{proof}
From Proposition \ref{pro3}, we know that 
\begin{align*}
    A_{K_t}=2(\tilde{D}_{K_t}\otimes_s\tilde{D}_{K_t})(I-L^\top\otimes_sL^\top).
\end{align*}
By Assumption \ref{a2}, we have $\rho(L)=\rho(A-BK_t)\leq \rho<1$.
Then we have 
\begin{align*}
    \Vert A_{K_t}^{-1}\Vert &=\frac{1}{2}\Vert (I-L^\top\otimes_sL^\top)^{-1}(\tilde{D}_{K_t}\otimes_s\tilde{D}_{K_t})^{-1}\Vert\\
    &\leq \frac{1}{2}\Vert (I-L^\top\otimes_sL^\top)^{-1}\Vert \Vert(\tilde{D}_{K_t}\otimes_s\tilde{D}_{K_t})^{-1}\Vert\\
    &\leq \frac{1}{2(1-\rho^2)}\Vert \tilde{D}_{K_t}^{-1}\Vert^2\\
    &= \frac{1}{2(1-\rho^2)\sigma^2_{\text{min}}(\tilde{D}_{K_t})}.\\
\end{align*}
To bound $\sigma_{\text{min}}(\tilde{D}_{K_t})$, for any $a\in\mathbb{R}^d$ and $b\in\mathbb{R}^k$, we have
\begin{align*}
    &\begin{pmatrix}
    a^\top & b^\top
    \end{pmatrix} \tilde{D}_{K_t}\begin{pmatrix}
    a\\b
    \end{pmatrix}\\
    =&\mathbb{E}_{(x,u)\sim\mathcal{N}(0,\tilde{D}_{K_t})}[\begin{pmatrix}
    a^\top & b^\top
    \end{pmatrix} \begin{pmatrix}
    x \\ u
    \end{pmatrix}\begin{pmatrix}
    x^\top & u^\top
    \end{pmatrix}\begin{pmatrix}
    a\\b
    \end{pmatrix}]\\
    =&\mathbb{E}_{(x,u)\sim\mathcal{N}(0,\tilde{D}_{K_t})}[((a^\top-b^\top K_t)x+\sigma b^\top \zeta)\cdot \\
    &((a^\top-b^\top K_t)x+\sigma b^\top \zeta)^\top]\\
    =&\mathbb{E}_{x\sim\mathcal{N}(0,D_{K_t}), \zeta\sim\mathcal{N}(0,I_k)}[(a^\top -b^\top K_t)xx^\top (a-K_t^\top b)\\
    &+\sigma^2 b^\top \zeta \zeta^\top b]\\
    \ge &\sigma_{\text{min}}(D_{K_t})\Vert a-K_t^\top b\Vert^2+\sigma^2\Vert b\Vert^2.\\
\end{align*}
For $\Vert a-K_t^\top b\Vert^2$, we have
\begin{align*}
    \Vert a-K_t^\top b\Vert^2\ge &\Vert a\Vert^2+\Vert K^\top_tb\Vert^2-2\Vert a\Vert \Vert K_t^\top\Vert\Vert b\Vert\\
    \ge&\Vert a\Vert^2-2\bar{K}\Vert a\Vert\Vert b\Vert\\
    \ge &\Vert a\Vert^2-\frac{1}{2}(\Vert a\Vert^2+4\bar{K}^2\Vert b\Vert^2)\\
    =&\frac{1}{2}\Vert a\Vert^2-2\bar{K}^2\Vert b\Vert^2.
\end{align*}
Hence we get
\begin{align*}
    &\begin{pmatrix}
    a^\top & b^\top
    \end{pmatrix} \tilde{D}_{K_t}\begin{pmatrix}
    a\\b
    \end{pmatrix}\\
    \ge &\sigma_{\text{min}}(D_{K_t})\Vert a-K_t^\top b\Vert^2+\sigma^2\Vert b\Vert^2\\
    \ge &\sigma_{\text{min}}(D_{K_t})(\frac{1}{2}\Vert a\Vert^2-2\bar{K}^2\Vert b\Vert^2)+\sigma^2\Vert b\Vert^2\\
    \ge &\min \{\sigma_{\min} (D_0),\frac{\sigma^2}{4\bar{K}^2} \}(\frac{1}{2}\Vert a\Vert^2-2\bar{K}^2\Vert b\Vert^2)+\sigma^2\Vert b\Vert^2\\
    \ge &\min \{\frac{\sigma_{\min} (D_0)}{2},\frac{\sigma^2}{8\bar{K}^2},\frac{\sigma^2}{2} \}(\Vert a\Vert^2+\Vert b\Vert^2).
\end{align*}
Thus we have 
\begin{align*}
    \sigma_{\text{min}}(\tilde{D}_{K_t})\ge \min \{\frac{\sigma_{\min} (D_0)}{2},\frac{\sigma^2}{8\bar{K}^2},\frac{\sigma^2}{2} \}>0,
\end{align*}
which further implies
\begin{align*}
    \Vert A^{-1}_{K_t}\Vert &\leq \frac{1}{2(1-\rho^2)\sigma^2_{\min}(\tilde{D}_{K_t})}\\
    &\leq \frac{1}{2(1-\rho^2)(\min \{\frac{\sigma_{\min} (D_0)}{2},\frac{\sigma^2}{8\bar{K}^2},\frac{\sigma^2}{2} \})^2}.
\end{align*}
We define
\begin{align*}
    \lambda:=2(1-\rho^2)(\min \{\frac{\sigma_{\min} (D_0)}{2},\frac{\sigma^2}{8\bar{K}^2},\frac{\sigma^2}{2} \})^2
\end{align*}
such that we get
\begin{align*}
    \sigma_{\text{min}}(A_{K_t})\ge \lambda,
\end{align*}
which concludes the proof.
\end{proof}
\noindent \textbf{Proof of Proposition \ref{p3}}:
\begin{proof}
\begin{align*}
    &\Vert\omega^\ast_t-\omega^\ast_{t+1}\Vert\\
    =& \Vert \text{svec}(\Omega_{K_t}-\Omega_{K_{t+1}})\Vert \\
    =&\Vert \Omega_{K_t}-\Omega_{K_{t+1}}\Vert_{\text{F}}\\
=&\Vert \begin{bmatrix}
A^\top (P_{K_t}-P_{K_{t+1}}) A & A^\top (P_{K_t}-P_{K_{t+1}})B \\ B^\top (P_{K_t}-P_{K_{t+1}}) A & B^\top (P_{K_t}-P_{K_{t+1}})B
\end{bmatrix}\Vert_{\text{F}}\\
=&\Vert A^\top (P_{K_t}-P_{K_{t+1}}) A\Vert_{\text{F}}+\Vert A^\top (P_{K_t}-P_{K_{t+1}}) B\Vert_{\text{F}}\\
&+\Vert B^\top (P_{K_t}-P_{K_{t+1}}) A\Vert_{\text{F}} + \Vert B^\top (P_{K_t}-P_{K_{t+1}}) B\Vert_{\text{F}}\\
\leq& d^{\frac{3}{2}}(\Vert A\Vert+\Vert B\Vert)^2\Vert P_{K_t}-P_{K_{t+1}}\Vert\\
\leq& 6d^{\frac{3}{2}}(\Vert A\Vert+\Vert B\Vert)^2\sigma_{\text{min}}^{-1}(D_0)\Vert D_{K_t}\Vert \Vert K_t\Vert \Vert R\Vert \cdot  \\
&(\Vert K_t\Vert \Vert B\Vert \Vert A-BK_t\Vert+\Vert K_t\Vert \Vert B\Vert +1)\Vert K_{t+1}-K_t\Vert\\
\leq& 6c_1d^{\frac{3}{2}}(\Vert A\Vert+\Vert B\Vert)^2 \sigma_{\text{min}}^{-1}(D_0)\frac{\Vert D_{\sigma}\Vert}{1-(\frac{1+\rho}{2})^2}\bar{K}\Vert R\Vert \cdot \\
&(\bar{K}\Vert B\Vert(\Vert A\Vert +\bar{K}\Vert B\Vert+1) +1)\Vert K_{t+1}-K_t\Vert\\
=&l_2 \Vert K_{t+1}-K_t\Vert,
\end{align*}
where
\begin{align}
    l_2:=&6c_1d^{\frac{3}{2}}\bar{K}(\Vert A\Vert+\Vert B\Vert)^2 \sigma_{\text{min}}^{-1}(D_0)\frac{\Vert D_{\sigma}\Vert\Vert R\Vert}{1-(\frac{1+\rho}{2})^2}\cdot \nonumber
    \\&(\bar{K}\Vert B\Vert(\Vert A\Vert +\bar{K}\Vert B\Vert+1) +1).
\end{align}
\end{proof}
\section{Proof of Auxiliary Lemmas}\label{appendix3}
The following lemmas are well known and have been established in several papers \cite{yang2019provably,fazel2018global}. We include the proof here only for completeness.

\noindent \textbf{Proof of Lemma \ref{pro1}}:
\begin{proof}
Since we focus on the family of linear-Gaussian policies defined in \eqref{policy}, we have
\begin{align}
    J(K)=&\mathbb{E}_{(x,u)}[c(x,u)] \nonumber\\
    =&\mathbb{E}_{(x,u)}[x^\top Qx+u^\top Ru] \nonumber\\
    =&\mathbb{E}_{(x,u)}[x^\top Qx+(-Kx+\sigma \zeta)^\top R(-Kx+\sigma\zeta)] \nonumber \\
    =&\mathbb{E}_{x\sim \rho_K}\mathbb{E}_{\zeta\sim I_k}[x^\top(Q+K^\top RK)x-\sigma x^\top K^\top R\zeta \nonumber \\
    -&\sigma\zeta^\top RKx+\sigma^2\zeta^\top R\zeta ] \nonumber \\
    =&\mathbb{E}_{x\sim\rho_K}[x^\top (Q+K^\top RK)x]+\sigma^2\text{Tr}(R) \nonumber \\
    =&\text{Tr}((Q+K^\top RK)D_K)+\sigma^2\text{Tr}(R). \label{300}
\end{align}
Furthermore, for $K\in\mathbb{R}^{k\times d}$ such that $\rho(AB-K)<1$ and positive definite matrix $S\in\mathbb{R}^{d\times d}$, we define the following two operators
\begin{align}\label{operator}
    \Gamma_K(S)=\sum\limits_{t\ge 0}(A-BK)^tS[(A-BK)^t]^\top, \nonumber \\ \Gamma^\top_K(S)=\sum\limits_{t\ge 0}[(A-BK)^t]^\top S(A-BK)^t.
\end{align}
Hence, $\Gamma_K(S)$ and $\Gamma_K^\top (S)$ satisfy Lyapunov equations
\begin{align}
    &\Gamma_K(S)=S+(A-BK)\Gamma_K(S)(A-BK)^\top, \label{op1}\\
    &\Gamma^\top_K(S)=S+(A-BK)^\top\Gamma^\top_K(S)(A-BK) \label{op2}
\end{align}
respectively. Therefore, for any positive definite matrices $S_1$ and $S_2$, we get
\begin{align*}
    \text{Tr}(S_1\Gamma_K(S_2))&=\sum\limits_{t\ge 0}\text{Tr}(S_1(A-BK)^tS_2[(A-BK)^t]^\top)\\
    &=\sum\limits_{t\ge 0}\text{Tr}([(A-BK)^t]^\top S_1(A-BK)^tS_2)\\
    &=\text{Tr}(\Gamma_K^\top(S_1)S_2).
\end{align*}
Combining \eqref{lyap1}, \eqref{lyap2}, \eqref{op1} and \eqref{op2}, we know that
\begin{align}\label{operatorpk}
    D_K=\Gamma_K(D_{\sigma}), \quad P_K=\Gamma^\top_K(Q+K^\top RK).
\end{align}
Thus \eqref{300} implies
\begin{align*}
    J(K)&=\text{Tr}((Q+K^\top RK)D_K)+\sigma^2\text{Tr}(R)\\
    &=\text{Tr}((Q+K^\top RK)\Gamma_K(D_{\sigma}))+\sigma^2\text{Tr}(R)\\
    &=\text{Tr}(\Gamma_K^\top (Q+K^\top RK)D_{\sigma})+\sigma^2\text{Tr}(R)\\
    &=\text{Tr}(P_KD_{\sigma})+\sigma^2\text{Tr}(R).
\end{align*}
It remains to establish the gradient of $J(K)$. Based on \eqref{300}, we have
\begin{align*}
    \nabla_K J(K)=&\nabla_K \text{Tr}((Q+K^\top RK)C))|_{C=D_K}\\
    &+\nabla_K\text{Tr}(CD_K)|_{C=Q+K^\top RK},
\end{align*}
where we use $C$ to denote that we compute the gradient with respect to $K$ and then substitute the expression of $C$. Hence we get
\begin{align}\label{302}
    \nabla_K J(K)=2RKD_K+\nabla_K \text{Tr}(C_0D_K)|_{C_0=Q+K^\top RK}.
\end{align}
Furthermore, we have
\begin{align*}
    &\nabla_K \text{Tr}(C_0D_K)\\
    =&\nabla_K \text{Tr}(C_0\Gamma_K(D_{\sigma}))\\
    =&\nabla_K \text{Tr}(C_0D_{\sigma}+C_0(A-BK)\Gamma_K(D_{\sigma})(A-BK)^\top)\\
    =&\nabla_K \text{Tr}(C_0D_{\sigma})\\
    &+\nabla_K \text{Tr}((A-BK)^\top C_0(A-BK)\Gamma_K(D_{\sigma}))\\
    =&-2B^\top C_0(A-BK)\Gamma_K(D_{\sigma})\\
    &+\nabla_K \text{Tr}(C_1 \Gamma_K(D_{\sigma}))|_{C_1=(A-BK)^\top C_0(A-BK)}.
\end{align*}
Then it reduces to compute $\nabla_K \text{Tr}(C_1 \Gamma_K(D_{\sigma}))|_{C_1=(A-BK)^\top C_0(A-BK)}$. Applying this iteration for $n$ times, we get
\begin{align}\label{301}
    &\nabla_K \text{Tr}(C_0D_K) \nonumber \\
    =&-2B^\top\sum\limits_{t=0}^nC_t(A-BK)\Gamma_K(D_{\sigma})\nonumber \\
    &+\nabla_K \text{Tr}(C_n \Gamma_K(D_{\sigma}))|_{C_n=[(A-BK)^n]^\top C_0(A-BK)^n}.
\end{align}
Meanwhile, by Lyapunov equation defined in \eqref{lyap3}, we have
\begin{align*}
    \sum\limits_{t=0}^{\infty}C_t=&\sum\limits_{t=0}^{\infty}[(A-BK)^t]^\top (Q+K^\top RK)(A-BK)^t\\
    =&P_K.
\end{align*}
Since $\rho(A-BK)<1$, we further get
\begin{align*}
    &\lim\limits_{n\to\infty}\text{Tr}(C_n\Gamma_K(D_{\sigma}))\\
    \leq &\lim\limits_{n\to\infty}\Vert (Q+K^\top RK)\Vert \rho(A-BK)^{2n}\text{Tr}(\Gamma_K(D_{\sigma}))\\
    =& 0.
\end{align*}
Thus by letting $n$ go to infinity in \eqref{301}, we get
\begin{align*}
    &\nabla_K \text{Tr}(C_0D_K)|_{C_0=Q+K^\top RK}\\
    =&-2B^\top P_K(A-BK)\Gamma_K(D_{\sigma})\\
    =&-2B^\top P_K(A-BK)D_K.
\end{align*}
Hence, combining \eqref{302}, we have
\begin{align*}
    \nabla_K J(K)&=2RKD_K-2B^\top P_K(A-BK)D_K\\
    &=2[(R+B^\top P_KB)K-B^\top P_KA]D_K,
\end{align*}
which concludes our proof.
\end{proof}
\noindent \textbf{Proof of Lemma \ref{pro2}}:
\begin{proof}
By definition, we have the state-value function as follows
\begin{align}
    V_{\theta}(x):&=\sum\limits_{t=0}^{\infty}\mathbb{E}_{\theta}[(c(x_t,u_t)-J(\theta))|x_0=x]\nonumber \\
    &=\mathbb{E}_{u\sim\pi_{\theta}(\cdot|x)}[Q_{\theta}(x,u)],
\end{align}
Therefore, we have
\begin{align}
    &V_K(x)\nonumber \\
    =&\sum\limits_{t=0}^{\infty}\mathbb{E}[c(x_t,u_t)-J(K)|x_0=x,u_t=-Kx_t+\sigma \zeta_t]\nonumber \\
    =&\sum\limits_{t=0}^{\infty}\mathbb{E}\{[x_t^\top (Q+K^\top RK)x_t]+\sigma^2\text{Tr}(R)-J(K)\}.\label{303}
\end{align}
Combining the linear dynamic system in \eqref{eq:6} and the form of \eqref{303}, we see that $V_K(x)$ is a quadratic function, which can be denoted by
\begin{align*}
    V_K(x)=x^\top P_Kx+C_K,
\end{align*}
where $P_K$ is defined in \eqref{lyap3} and $C_K$ only depends on $K$. Moreover, by definition, we know that $\mathbb{E}_{x\sim\rho_K}[V_K(x)]=0$, which implies
\begin{align*}
    \mathbb{E}_{x\sim\rho_K}[x^\top P_Kx+C_K]=\text{Tr}(P_KD_K)+C_K=0.
\end{align*}
Thus we have $C_K=-\text{Tr}(P_KD_K)$. Hence, the expression of $V_K(x)$ is given by
\begin{align*}
    V_K(x)=x^\top P_K x-\text{Tr}(P_KD_K).
\end{align*}
Therefore, the action-value function $Q_K(x,u)$ can be written as
\begin{align*}
    Q(x,u)=&c(x,u)-J(K)+\mathbb{E}[V_K(x')|x,u]\\
    =&c(x,u)-J(K)+(Ax+Bu)^\top P_K(Ax+Bu)\\
    &+\text{Tr}(P_KD_0)-\text{Tr}(P_KD_K)\\
    =&x^\top Qx+u^\top Ru+(Ax+Bu)^\top P_K(Ax+Bu)\\
    &-\sigma^2\text{Tr}(R+P_KBB^\top)-\text{Tr}(P_K\Sigma_K).
\end{align*}
Thus we finish the proof.
\end{proof}
\noindent \textbf{Proof of \Cref{l2}}:
\begin{proof}
    See Lemma 5.7 in \citet{yang2019provably} for a detailed proof.
\end{proof}
\noindent \textbf{Proof of \Cref{l6}}:
\begin{proof}
    By the definition of operator in \eqref{operator} and \eqref{operatorpk}, we have
    \begin{align*}
        &x^\top P_{K'}x\\
        =&x^\top \Gamma_{K'}^\top(Q+K'^\top RK')x\\
        =&\sum\limits_{t\ge 0}x^\top [(A-BK')^t]^\top (Q+K'^\top RK')(A-BK')^tx.
    \end{align*}
    Hereafter, we define $(A-BK')^tx=x_t'$ and $u_t'=-K'x'_t$. Hence, we further have
    \begin{align*}
        x^\top P_{K'}x=&\sum\limits_{t\ge 0}x'^\top_t(Q+K'^\top RK')x'_t\\
        =&\sum\limits_{t\ge 0}(x'^\top_tQx'_t+u'^\top_tRu'_t).
    \end{align*}
    Therefore, we get
    \begin{align*}
        &x^\top P_{K'}x-x^\top P_Kx\\
        =&\sum\limits_{t\ge 0}[(x'^\top_tQx'_t+u'^\top Ru'_t)+x'^\top_tP_Kx'_t-x'^\top_tP_Kx'_t]\\
        &-x'^\top_0P_Kx'_0\\
        =&\sum\limits_{t\ge 0}[(x'^\top_tQx'_t+u'^\top Ru'_t)+x'^\top_{t+1}P_Kx'_{t+1}-x'^\top_tP_Kx'_t]\\
        =&\sum\limits_{t\ge 0}[(x'^\top_t Qx'_t+u'^\top_t Ru'_t)\\
        &+[(A-BK')x'_t]^\top P_K(A-BK')x'_t-x'_tP_Kx'_t]\\
        =&\sum\limits_{t\ge 0}\{x'^\top_t[Q+(K'-K+K)^\top R(K'-K+K)]x'_t\\
        &+x'^\top_t [A-BK-B(K'-K)^\top P_K [A-BK\\
        &-B(K'-K)]x'_t-x'_tP_Kx'_t\}\\
        =&\sum\limits_{t\ge 0}\{2x_t^\top(K'-K)^\top [(R+B^\top P_KB)K-B^\top P_KA]x'_t\\
        &+x'^\top_t(K'-K)^\top (R+B^\top P_KB)(K'-K)x'_t\}\\
        =&\sum\limits_{t\ge 0}[2x'^\top_t(K'-K)^\top E_Kx'_t+x'^\top_t(K'-K)^\top \cdot \\ &(R+B^\top P_KB)(K'-K)x'_t].
    \end{align*}
    Define
    \begin{align}\label{ak2}
        A_{K, K'}&(x):=2x^\top(K'-K)^\top E_Kx  \nonumber \\
        +&x^\top(K'-K)^\top(R+B^\top P_KB)(K'-K)x.
    \end{align}
    Then, from the expression of $J(K)$ in \eqref{eq.cost_formula}, we have
    \begin{align*}
        &J(K')-J(K)\\
        =&\mathbb{E}_{x\sim\mathcal{N}(0,D_{\sigma})}[x^\top (P_{K'}-P_K)x] \\
        =&\mathbb{E}_{x'_0\sim\mathcal{N}(0,D_{\sigma})}\sum\limits_{t\ge 0}A_{K,K'}(x_t) \\
        =&\mathbb{E}_{x'_0\sim\mathcal{N}(0,D_{\sigma})}\sum\limits_{t\ge 0}[2x'^\top_t(K'-K)^\top E_Kx'_t\\
        &+x'^\top_t(K'-K)^\top (R+B^\top P_KB)(K'-K)x'_t] \nonumber \\
        =&\text{Tr}(2\mathbb{E}_{x'_0\sim\mathcal{N}(0,D_{\sigma})}[\sum\limits_{t\ge 0}x'^\top_tx'_t](K'-K)^\top E_K)\nonumber \\
        &+\text{Tr}(\mathbb{E}_{x'_0\sim\mathcal{N}(0,D_{\sigma})}[\sum\limits_{t\ge 0}x'^\top_tx'_t](K'-K)^\top \cdot \\
        &(R+B^\top P_K B)(K'-K))\nonumber \\
        =&-2\text{Tr}(D_{K'}(K-K')^\top E_K)\\
        &+\text{Tr}(D_{K'}(K-K')^\top (R+B^\top P_K B)(K-K')). \nonumber 
    \end{align*}
    where the last equation is due to the fact that
    \begin{align*}
        &\mathbb{E}_{x'_0\sim\mathcal{N}(0,D_{\sigma})}[\sum\limits_{t\ge 0}x'_t(x'_t)^\top]\\
        =&\mathbb{E}_{x\sim\mathcal{N}(0,D_{\sigma})}\{ \sum\limits_{t\ge 0}(A-BK')^txx^\top [(A-BK')^t]^\top \}\\
        =&\Gamma_{K'}(D_{\sigma})=D_{K'}.
    \end{align*}
    Hence, we finish our proof.
\end{proof}
\noindent \textbf{Proof of \Cref{lem:l7}}:
\begin{proof}
    By definition of $A_{K,K'}$ in \eqref{ak2}, we have
    \begin{align*}
        &A_{K, K'}(x)\\
        =&2x^\top(K'-K)^\top E_Kx\\
        &+x^\top(K'-K)^\top (R+B^\top P_KB)(K'-K)x\\
        =&\text{Tr}(xx^\top [K'-K+(R+B^\top P_K B)^{-1}E_K]^\top \cdot \\
        &(R+B^\top P_KB)[K'-K +(R+B^\top P_K B)^{-1}E_K])\\
        &-\text{Tr}(xx^\top E_K^\top (R+B^\top P_K B)^{-1}E_K)\\
        \ge & -\text{Tr}(xx^\top E_K^\top (R+B^\top P_K B)^{-1}E_K),
    \end{align*}
    where the equality is satisfied when $K'=K-(R+B^\top P_K B)^{-1}E_K$. Therefore, we have
    \begin{align*}
        J(K)-J(K^\ast)&=-\mathbb{E}_{x'_0\sim\mathcal{N}(0,D_{\sigma})}\sum\limits_{t\ge 0}A_{K,K^\ast}(x_t)\\
        &\leq \text{Tr}(D_{K^\ast}E_K^\top (R+B^\top P_K B)^{-1}E_K)\\
        &\leq \Vert D_{K^\ast}\Vert \text{Tr}(E_K^\top (R+B^\top P_K B)^{-1}E_K)\\
        &\leq \Vert D_{K^\ast}\Vert \Vert (R+B^\top P_K B)^{-1}\Vert \text{Tr}(E_K^\top E_K)\\
        &\leq \frac{1}{\sigma_{\text{min}}(R)}\Vert D_{K^\ast}\Vert\text{Tr}(E_K^\top E_K).
    \end{align*}
    Thus we complete the proof of upper bound.
    
    It remains to establish the lower bound. Since the equality is attained at $K'=K-(R+B^\top P_K B)^{-1}E_K$, we choose this $K'$ such that
    \begin{align*}
        J(K)-J(K^\ast)&\ge J(K)-J(K')\\
        &=-\mathbb{E}_{x'_0\sim\mathcal{N}(0,D_{\sigma})}[\sum\limits_{t\ge 0}A_{K,K'}(x'_t)]\\
        &=\text{Tr}(D_{K'}E^\top_K(R+B^\top P_K B)^{-1}E_K)\\
        &\ge \sigma_{\text{min}}(D_0)\Vert R+B^\top P_K B\Vert ^{-1}\text{Tr}(E_K^\top E_K).
    \end{align*}
    Overall, we have
    \begin{align*}
        &\sigma_{\text{min}}(D_0)\Vert R+B^\top P_K B\Vert ^{-1}\text{Tr}(E_K^\top E_K)\leq J(K)-J(K^\ast)\\
        &\leq \frac{1}{\sigma_{\text{min}}(R)}\Vert D_{K^\ast}\Vert\text{Tr}(E_K^\top E_K),
    \end{align*}
    which concludes our proof.
\end{proof}
\section{Experimental details}

We compare our considered single-sample two-timescale AC with two other baseline algorithms that have been analyzed in the state-of-the-art theoretical works: the double loop AC~\cite{yang2019provably} (listed in~\Cref{alg2} on the next page) and the zeroth-order method~\cite{fazel2018global} (listed in~\Cref{alg3} on the next page). 

For the considered single-sample two-timescale AC, we set for both examples $\alpha_t=\frac{0.005}{(1+t)^{0.6}}, \beta_t=\frac{0.01}{(1+t)^{0.4}}, \gamma_t=\frac{0.1}{(1+t)^{0.4}}, \sigma=1, T=10^6$. Note that multiplying small constants to these step sizes does not affect our theoretical results.

For the zeroth-order method proposed in~\citet{fazel2018global}, we set $z=5000, l=20, r=0.1$, stepsize $\eta=0.01$ and iteration number $J=1000$ for the first numerical example; while in the second example, we set $z=20000, l=50, r=0.1, \eta=0.01, J=1000$. We choose different parameters based on the trade-off between better performance and fewer sample complexity.

For the double loop AC proposed in \citet{yang2019provably}, we set for both examples $\alpha_t=\frac{0.01}{\sqrt{1+t}}, \sigma=0.2, \eta=0.05$, inner-loop iteration number $T=500000$ and outer-loop iteration number $J=100$. We note that the algorithm is fragile and sensitive to the practical choice of these parameters. Moreover, we found that it is difficult for the algorithm to converge without an accurate critic estimation in the inner-loop. In our implementation, we have to set the inner-loop iteration number to $T=500000$ to barely get the algorithm converge to the global optimum. This nevertheless demands a significant amount of computation. Higher $T$ iterations can yield more accurate critic estimation, and consequently more stable convergence, but at a price of even longer running time. We run the outer-loop for 100 times for each run of the algorithm. We run the whole algorithm 10 times independently to get the results shown in Figure~\ref{fig2}. With parallel computing implementation, it takes more than 2 weeks on our desktop workstation (Intel Xeon(R) W-2225 CPU @ 4.10GHz $\times$ 8) to finish the computation. In comparison, it takes about 0.5 hour to run the single-sample two-timescale AC and 5 hours for the zeroth-order method.
\newpage

\begin{algorithm}[H]
\caption{Double-loop Natural Actor-Critic}
\label{alg2}
\begin{algorithmic}
   \STATE Input: Initial policy $\pi_{K_0}$ such that $\rho(A-BK_0)<1$, step size $\gamma$ for policy update. 
   \WHILE{updating current policy}
   \STATE \textbf{Gradient Estimation:}
   \STATE Initialize the primal and dual variables by $v_0\in\mathcal{X}_{\Theta}$ and $\omega_0\in\mathcal{X}_{\Omega}$, respectively.
   \STATE Sample the initial state $x_0\in\mathbb{R}^d$ from stationary distribution $\rho_{K_j}$. Take action $u_0\sim \pi_{K_j}(\cdot|x_0)$ and obtain the reward $c_0$ and the next state $x_1$.
   \FOR{$i=1,2,\cdots, T$}
   \STATE Take action $u_t$ according to policy $\pi_{K_j}$, observe the reward $c_t$ and the next state $x_{t+1}$.
   \STATE $\delta_t=v^1_{t-1}-c_{t-1}+[\phi(x_{t-1},u_{t-1})-\phi(x_t,u_t)]^\top v^2_{t-1}$.
   \STATE $v^1_{t}=v^1_{t-1}-\alpha_t[\omega^1_{t-1}+\phi(x_{t-1},u_{t-1})^\top \omega^2_{t-1}]$.
   \STATE $v^2_{t}=v^2_{t-1}-\alpha_t[\phi(x_{t-1},u_{t-1})-\phi(x_t,u_t)]\cdot \phi(x_{t-1},u_{t-1})^\top\omega^2_{t-1}$.
   \STATE $\omega^1_t=(1-\alpha_t)\omega_t^1+\alpha_t(v^1_{t-1}-c_{t-1})$.
   \STATE $\omega^2_t=(1-\alpha_t)\omega^2_t+\alpha_t\delta_t\phi(x_{t-1},u_{t-1})$.
   \STATE Project $v_t$ and $\omega_t$ to $v_0\in\mathcal{X}_{\Theta}$ and $\omega_0\in\mathcal{X}_{\Omega}$.
   \ENDFOR
\STATE Return estimates:
   \begin{equation}
   \nonumber
    \widehat{v}^2 = (\sum\limits_{t=1}^T\alpha_tv^2_t)/(\sum\limits_{t=1}^T\alpha_t),\; \widehat{\Theta}=\text{smat}(\widehat{v}^2).
   \end{equation}
    \STATE \textbf{Policy Update:}
    \STATE $K_{j+1}=K_j-\eta(\widehat{\Theta}^{22}K_j-\widehat{\Theta}^{21}).$
    \STATE $j=j+1$.
\ENDWHILE
\end{algorithmic}
\end{algorithm}

\begin{algorithm}[H]
\caption{Zeroth-order Natural Policy Gradient}
\label{alg3}
\begin{algorithmic}
   \STATE Input: stabilizing policy gain $K_0$ such that $\rho(A-BK_0)<1$, number of trajectories $z$, roll-out length $l$, perturbation amplitude $r$, stepsize $\eta$
   \WHILE{updating current policy}
   \STATE \textbf{Gradient Estimation:}
   \FOR{$i=1,\cdots, z$}
   \STATE Sample $x_0$ from $\mathcal{D}$
   \STATE Simulate $K_{j}$ for $l$ steps starting from $x_0$ and observe $y_0, \cdots, y_{l-1}$ and $c_0, \cdots, c_{l-1}$. 
   \STATE Draw $U_i$ uniformly over matrices such that $\|U_i\|_F=1$, and generate a policy $K_{j,U_i}=K_j+rU_i$.
   \STATE Simulate $K_{j,U_i}$ for $l$ steps starting from $x_0$ and observe $c_0', \cdots, c_{l-1}'$.
   \STATE Calculate empirical estimates:
   \begin{equation}
   \nonumber
   \widehat{J_{K_{j}}^i}=\sum_{t=0}^{l-1} c_t,\; \widehat{\mathcal{L}_{K_{j}}^i}=\sum_{t=0}^{l-1} y_ty_t^\top,\; \widehat{J_{K_{j,U_i}}}=\sum_{t=0}^{l-1} c_t'.
   \end{equation}
   \ENDFOR
\STATE Return estimates:
   \begin{equation}
   \nonumber
    \widehat{\nabla J(K_j)} = \frac{1}{z}\sum_{i=1}^z\frac{\widehat{J_{K_{j,U_i}}}-\widehat{J_{K_{j}}^i}}{r}U_i,\; \widehat{\mathcal{L}_{K_j}}=\frac{1}{z}\sum_{i=1}^z\widehat{\mathcal{L}_{K_{j}}^i}.
   \end{equation}
    \STATE \textbf{Policy Update:}
    \STATE $K_{j+1}=K_j-\eta \widehat{\nabla J(K_j)}\widehat{\mathcal{L}_{K_j}}^{-1}.$
    \STATE $j=j+1$.
\ENDWHILE
\end{algorithmic}
\end{algorithm}


\begin{thebibliography}{59}
\providecommand{\natexlab}[1]{#1}

\bibitem[{Abbasi-Yadkori et~al.(2019)Abbasi-Yadkori, Bartlett, Bhatia, Lazic,
  Szepesvari, and Weisz}]{abbasi2019politex}
Abbasi-Yadkori, Y.; Bartlett, P.; Bhatia, K.; Lazic, N.; Szepesvari, C.; and
  Weisz, G. 2019.
\newblock Politex: Regret bounds for policy iteration using expert prediction.
\newblock In \emph{International Conference on Machine Learning}, 3692--3702.
  PMLR.

\bibitem[{Agarwal et~al.(2021)Agarwal, Kakade, Lee, and
  Mahajan}]{agarwal2021theory}
Agarwal, A.; Kakade, S.~M.; Lee, J.~D.; and Mahajan, G. 2021.
\newblock On the theory of policy gradient methods: Optimality, approximation,
  and distribution shift.
\newblock \emph{Journal of Machine Learning Research}, 22(98): 1--76.

\bibitem[{Anderson and Moore(2007)}]{anderson2007optimal}
Anderson, B.~D.; and Moore, J.~B. 2007.
\newblock \emph{Optimal control: linear quadratic methods}.
\newblock Courier Corporation.

\bibitem[{Barakat, Bianchi, and Lehmann(2022)}]{barakat2022analysis}
Barakat, A.; Bianchi, P.; and Lehmann, J. 2022.
\newblock Analysis of a Target-Based Actor-Critic Algorithm with Linear
  Function Approximation.
\newblock In \emph{International Conference on Artificial Intelligence and
  Statistics}, 991--1040. PMLR.

\bibitem[{Bertsekas(2019)}]{bertsekas2019reinforcement}
Bertsekas, D. 2019.
\newblock \emph{Reinforcement learning and optimal control}.
\newblock Athena Scientific.

\bibitem[{Bertsekas(2011)}]{bertsekas2011approximate}
Bertsekas, D.~P. 2011.
\newblock Approximate policy iteration: A survey and some new methods.
\newblock \emph{Journal of Control Theory and Applications}, 9(3): 310--335.

\bibitem[{Bhandari and Russo(2021)}]{bhandari2021linear}
Bhandari, J.; and Russo, D. 2021.
\newblock On the linear convergence of policy gradient methods for finite mdps.
\newblock In \emph{International Conference on Artificial Intelligence and
  Statistics}, 2386--2394. PMLR.

\bibitem[{Bhandari, Russo, and Singal(2018)}]{bhandari2018finite}
Bhandari, J.; Russo, D.; and Singal, R. 2018.
\newblock A finite time analysis of temporal difference learning with linear
  function approximation.
\newblock In \emph{Conference on learning theory}, 1691--1692. PMLR.

\bibitem[{Bhatnagar et~al.(2009)Bhatnagar, Sutton, Ghavamzadeh, and
  Lee}]{bhatnagar2009natural}
Bhatnagar, S.; Sutton, R.~S.; Ghavamzadeh, M.; and Lee, M. 2009.
\newblock Natural actor--critic algorithms.
\newblock \emph{Automatica}, 45(11): 2471--2482.

\bibitem[{Bradtke, Ydstie, and Barto(1994)}]{bradtke1994adaptive}
Bradtke, S.~J.; Ydstie, B.~E.; and Barto, A.~G. 1994.
\newblock Adaptive linear quadratic control using policy iteration.
\newblock In \emph{Proceedings of 1994 American Control Conference-ACC'94},
  volume~3, 3475--3479. IEEE.

\bibitem[{Castro and Meir(2010)}]{castro2010convergent}
Castro, D.~D.; and Meir, R. 2010.
\newblock A convergent online single time scale actor critic algorithm.
\newblock \emph{The Journal of Machine Learning Research}, 11: 367--410.

\bibitem[{Chen et~al.(2022)Chen, Duan, Liang, and Zhao}]{chen2022global}
Chen, X.; Duan, J.; Liang, Y.; and Zhao, L. 2022.
\newblock Global Convergence of Two-timescale Actor-Critic for Solving Linear
  Quadratic Regulator.
\newblock \emph{arXiv preprint arXiv:2208.08744}.

\bibitem[{Cohen, Koren, and Mansour(2019)}]{cohen2019learning}
Cohen, A.; Koren, T.; and Mansour, Y. 2019.
\newblock Learning Linear-Quadratic Regulators Efficiently with only $\sqrt{T}$
  Regret.
\newblock In \emph{International Conference on Machine Learning}, 1300--1309.
  PMLR.

\bibitem[{Dann et~al.(2014)Dann, Neumann, Peters et~al.}]{dann2014policy}
Dann, C.; Neumann, G.; Peters, J.; et~al. 2014.
\newblock Policy evaluation with temporal differences: A survey and comparison.
\newblock \emph{Journal of Machine Learning Research}, 15: 809--883.

\bibitem[{Dean et~al.(2018)Dean, Mania, Matni, Recht, and Tu}]{dean2018regret}
Dean, S.; Mania, H.; Matni, N.; Recht, B.; and Tu, S. 2018.
\newblock Regret bounds for robust adaptive control of the linear quadratic
  regulator.
\newblock \emph{Advances in Neural Information Processing Systems}, 31.

\bibitem[{Dean et~al.(2020)Dean, Mania, Matni, Recht, and Tu}]{dean2020sample}
Dean, S.; Mania, H.; Matni, N.; Recht, B.; and Tu, S. 2020.
\newblock On the sample complexity of the linear quadratic regulator.
\newblock \emph{Foundations of Computational Mathematics}, 20(4): 633--679.

\bibitem[{Duan et~al.(2022)Duan, Li, Li, and Zhao}]{duan2022optimization}
Duan, J.; Li, J.; Li, S.~E.; and Zhao, L. 2022.
\newblock Optimization landscape of gradient descent for discrete-time static
  output feedback.
\newblock In \emph{2022 American Control Conference (ACC)}, 2932--2937. IEEE.

\bibitem[{Fazel et~al.(2018)Fazel, Ge, Kakade, and Mesbahi}]{fazel2018global}
Fazel, M.; Ge, R.; Kakade, S.; and Mesbahi, M. 2018.
\newblock Global convergence of policy gradient methods for the linear
  quadratic regulator.
\newblock In \emph{International Conference on Machine Learning}, 1467--1476.
  PMLR.

\bibitem[{Fu, Yang, and Wang(2020)}]{fu2020single}
Fu, Z.; Yang, Z.; and Wang, Z. 2020.
\newblock Single-Timescale Actor-Critic Provably Finds Globally Optimal Policy.
\newblock In \emph{International Conference on Learning Representations}.

\bibitem[{Hu, Ji, and Telgarsky(2021)}]{hu2021actor}
Hu, Y.; Ji, Z.; and Telgarsky, M. 2021.
\newblock Actor-critic is implicitly biased towards high entropy optimal
  policies.
\newblock In \emph{International Conference on Learning Representations}.

\bibitem[{Joarder and Omar(2011)}]{joarder2011statistical}
Joarder, A.~H.; and Omar, M.~H. 2011.
\newblock On statistical characteristics of the product of two correlated
  chi-square variables.
\newblock \emph{Journal of Applied Statistical Science}, 19(4): 89--101.

\bibitem[{Kakade and Langford(2002)}]{kakade2002approximately}
Kakade, S.; and Langford, J. 2002.
\newblock Approximately optimal approximate reinforcement learning.
\newblock In \emph{In Proc. 19th International Conference on Machine Learning}.
  Citeseer.

\bibitem[{Kakade(2001)}]{kakade2001natural}
Kakade, S.~M. 2001.
\newblock A natural policy gradient.
\newblock \emph{Advances in neural information processing systems}, 14.

\bibitem[{Karmakar and Bhatnagar(2018)}]{karmakar2018two}
Karmakar, P.; and Bhatnagar, S. 2018.
\newblock Two time-scale stochastic approximation with controlled Markov noise
  and off-policy temporal-difference learning.
\newblock \emph{Mathematics of Operations Research}, 43(1): 130--151.

\bibitem[{Khodadadian et~al.(2022)Khodadadian, Doan, Romberg, and
  Maguluri}]{khodadadian2022finite}
Khodadadian, S.; Doan, T.~T.; Romberg, J.; and Maguluri, S.~T. 2022.
\newblock Finite sample analysis of two-time-scale natural actor-critic
  algorithm.
\newblock \emph{IEEE Transactions on Automatic Control}.

\bibitem[{Konda and Tsitsiklis(1999)}]{konda1999actor}
Konda, V.; and Tsitsiklis, J. 1999.
\newblock Actor-critic algorithms.
\newblock \emph{Advances in neural information processing systems}, 12.

\bibitem[{Krauth, Tu, and Recht(2019)}]{krauth2019finite}
Krauth, K.; Tu, S.; and Recht, B. 2019.
\newblock Finite-time analysis of approximate policy iteration for the linear
  quadratic regulator.
\newblock \emph{Advances in Neural Information Processing Systems}, 32.

\bibitem[{Kumar, Koppel, and Ribeiro(2019)}]{kumar2019sample}
Kumar, H.; Koppel, A.; and Ribeiro, A. 2019.
\newblock On the sample complexity of actor-critic method for reinforcement
  learning with function approximation.
\newblock arXiv:1910.08412.

\bibitem[{Magnus(1978)}]{magnus1978moments}
Magnus, J.~R. 1978.
\newblock The moments of products of quadratic forms in normal variables.
\newblock \emph{Statistica Neerlandica}, 32(4): 201--210.

\bibitem[{Malik et~al.(2019)Malik, Pananjady, Bhatia, Khamaru, Bartlett, and
  Wainwright}]{malik2019derivative}
Malik, D.; Pananjady, A.; Bhatia, K.; Khamaru, K.; Bartlett, P.; and
  Wainwright, M. 2019.
\newblock Derivative-free methods for policy optimization: Guarantees for
  linear quadratic systems.
\newblock In \emph{The 22nd International Conference on Artificial Intelligence
  and Statistics}, 2916--2925. PMLR.

\bibitem[{Mania, Tu, and Recht(2019)}]{mania2019certainty}
Mania, H.; Tu, S.; and Recht, B. 2019.
\newblock Certainty equivalence is efficient for linear quadratic control.
\newblock \emph{Advances in Neural Information Processing Systems}, 32.

\bibitem[{Mnih et~al.(2016)Mnih, Badia, Mirza, Graves, Lillicrap, Harley,
  Silver, and Kavukcuoglu}]{mnih2016asynchronous}
Mnih, V.; Badia, A.~P.; Mirza, M.; Graves, A.; Lillicrap, T.; Harley, T.;
  Silver, D.; and Kavukcuoglu, K. 2016.
\newblock Asynchronous methods for deep reinforcement learning.
\newblock In \emph{International conference on machine learning}, 1928--1937.
  PMLR.

\bibitem[{Nagar(1959)}]{nagar1959bias}
Nagar, A.~L. 1959.
\newblock The bias and moment matrix of the general k-class estimators of the
  parameters in simultaneous equations.
\newblock \emph{Econometrica: Journal of the Econometric Society}, 575--595.

\bibitem[{Nesterov and Polyak(2006)}]{nesterov2006cubic}
Nesterov, Y.; and Polyak, B.~T. 2006.
\newblock Cubic regularization of Newton method and its global performance.
\newblock \emph{Mathematical Programming}, 108(1): 177--205.

\bibitem[{Olshevsky and Gharesifard(2022)}]{olshevsky2022small}
Olshevsky, A.; and Gharesifard, B. 2022.
\newblock A Small Gain Analysis of Single Timescale Actor Critic.
\newblock arXiv:2203.02591.

\bibitem[{Polyak(1963)}]{polyak1963gradient}
Polyak, B.~T. 1963.
\newblock Gradient methods for the minimisation of functionals.
\newblock \emph{USSR Computational Mathematics and Mathematical Physics}, 3(4):
  864--878.

\bibitem[{Qiu et~al.(2021)Qiu, Yang, Ye, and Wang}]{qiu2021finite}
Qiu, S.; Yang, Z.; Ye, J.; and Wang, Z. 2021.
\newblock On finite-time convergence of actor-critic algorithm.
\newblock \emph{IEEE Journal on Selected Areas in Information Theory}, 2(2):
  652--664.

\bibitem[{Rencher and Schaalje(2008)}]{rencher2008linear}
Rencher, A.~C.; and Schaalje, G.~B. 2008.
\newblock \emph{Linear models in statistics}.
\newblock John Wiley \& Sons.

\bibitem[{Ruder(2016)}]{ruder2016overview}
Ruder, S. 2016.
\newblock An overview of gradient descent optimization algorithms.
\newblock arXiv:1609.04747.

\bibitem[{Schacke(2004)}]{schacke2004kronecker}
Schacke, K. 2004.
\newblock On the kronecker product.
\newblock \emph{Master's thesis, University of Waterloo}.

\bibitem[{Shalev-Shwartz and Ben-David(2014)}]{shalev2014understanding}
Shalev-Shwartz, S.; and Ben-David, S. 2014.
\newblock \emph{Understanding machine learning: From theory to algorithms}.
\newblock Cambridge university press.

\bibitem[{Silver et~al.(2014)Silver, Lever, Heess, Degris, Wierstra, and
  Riedmiller}]{silver2014deterministic}
Silver, D.; Lever, G.; Heess, N.; Degris, T.; Wierstra, D.; and Riedmiller, M.
  2014.
\newblock Deterministic policy gradient algorithms.
\newblock In \emph{International conference on machine learning}, 387--395.
  PMLR.

\bibitem[{Silver et~al.(2017)Silver, Schrittwieser, Simonyan, Antonoglou,
  Huang, Guez, Hubert, Baker, Lai, Bolton et~al.}]{silver2017mastering}
Silver, D.; Schrittwieser, J.; Simonyan, K.; Antonoglou, I.; Huang, A.; Guez,
  A.; Hubert, T.; Baker, L.; Lai, M.; Bolton, A.; et~al. 2017.
\newblock Mastering the game of go without human knowledge.
\newblock \emph{nature}, 550(7676): 354--359.

\bibitem[{Sutton(1984)}]{sutton1984temporal}
Sutton, R.~S. 1984.
\newblock \emph{Temporal credit assignment in reinforcement learning}.
\newblock University of Massachusetts Amherst.

\bibitem[{Sutton(1988)}]{sutton1988learning}
Sutton, R.~S. 1988.
\newblock Learning to predict by the methods of temporal differences.
\newblock \emph{Machine learning}, 3(1): 9--44.

\bibitem[{Sutton and Barto(2018)}]{sutton2018reinforcement}
Sutton, R.~S.; and Barto, A.~G. 2018.
\newblock \emph{Reinforcement learning: An introduction}.
\newblock MIT press.

\bibitem[{Sutton et~al.(1999)Sutton, McAllester, Singh, and
  Mansour}]{sutton1999policy}
Sutton, R.~S.; McAllester, D.; Singh, S.; and Mansour, Y. 1999.
\newblock Policy gradient methods for reinforcement learning with function
  approximation.
\newblock \emph{Advances in neural information processing systems}, 12.

\bibitem[{Tu and Recht(2018)}]{tu2018least}
Tu, S.; and Recht, B. 2018.
\newblock Least-squares temporal difference learning for the linear quadratic
  regulator.
\newblock In \emph{International Conference on Machine Learning}, 5005--5014.
  PMLR.

\bibitem[{Tu and Recht(2019)}]{tu2019gap}
Tu, S.; and Recht, B. 2019.
\newblock The gap between model-based and model-free methods on the linear
  quadratic regulator: An asymptotic viewpoint.
\newblock In \emph{Conference on Learning Theory}, 3036--3083. PMLR.

\bibitem[{Wang et~al.(2019)Wang, Cai, Yang, and Wang}]{wang2019neural}
Wang, L.; Cai, Q.; Yang, Z.; and Wang, Z. 2019.
\newblock Neural Policy Gradient Methods: Global Optimality and Rates of
  Convergence.
\newblock In \emph{International Conference on Learning Representations}.

\bibitem[{Watkins and Dayan(1992)}]{watkins1992q}
Watkins, C.~J.; and Dayan, P. 1992.
\newblock Q-learning.
\newblock \emph{Machine learning}, 8(3): 279--292.

\bibitem[{Williams(1992)}]{williams1992simple}
Williams, R.~J. 1992.
\newblock Simple statistical gradient-following algorithms for connectionist
  reinforcement learning.
\newblock \emph{Machine learning}, 8(3): 229--256.

\bibitem[{Witten(1977)}]{witten1977adaptive}
Witten, I.~H. 1977.
\newblock An adaptive optimal controller for discrete-time Markov environments.
\newblock \emph{Information and control}, 34(4): 286--295.

\bibitem[{Wu et~al.(2020)Wu, Zhang, Xu, and Gu}]{wu2020finite}
Wu, Y.~F.; Zhang, W.; Xu, P.; and Gu, Q. 2020.
\newblock A finite-time analysis of two time-scale actor-critic methods.
\newblock \emph{Advances in Neural Information Processing Systems}, 33:
  17617--17628.

\bibitem[{Xu, Wang, and Liang(2020{\natexlab{a}})}]{xu2020improving}
Xu, T.; Wang, Z.; and Liang, Y. 2020{\natexlab{a}}.
\newblock Improving sample complexity bounds for (natural) actor-critic
  algorithms.
\newblock \emph{Advances in Neural Information Processing Systems}, 33:
  4358--4369.

\bibitem[{Xu, Wang, and Liang(2020{\natexlab{b}})}]{xu2020non}
Xu, T.; Wang, Z.; and Liang, Y. 2020{\natexlab{b}}.
\newblock Non-asymptotic convergence analysis of two time-scale (natural)
  actor-critic algorithms.
\newblock arXiv:2005.03557.

\bibitem[{Yang et~al.(2019)Yang, Chen, Hong, and Wang}]{yang2019provably}
Yang, Z.; Chen, Y.; Hong, M.; and Wang, Z. 2019.
\newblock Provably global convergence of actor-critic: A case for linear
  quadratic regulator with ergodic cost.
\newblock \emph{Advances in neural information processing systems}, 32.

\bibitem[{Zhang et~al.(2020)Zhang, Liu, Yao, and Whiteson}]{zhang2020provably}
Zhang, S.; Liu, B.; Yao, H.; and Whiteson, S. 2020.
\newblock Provably convergent two-timescale off-policy actor-critic with
  function approximation.
\newblock In \emph{International Conference on Machine Learning}, 11204--11213.
  PMLR.

\bibitem[{Zhou and Lu(2022)}]{zhou2022single}
Zhou, M.; and Lu, J. 2022.
\newblock Single Time-scale Actor-critic Method to Solve the Linear Quadratic
  Regulator with Convergence Guarantees.
\newblock arXiv:2202.00048.

\end{thebibliography}
\end{document}